\newtheorem{theorem}{Theorem}
\newtheorem{lemma}{Lemma}
\newcommand{\naturals}{{\mathbb{N}}}
\newcommand{\Ld}{\gamma}
\title{Optimal query complexity for private sequential learning against eavesdropping}
\author{Jiaming Xu, Kuang Xu, and Dana Yang\thanks{
J.\ Xu and D.\ Yang are with The Fuqua School of Business, Duke University, Durham NC, USA, \texttt{\{jx77,xiaoqian.yang\}@duke.edu}.
K.\ Xu is with the Stanford Graduate School of Business, Stanford University, Stanford CA, USA, \texttt{kuangxu@stanford.edu}.
This research is supported by the NSF Grants IIS-1838124, CCF-1850743, and CCF-1856424.
}}
\date{\today}
\begin{document}

\maketitle

\begin{abstract}
We study the query complexity of a learner-private sequential learning problem, motivated by the privacy and security concerns due to eavesdropping that arise in practical applications such as pricing and Federated Learning. A learner 
tries to estimate an unknown scalar value, by sequentially
querying an external database and receiving binary responses; 
meanwhile, a third-party adversary observes 
the learner's queries but not the responses. The learner's goal
is to design a querying strategy with the minimum number of queries (optimal query complexity)
so that she can accurately estimate the true value, while the eavesdropping adversary even
with the complete knowledge of her querying strategy cannot.  

We develop new querying strategies  and analytical techniques
and use them to prove tight upper and lower bounds on the  optimal query complexity. %The bounds match up to an additive constant \nbr{JX. In Bayesian setting, 
%we still have an additive gap about $4L$} across nearly the entire parameter range,
The bounds almost match across the entire parameter range,
 substantially improving upon existing results. We thus obtain a complete picture of the optimal query complexity as a function of the 
estimation accuracy and the desired levels of privacy. 
We also extend the results to sequential learning models in higher dimensions,
and where the binary responses are noisy. Our analysis leverages a  crucial insight into the nature of private learning problem, which suggests that the query trajectory of an optimal learner can be divided into distinct phases that focus on pure learning versus learning and obfuscation, respectively.  
%\nbr{JX. This insight seems to only apply in Bayesian setting. For Deterministic setting, the optimal strategy works differently. }
\end{abstract}

\section{Introduction}

Rapid developments in machine learning and data science have compelled organizations and individuals to increasingly rely on data to solve inference and decision problems. It quickly became clear, however, that collecting and disseminating data in bulk can expose data owners to serious privacy breaches \cite{Dwork08}. To address the privacy concerns of data owners, researchers and practitioners have been advocating a new learning framework, known as \emph{learning with external workers} \cite{konevcny2015federated}. %\nbr{JX. I am not sure if this is the right reference. I double checked the paper and couldn't find the paper proposes this new framework.
%If we are speaking about Federated Learning, I think the earliest reference who proposes it is \cite{konevcny2015federated}. Please double check this.} 
Under this framework, instead of allowing a learner to possess the entire data set and conduct analysis in an offline manner, data sets are kept secure by their owners, and the learner must interact with data owners by submitting queries and receiving responses.

While substantial progress has been achieved in protecting data owners' privacy in such systems \cite{Dwork08, geyer2017differentially, song2013stochastic, agarwal2018cpsgd}, the   \emph{learner's} privacy has largely been overlooked. Because a learner has to communicate frequently with data owners in order to perform analysis, their queries can be subject to eavesdropping by a third-party adversary. That adversary, in turn, could use the observed queries to reconstruct the learned model, thus allowing them to free-ride at the learner's expense, or worse, leverage such information in future sabotages. 

In this paper, we focus on understanding how to protect the learner's privacy against eavesdropping attacks, and precisely quantifying the fundamental privacy-complexity trade-offs in such an interactive learning system.  We base our analysis on the Private Sequential Learning model proposed by~\cite{tsitsiklis2018private}. Suppose that a learner is trying to estimate an unknown target value $X^* \in [0,1]$, by submitting 
$n$ queries sequentially, $(q_1, \ldots, q_n) \in [0,1]^n$, for some $n \in \naturals$.
For each query $q_i$, the learner receives a binary response $r_i=\mathds{1}\{X^*\geq q_i\}$, indicating the position of $X^*$ relative to the query, where  $\mathds{1}\{\cdot\}$ denotes the indicator function. Meanwhile, there is an adversary who observes all of the learner's queries
$(q_1, \ldots, q_n)$, but not the responses $(r_1, \ldots, r_n)$. The adversary then tries to estimate $X^*$. The learner's goal is to design a querying strategy with a minimal $n$ (optimal query complexity) so that she can estimate $X^*$ up to an additive error of $\epsilon/2$ with probability $1$ (accuracy), while   \emph{no adversary} can estimate $X^*$ up to an additive error of $\delta/2$ with probability larger than $1/L$ 
for some integer $L\ge 2$ (privacy), even if they are equipped with the complete knowledge of the learner's querying strategy. The parameter $L$ thus captures the learner's privacy level. In the special case of $L=1$ (corresponding to having no privacy constraint) this learning model
reduces to the classical problem of sequential search with binary feedback, with numerous applications such as data transmission with feedback~\cite{horstein1963sequential}, finding the roots of a continuous function~\cite{waeber2011bayesian}, and even the game of ``twenty questions''~\cite{jedynak2012twenty}. 
\begin{figure}[h]
\centering
\includegraphics[scale=0.5]{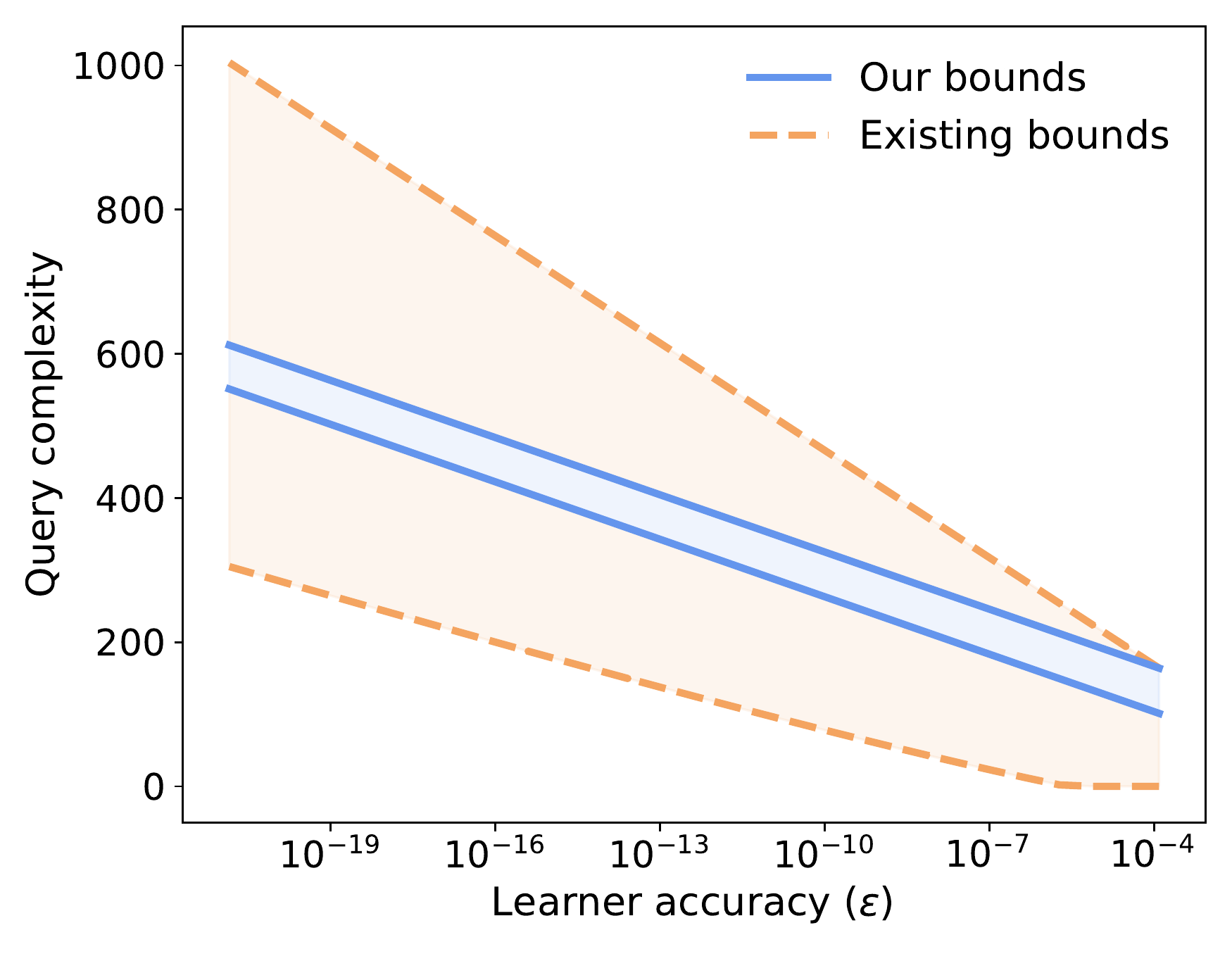}
\caption{Our results  in Theorem \ref{thm:bayes} (solid) versus the best known bounds (dashed, upper bound: \cite{tsitsiklis2018private}, lower bound: \cite{xu2018query}) under the noiseless Bayesian setting, with $L=15$ and $\delta= 4\epsilon^{0.5}$. The figure is cut off to the right at the point where $\delta$ hits the upper limit, $1/L$, beyond which it is easy to show that no learning strategy can be private.}
\label{fig:old_new_bayes}
\end{figure}

%\nbr{JX. Is the noiseless Bayesian setting defined? Another minor comment:
% I know Fig.~\ref{fig:old_new_bayes} is
%eye-catching and clearly shows the improvement of our results.
%But we haven't explicitly mentioned our results and the existing bounds yet, so
%I am feeling a bit uneasy about having Fig.~\ref{fig:old_new_bayes} here.
%Also, it seems a bit unnatural to leave Fig.~\ref{fig:old_new_det} behind.
%}

{\bf Our contributions.} The primary contributions of our paper are two fold: 

%\kx{Please verify if this summary is reasonable. }

\begin{enumerate}
\item We settle the optimal query complexity of the Private Sequential Learning problem in both the Bayesian setting (where $X^*$ is random) and  deterministic setting (where $X^*$ is fixed but arbitrary). We do so by establishing query complexity upper and lower bounds that almost match in the entire parameter range,
%match up to an additive constant in nearly the entire parameter range 
%\dy{This is not entirely true under the Bayesian setting, when $L$ is not a constant. I slightly prefer ``... bounds that almost match in the entire parameter range"},
%\kx{sounds good, please go ahead !}
 thus obtaining a complete picture of the optimal query complexity as a function of the estimation accuracy and the learner's privacy level. Our results substantially improve upon the best known upper and lower bounds \cite{tsitsiklis2018private, xu2018query}, and the improvements are most drastic over an important range of parameters, where both the adversary and the learner aim to locate $X^*$ within small errors
 % the adversary and learner's accuracy requirements are comparable
%\dy{since $\epsilon$ and $\delta$ are of different order, I suggest we reword this to ``where the accuracy requirement of the adversary grows with that of the learner"} \kx{good point, but I think the results are most drastic when both are small, and "grow" might give the opposite impression. How about "...accuracy requirements are both small''? I'm also okay with your proposal.}
%\nbr{JX. I guess it is a bit unnatural to say ``accuracy requirements are small''? maybe "where both the adversary and learner aim to locate $X^*$ within small errors}
; see Figure \ref{fig:old_new_bayes} for an illustrative example. 

\item We propose and analyze an important variant of the private learning model with noisy responses, a feature that is especially salient in real-world operations and machine learning applications where the functional evaluations are often stochastic. In this setting, we prove upper and lower bounds on the optimal query complexity which match up to multiplicative constants that only depend on the level of noise in the responses. This mirrors the best known characterizations available in the  non-private version of the problem, which also has a dependency on the noise level \cite{ waeber2013bisection}. 
\end{enumerate}

\emph{Methodological contribution}. Our results are rooted in new insights into  the nature of learner-private sequential learning problems, which could have broader implications. For instance, in the Bayesian setting, one driving insight is that the portion of the learning process that demands the most privacy protection is \emph{after} the learner has already obtained a reasonably accurate estimate of $X^*$. We further demonstrate that, as an implication of this observation, the querying trajectory of an optimal learner can be roughly divided into two phases: 
\begin{enumerate}
\item A \emph{pure-learning} phase, where the primary objective is to narrow the search down to a smaller interval that contains $X^*$ and privacy is not a top priority. 
\item A \emph{private-refinement} phase, where the learner  refines her estimate of $X^*$ within the said interval, while allocating significantly more querying budget towards obfuscation. 
\end{enumerate}
We develop new learning strategies and analytical techniques to make this intuition precise. The algorithmic implications are significant. On the one hand, it suggests more efficient learner strategies that allocate more obfuscation budget towards the latter stages of learning. On the other hand, it can be used to design more powerful adversary strategies that focus on latter stages of a learner's query trajectory, and obtain a more accurate estimator. In analogous manner, our analysis of the deterministic setting also leverages a two-phase approach, although the obfuscation budget is now skewed towards the earlier stages of learning.

\subsection{Motivating Examples}\label{sec:motivation}
%\nbr{In view of ICML reviews, maybe we want to further clarify data owner's privacy and 
%learner's privacy. I like the pricing example that you used in the response. I think the pricing example
%is much simpler and easier to understand than FL. Maybe we can expand that example as one motivation
%before jumping to FL directly.}
%\nb{The pricing example is added below.}

We examine in this sub-section several motivating applications of the Private Sequential Learning model. To be clear, the private learning model we study is highly stylized, and as such applying our algorithms in an application would require caution and necessary modifications. However, it appears likely that the rigorous study we carry out and the structural properties it unveils would,  at the very least, yield valuable insights and help guild policy design in real-world systems.

\paragraph*{Learning the optimal price:} As discussed in \cite{tsitsiklis2018private}, dynamics similar to those in the Private Sequential Learning model arise in the domain of dynamic pricing. Suppose a company is conducting market experiments to determine the release price of a product. The goal is to learn a global parameter about the entire consumer base, e.g., $X^*$ equals the highest price to charge so that at least 50\% of the consumers would purchase the product. At each epoch of the experiment, the company samples a subset of the consumers and experiment on a test price (query). Under the sequential learning model, the response $r_i$ corresponds to the indicator function of whether at least 50\% of the sampled consumers would purchase the product at price $q_i$. Note that, due to individual differences and the sampling process, the response is a noisy version of its population variant, which can be captured by the noisy variant of the model we study in this paper. In this example, a competitor (adversary) can easily access the sequence of test prices by participating in the experiments, but does not observe
the responses. The optimal query complexity refers to the minimum number of epochs the company takes to estimate $X^*$ accurately, while making sure the eavesdropping adversary cannot infer the final release price. Notice the distinction between our privacy incentive and the incentive to protect the data owners' privacy. The latter aims to ensure that the query sequence does not reveal the price each individual participant is willing to pay, which varies from person to person and can be far from $X^*$.

\paragraph*{Federated Learning:}  
%\kx{(to do: add discussion on: 1) model theft is real / prevalent  2) query eavesdropping is a valid vector of attack.)}
As mentioned in the Introduction, our study of Private Sequential Learning is motivated in part by wanting to protect the privacy of the learner, rather than data owners, in a learning-with-external-workers system. 
Among these systems,  Federated Learning is an emerging machine learning model training paradigm that has been gaining traction over the past few years~\cite{konevcny2015federated,konevcny2016federated,ChenSuXu17,su2019securing,kairouz2019advances,bagdasaryan2020backdoor,yin2018byzantine,bhagoji2019analyzing,alistarh2018byzantine,yang2020federated,yin2019defending,xie2018generalized,li2019rsa,ding2019distributed,wu2019federated}, and has been deployed in products by companies such as Apple \cite{hao_2019} and Google \cite{FL_google2017}. 

%\kx{I'd generally try to minimize acronyms, so will use Federated Learning in full when possible.}

Before explaining how the Private Sequential Learning model applies in this context,  we briefly review the basic mechanisms of Federated Learning. 
A typical Federated Learning training process is sequential in nature and works as follows (see e.g.~\cite[Algorithm~1]{mcmahan2016communication}). 
A central learner trains a global model by aggregating local model updates across a large number of users
on mobile devices. At each iteration $t$, the learner broadcasts the current model parameter $w_t$ to all (or a subset) of the users.
Using their local data, each user $i$ then trains an individual model update $w_t^i$ starting from $w_t$, and sends it back to the central learner.
The learner aggregates all the model updates $w_t^i$ across all user $i$ to produce $w_{t+1}$, the model parameter for the next iteration. 
Importantly, like most systems in its category, Federated Learning is designed with the  goal of protecting the privacy of data owners (the users) 
\cite{geyer2017differentially,mcmahan2017learning,agarwal2018cpsgd}, while offering no explicit privacy guarantees on the central learner's learned model parameters under eavesdropping attacks.
\footnote{Although in Federated Learning the final trained model is usually released for all users to access,
the learner often chooses to keep the model parameters in the central server and only allows
users to perform evaluation tasks.} Our focus, in contrast, would be on the privacy concern of the latter. 

Specifically, when training with thousands of users, as 
the learner lacks enough administrative power over those external workers,  the Federated Learning system is highly vulnerable to eavesdropping attacks~\cite{kairouz2019advances}.
An \emph{honest-but-curious} adversary can participate in the training stage by pretending to be an user, and
eavesdrop on the sequence of broadcasted model parameters. Simply by taking the last set of model parameters, the adversary can
approximate the learner's final model fairly well. Sophisticated models can be worth millions. 
%By stealing the models, 
The eavesdropper can use the stolen model to profit 
%in the same way that the companies behind the algorithms do, 
or even leverage them for illicit purposes~\cite{juuti2019prada}. It not only saves
 the eavesdropper from investing the tremendous amounts of funding into training the model, 
 but it could also devalue the learner's model.
 Therefore, it is of paramount importance to protect the learner's privacy from eavesdropping attacks. 

There are several potential techniques to conceal the model parameters from the users in Federated Learning,
such as restricting each user to run the local computation inside a Trusted Executation Envrionments (TEE)~\cite{subramanyan2017formal} or encrypting the model parameters
under a homomorphic encryption scheme before broadcasting it to the users~\cite{mohassel2017secureml}. Unfortunately, as pointed out by the recent survey~\cite[Section 4.3.3]{kairouz2019advances},
TEEs may not be generally available across all workers especially when these workers represent end-devices such as smartphones. 
Moreover, TEEs and homomorphic encryption are often costly to implement and incurs large overhead. There is an emerging line of research
on preventing model theft in the evaluation stage, where an adversary attempts to extract the deployed model by repeatedly querying the model and obtaining estimation on the input feature vectors~\cite{tramer2016stealing,papernot2017practical,shi2017steal,wang2018stealing,juuti2019prada,orekondy2019prediction,orekondy2019knockoff,kariyappa2020defending}.
However, this line of work does not address the unique challenge of concealing the model parameters from the users during the training stage in Federated Learning. 

%TEEs may not be generally available across all workers
%especially when these workers represent end-devices such as smartphones. Moreover, TEEs and homomorphic encryption are often costly to implement and incurs large
%overhead. Last but not least, recent studies show that even if the model parameters themselves are successfully hidden, they can still be reconstructed by
%an adversary who only has access to  query interfaces based on these parameters~\cite{tramer2016stealing}. 
%This  consideration prompts us to investigate whether we can achieve information-theoretic security against the eavesdropping attack.

This consideration prompts us to investigate whether we can offer provable guarantees on the learner's privacy against the eavesdropping attack in Federated learning. 
%immediately leads to 
In particular, we aim to address the following two natural but fundamental questions:
\begin{enumerate}
\item 
%(1) 
Can the learner arrive at an accurate model, while ensuring that the eavesdropping adversary cannot learn the same model with a high level of accuracy? 
\item 
%(2) 
What is the minimal number of iterations needed in the training process, for accurate and private learning? 
\end{enumerate}

%Note that we focus on private learning from the learner's perspective, that is privatizing the final trained model parameters.  This is different from protecting data owners' privacy, which privatizes the individual users' local data.

The sequential learning problem we study can be viewed as a special case of the problem faced by the central learner in a Federated Learning framework, where the true model parameter $X^*$ is in one dimension. In particular, at iteration $t$ the learner broadcasts the current model parameter $w_t$ (viewed as the query). 
Then instead of the local model update $w_t^i$, each user $i$ sends back to the learner only the directional information $\text{sign}(w_t^i-w_t)$ of the update. 
The majority vote of $\text{sign}(w_t^i-w_t)$ is viewed as the response, which is often noisy due to noise in the users' local data.
%Suppose there are enough non-adversarial users having sufficient amount of data in total,
%so that the majority vote of $\text{sign}(w_t^i -w_t)$ across all user $i$  is effectively noiseless and 
%equal to $\text{sign}(X^*-w_t)$ (viewed as the response). 
%\nbr{Not sure whether we still need to emphasize
%the noiseless responses, as we also studied the noisy responses}
%\nb{The sentence is now rewritten to emphasize that the responses can be noisy.}
% , the learner
%can infer the position of the optimal (the optimal model or the true model?) model parameter $X^*$ relative to $w_t$, a.k.a. the response variables in the stylized model.
All users, including the adversarial ones, observe the broadcasted queries.
Moreover, it is reasonable to assume that the adversary does not observe the responses.
That is because in order to observe the responses, 
the adversary would have to access the updates generated by all users in the system, a formidable task that is not realistic for an adversary that only controls up to a small subset of the users.

Note that in Federated Learning, communication bandwidth is a scarce resource, as the data transmission between the external workers and the learner typically suffers from high latency and low throughput. Thus, determining the optimal query complexity (i.e.\ the minimum communication rounds) is of fundamental importance in both theory and practice.

\paragraph*{Other applications:} There are many other applications in which a learner would naturally suffer from privacy breaches if the learning process can be eavesdropped by a third party, for example in crowdsourced learning~\cite{howe2006rise}~\cite{vukovic2009crowdsourcing}. 
Similar scenarios also arise in conducting surveys or medical inspections, where we hope to adaptively collect information, while preventing the adaptive questionnaires from giving away respondents' private information. While the Private Sequential Learning model may not yet directly apply, the fundamental tradeoff between privacy and complexity of learning unveiled by our investigation is likely to speak to similar strategic considerations arising  in these applications. 

\subsection{Comparison with prior work}

%\kx{A fair bit of re-arrangement in this section too...}

The Private Sequential Learning model with noiseless responses was proposed by~\cite{tsitsiklis2018private} and further studied by~\cite{xu2018query}. While simple, this model
already captures a core tension between learning and privacy. To see why a naive learning strategy can put the learner's privacy at risk,   consider the vanilla bisection strategy which recursively 
queries the mid-point of the interval that the learner knows to contain the target $X^*$.
It can be shown that the bisection strategy achieves the minimum sample complexity of $\log(1/\epsilon)$ queries\footnote{Here and subsequently $\log$ refers to logarithm with base 2.}, but it almost entirely compromises the learner's privacy. If the adversary sets their estimator to be equal to the learner's last query, 
then $X^*$ is within a distance of at most $\epsilon.$ 
On the other extreme, consider the non-adaptive grid search strategy, where
the learner partitions
the interval $[0,1]$ into subintervals of length $\epsilon$
and queries the endpoints of all the subintervals. 
While this non-adaptive strategy offers the best protection for the learner's privacy (the adversary can learn nothing about $X^*$ by observing the locations of the queries),  it suffers a very high query complexity of $1/\epsilon$. 
This observation suggests that an optimal learning strategy would have to strike a delicate balance between adaptive querying and willful obfuscation in order to  achieve the optimal trade-off between query complexity and privacy. 

Prior works~\cite{tsitsiklis2018private} and~\cite{xu2018query} developed upper and lower bounds for the optimal query complexity, and quantify the impact of the privacy level $L$ under two formulations of the Private Sequential Learning problem:  a \emph{deterministic setting} where $X^* \in [0,1]$ is fixed but arbitrary,  and a \emph{Bayesian setting} where $X^*$ is uniformly distributed over $[0,1]$. These results, however, mostly focus on the subset of parameters where the adversary's accuracy requirement, $\delta$, is substantially larger than that of the learner's, $\epsilon$. Under the deterministic setting where $\delta=1/L$,
it is shown that the upper and lower bounds almost coincide, yielding an optimal query complexity of 
about $\log(1/\epsilon)+2L$~\cite{tsitsiklis2018private}.  In contrast, under the Bayesian setting, it is shown that if we keep  both $\delta$ and  $L$ fixed, then the optimal query complexity scales as   $L\log(1/\epsilon)$ as $\epsilon$ tends to $0$ \cite{xu2018query}.

Unfortunately, the existing results are not tight, and moreover can perform very poorly within  an important parameter regime, where the adversary accuracy requirement $\delta$ can be small or even comparable to that of the learner. For example, under the Bayesian setting, when $\epsilon$ and $\delta$ go to 0 proportionally while $L$ stays fixed, the upper bound proved in~\cite{tsitsiklis2018private} is of order $L\log(1/\epsilon)$, while the lower bound given in~\cite{xu2018query} is of order $L$, leaving a large gap. This regime of parameters, however, is of both practical and theoretical interest. On the practical front, it is often the case that an adversary is interested in obtaining the learner's model for the same use as that of the learner. As a result, an estimator with a disproportionately large error margin might be of no practical use to the adversary. On the theoretical front, allowing both $\epsilon$ and $\delta$ to vary simultaneously significantly increases the difficulty of analysis. Indeed, as we explain in detail in Section \ref{sec:newIdeas}, there is strong evidence that the learner strategies and proof techniques developed in \cite{tsitsiklis2018private} and \cite{xu2018query} are not sufficiently refined to achieve the optimal query complexity bounds in this regime. By developing new insights and algorithmic ideas, we will establish a tight characterization of the optimal query complexity as a function of $\epsilon, \delta,$ and $L$, improving the existing upper \emph{and} lower bounds over the entire range of these parameters.

%\dy{Upon reading the paragraph above, I worry that the reader might get the impression that we only extend previous results to include another range of parameters. Perhaps it is worth it also emphasize here that our results {\it improve upon} both the upper and the lower bounds across the entire parameter regime.}

%\kx{Agreed. I've made some adjustments in red to this effect. Feel free to change as you see fit.}

%Such small $\delta$ case is important, as  in practice the learners can usually tolerate an inaccurate adversary, and would only need to  guarantee that no adversary can be $\delta$-accurate for some small $\delta$. This observation leads to the first open question:

%\emph{What is the optimal query complexity as a function of $\epsilon, \delta, L$ over the entire  range of allowable parameters? }

We further depart from the existing noiseless private learning model by incorporating noisy query responses in the Bayesian setting, a crucial feature for many practical applications including the pricing example and Federated Learning considered in Section~\ref{sec:motivation}.  In this case, we establish query complexity bounds that match up to a constant multiplicative factor that only depends on the noise level.  Note that even without privacy considerations, the 
binary search problem with noisy responses admits highly non-trivial dynamics, and has been studied extensively~\cite{feige1994computing, burnashev1974interval, waeber2013bisection}; see~\cite{pelc2002searching} for a more comprehensive survey. Sophisticated search algorithms, such as the probabilistic bisection algorithm~\cite{waeber2013bisection} and the Burnashev-Zigangirov algorithm~\cite{burnashev1974interval}, have been developed based on the idea of recursively refining a posterior belief distribution on $X^*$. Unfortunately, just like the bisection strategy, none of these algorithms provide privacy protections for the learner since their query trajectories eventually concentrate around the target $X^*$. Existing private learning strategies designed for the noiseless case also do not admit trivial extensions to the noisy setting. This is not a coincidence: existing strategies heavily exploit the learner's ability to determine, with absolute certainty, a certain small sub-interval that contains the target $X^*$ using only  a constant number of queries \cite{tsitsiklis2018private, xu2018query}, which is impossible when responses are noisy. A further challenge arises in the noisy response model where the analysis must carefully track the posterior distributions of $X^*$ for both the learner and the adversary simultaneously. On  the one hand, we want the posterior distribution of $X^*$ given the responses to concentrate fast, so that the learner can accurately estimate $X^*$ despite the noises.
On the other hand, we need to make sure the posterior distribution given the queries does not concentrate too rapidly, ensuring that the adversary cannot accurately learn.  This greatly complicates the design and the analysis of the optimal querying strategy, as a closed-form expression for the posterior distributions is out of reach in this case. 

%\emph{2.~How to optimally protect privacy while coping with noise in the responses?}

%It can be shown that under these search algorithms, the posterior distribution of $X^*$ given the responses concentrates geometrically fast, making their query complexity optimal up to multiplicative constants that only depend on $p$. 
%However, these algorithms are almost non-private, as the query converges to $X^*$. 

\paragraph{Relation to data-owner privacy models}  As mentioned in the Introduction, the Private Sequential Learning model differs significantly from the existing literature on private iterative learning,
such as private stochastic gradient descent~\cite{song2013stochastic,abadi2016deep,agarwal2018cpsgd}, private online learning~\cite{jain2012differentially}, and private Federated Learning~\cite{geyer2017differentially,mcmahan2017learning,nasr2019comprehensive,melis2019exploiting}. 
The focus therein is to protect \emph{data owners' privacy} by preventing 
the adversary inferring about a data owner from the outputs of learning algorithms, often using the notion of differential privacy~\cite{Dwork08}. 
In that setting, a common privacy-preserving mechanism is to inject calibrated  noise  
at each iteration of the learning algorithms. In contrast, our work aims to protect \emph{the learner's privacy} 
by preventing the adversary inferring the learned model from the learner's queries. 
As a result, our problem setup, privacy-preserving mechanisms, and main results are significantly different from those in this literature. Our focus on a decision maker's obfuscation task is related, at a high level, to recent studies of information-theoretically sound obfuscation in various sequential decision-making problems \cite{fanti2015spy, luo2016infection, tsitsiklis2018delay, erturk2019dynamically, tang2020privacy}. However, most of these models focus on protecting data and information already in the position of the decision-maker. As such, they do not address the unique privacy challenges arising in learning, where the learner has to protect a piece of information that they themselves are just in the process of discovering.

\subsection{Summary of main results}
We offer a preview of our main results in this sub-section. For noiseless responses, we prove almost matching upper and lower bounds on the optimal query complexity in both Bayesian and deterministic settings, including an extension to multi-dimensional private learning. 
For models with noisy responses, we focus on the Bayesian setting and prove upper and lower bounds that match up to constant multiplicative  factor that only depend on the noise level, which coincide with the state-of-the-art noise-dependent characterizations in non-private version of the noisy learning problem \cite{waeber2013bisection}. 

%\dy{perhaps we include a table that compares the rates of the upper/lower bounds with prior work.}

%\kx{sounds like a good idea.}

%We also analyze the more general and realistic model where the responses are corrupted by noise, under which we 
\subsubsection{Noiseless Responses}
For noiseless responses, we establish the following characterization of the optimal query complexity $N(\epsilon,\delta,L)$: 

\begin{itemize}
\item Bayesian setting:
$$
N(\epsilon,\delta,L) \approx  \log \frac{1}{L\delta} + L \log \frac{\delta}{\epsilon}.
$$
\item Deterministic setting:
$$
N(\epsilon,\delta,L) \approx 2L + \log \frac{\max\left\{ 2^{-L}, \delta \right\}}{\epsilon};
$$
\end{itemize}
Our upper and lower bounds are tight up to an additive factor of $4L$ and 8 for the Bayesian and deterministic settings, respectively. Notably, under the Bayesian setting, when $\delta$ is a constant multiple of $\epsilon$,
the optimal query complexity scales as $\log(1/\epsilon)$ plus a constant multiple of $L$,
implying that in this regime the correct price for privacy is only additive in $L$,
as opposed to multiplicative, as suggested by the upper bound $L\log(1/\epsilon)$ in~\cite{tsitsiklis2018private}. A comparison between our results and the existing bounds is given in Table~\ref{tab:compare}, with examples illustrated in Figures \ref{fig:old_new_bayes} and \ref{fig:old_new_det},  under the Bayesian and deterministic settings, respectively.

%To prove our upper bounds, we construct new querying strategies that ensure estimation accuracy for the learner while maintaining a desired level of privacy against the adversary. To show our lower bounds, we prove that no strategy that submits fewer queries can achieve the same level of accuracy and privacy. %See the supplementary material for details.

\begin{figure}
\centering
\includegraphics[scale=0.5]{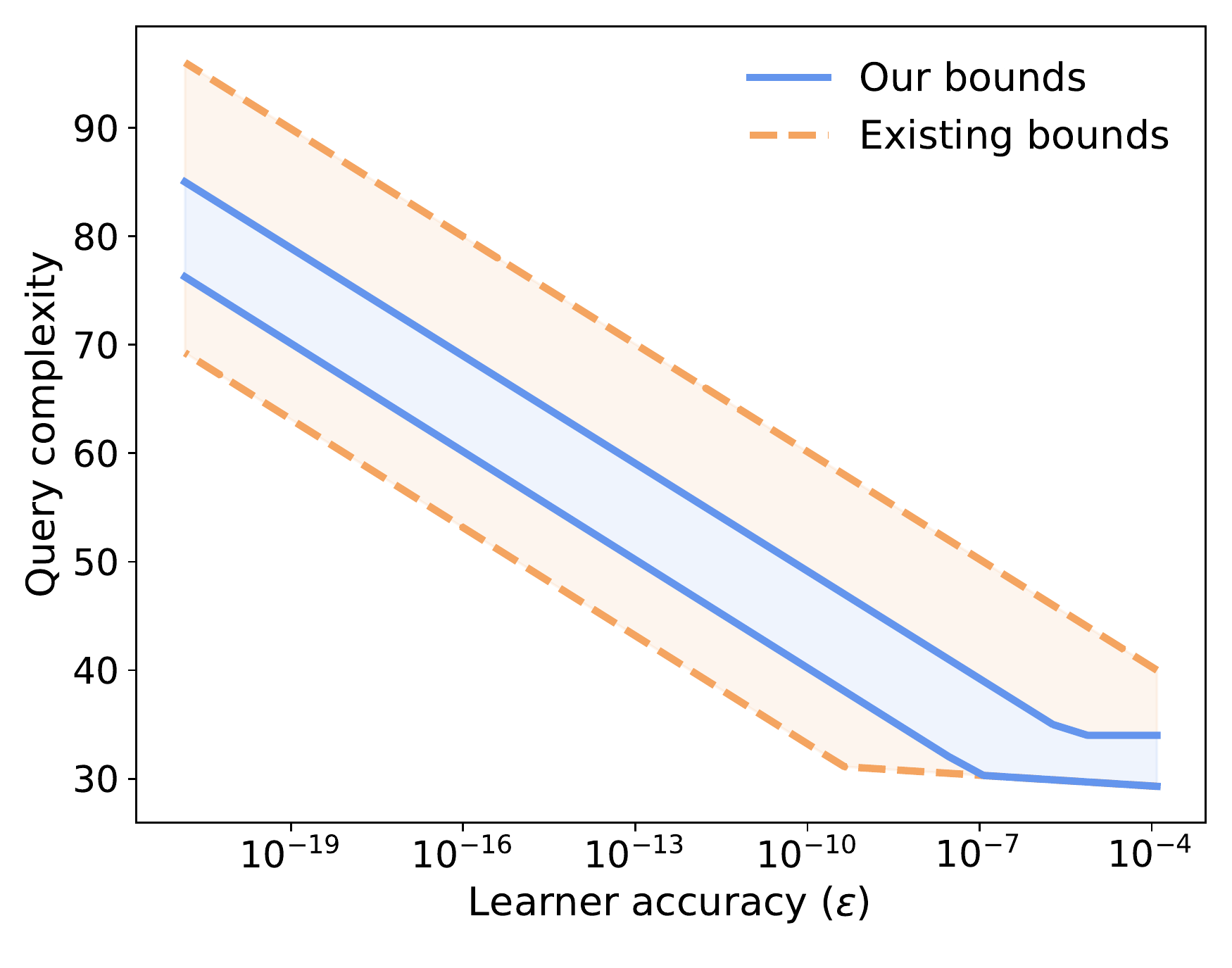}
\caption{Our results in Theorem \ref{thm:freq} (solid) versus the best known bounds \cite{tsitsiklis2018private} (dashed) for the noiseless deterministic setting, $L=15$ and $\delta= 4\epsilon^{0.9}$. }
\label{fig:old_new_det}
\end{figure}

We further extend these results to a model of high dimensional learning. Suppose the learner aims to estimate a target value $X^*\in [0,1]^d$ by submitting queries along different dimensions:  each query provides a binary response on the position of  $X_i^*$ relative to the query location, for some dimension $i$. We show that the optimal query complexity $N_d(\epsilon,\delta,L)$ in $d$ dimensions behaves as follows:
\begin{itemize}
\item Bayesian setting:
$$
N_d(\epsilon,\delta,L) \approx  d\left(\log \frac{1}{L^{1/d}\delta} + L^{1/d} \log \frac{\delta}{\epsilon} \right).
$$
\item Deterministic setting:
$$
N_d(\epsilon,\delta,L) \approx d\left(2L^{1/d} + \log \frac{\max\left\{ 2^{-L^{1/d}}, \delta \right\}}{\epsilon}\right). 
$$
\end{itemize}
We see that the optimal query complexity suffers from a multiplicative factor of $d$, analogous to the 
optimal query complexity $d\log(1/\epsilon)$ when there is no privacy constraint. However, the query
complexity per dimension depends on $L^{1/d}$, which decreases in $d$. 

\begin{table}[h]
\begin{tabular}{ |p{3.6cm}||p{5cm}|p{6.5cm}| }
 \hline
& Bayesian setting &Deterministic setting
\\
\hline
& &\\[-1em]
Existing upper bound & $L\lceil\log\frac{1}{L\epsilon}\rceil+L-1$   \cite{tsitsiklis2018private}  &  $\left\lceil\log\frac{1}{L\epsilon}\right\rceil+2L$  \cite{tsitsiklis2018private}  \\
& &\\[-1em]
 \hline
& &\\[-1em]
Existing lower bound &   $L\left(\log\frac{\delta}{\epsilon}-3\log\log\frac{\delta}{\epsilon}-1\right)$  \cite{xu2018query} &  $\max\left\{\left\lceil\log\frac{1}{\epsilon}\right\rceil, \left\lceil\log\frac{\delta}{\epsilon}\right\rceil+2L-4\right\}$  \cite{tsitsiklis2018private}\\
& &\\[-1em]
\hline
& &\\[-1em]
Our upper bound &$\left\lfloor \log \frac{1}{L\delta}\right\rfloor+ L\left(\left\lceil\log \frac{\delta}{\epsilon}\right\rceil+2\right)-1$ & $\max\left\{\left\lceil\log\frac{1}{\epsilon}\right\rceil+L, \left\lceil\log\frac{\delta}{\epsilon}\right\rceil+2L\right\}$\\
& &\\[-1em]
\hline
& &\\[-1em]
Our lower bound    &$\left\lfloor \log \frac{1}{L\delta}\right\rfloor+ L\left(\log\frac{\delta}{\epsilon} -2\right)-1$ & $\max\left\{\left\lceil\log\frac{1}{\epsilon}\right\rceil+L-8, \left\lceil\log\frac{\delta}{\epsilon}\right\rceil+2L-4\right\}$\\
 \hline
\end{tabular}
\caption{A comparison of our upper and lower bounds on the optimal sample complexity with existing bounds, under the Bayesian and deterministic settings with noiseless query responses.}
\label{tab:compare}
\end{table}

\subsubsection{Noisy Responses}
We further consider the noisy response model, where each response can be incorrect with probability $1-p$ for some noise level $p \in (0,1)$. In this case, we need to redefine learner's accuracy, since it is no longer possible for the learner to estimate $X^*$ well with probability 1. We consider two natural definitions of accuracy: say the learner is accurate (a) on average if $\mathbb{E}|\widehat{X}-X^*|\leq \epsilon/2$; and (b) with high probability if $\mathbb{P}\left\{|\widehat{X}-X^*|\leq \epsilon/2\right\}\geq 1-1/M$. Below is our main result for the noisy response model.

The optimal query complexities under accuracy definitions (a), (b) satisfy
\begin{equation}
\label{eq:noisy.na}
N_{\mathsf{avg}}(\epsilon, \delta, L) \asymp L\log \frac{\delta}{\epsilon} + \log \frac{1}{\epsilon};
\end{equation}
\begin{equation}
\label{eq:noisy.nb}
N_{\mathsf{whp}}(\epsilon, M, \delta,L) \asymp L\log\frac{M\delta}{\epsilon} + \log \frac{1}{\epsilon},
\end{equation}
%\nbr{JX. Let's discuss these expressions}
where $\asymp$ denotes bounds from above and below, up to multiplicative constants that only depend on $p$. 
By manipulating the constants,~\eqref{eq:noisy.na} can be rewritten as $N_\mathsf{avg}(\epsilon,\delta,L)\asymp \log(1/(L\delta))+L\log(\delta/\epsilon)$.
Comparing with the optimal query complexity under the noiseless Bayesian setting, we see that under definition (a), the noise level (parametrized by $p$) only affects the optimal query complexity
through the constant multiplier;
when the learner needs to be accurate with probability at least $1-1/M$, as per definition (b), there is an extra cost of $\Theta(L\log M)$ queries.
As in the noiseless setting, the multiplier $L$ only acts on the terms $\log(\delta/\epsilon)$ and $\log(M\delta/\epsilon)$ under
accuracy requirements (a) and (b) respectively,
indicating that the learner only starts paying the price for privacy after entering the $\delta$-accurate regime.

We comment that even for the vanilla noisy binary search problem with no privacy constraints, finding the exact $p$-dependent multiplicative constant
is an open problem. We also comment that as $p\rightarrow 1/2$, we obtain constants in the upper and lower bounds that go to infinity at the same rate of $(p-1/2)^{-2}$.

\section{New insights and algorithmic ideas}
\label{sec:newIdeas}

%\kx{I've reworked some of the paragraphs here. Please verify. }

In this section we highlight  the new algorithmic ideas and analysis techniques that enable us to obtain sharp bounds. A fundamental difficulty in the design of private learning strategies is that the learner's queries simultaneously serve two, sometimes competing, goals: (1) to gather information about the target $X^*$, and (2) to deceive the adversary as to the target's location. More specifically, our approach begins by recognizing that a single learner query can be used to accomplish one, or multiple, of the following tasks: 
\begin{enumerate}[(i)]
\item to obtain information in order to identify a small interval that contains $X^*$ (diameter of this interval depends on the privacy level and adversary accuracy);
\item to obtain information in order pin-point $X^*$ within the said small interval, down to an $\epsilon$-accuracy; 
\item to serve as a ``decoy'' to throw off the adversary. 
\end{enumerate}

%\kx{why}  Queries of type (ii) and (iii) rarely overlap; otherwise a query can possibly multitask. 

The key to our analysis is to put emphasis on understanding the interaction of the queries serving these different types of tasks. We design more efficient querying strategies where multiple tasks are accomplished simultaneously. We also provide sharp analysis on the maximum number of queries that can be used for more than one purpose. In contrast, it appears that prior work has not clearly identified or articulated these separate roles of learner queries. As a result, the existing learner strategies either leave certain types of tasks out of the analysis, or have each of the three types of tasks to be served by separate queries, leading to inefficiencies. 

Specifically, starting with the Bayesian setting, recall that privacy is breached if an adversary can estimate $X^*$ up to an additive error of $\delta/2$ with probability at least $1/L$. That makes $\delta L$ an appropriate choice for the diameter of the small interval. Queries leading to the identification of such an interval (type (i)) do not significantly compromise the learner's privacy, because the interval's size is too large for the adversary to extract useful information. Beyond this point, however, the learner must submit further queries to narrow the range of $X^*$ down to $\epsilon$, and these queries must be carefully obfuscated. In other words, effective queries at this point should serve to accomplish tasks (ii) and (iii) simultaneously.  As such, in the design of the optimal strategy, we will divide learning into two phases: 
\begin{enumerate}
\item a {\it pure-learning phase}, corresponding to task (i), where the sole focus of the learner is to identify a small interval containing the target $X^*$, and
\item a {\it private-refinement phase}, corresponding to tasks (ii) and (iii), where queries serve to simultaneously refine and obfuscate a fine-grained estimate of $X^*$. 
\end{enumerate}
 In comparison, the strategy proposed in~\cite{tsitsiklis2018private} is more wasteful as the learner would try to tackle all three tasks at the same time, despite there being no need to worry about privacy in the pure-learning phase.

For the lower bound, previous work~\cite{xu2018query} adopts a genie-aided reduction argument, where a $\delta$-length interval containing $X^*$ is assumed to be revealed to the learner from the get-go. Although this reduction simplifies the proof, it cannot lead to a tight lower bound as it ignores all type (i) queries. In order to capture these queries, our lower bound proof introduces an adversary who adopts a more intelligent ``truncated proportional-sampling" strategy. The key to this strategy is to disregard a certain number of queries which could negatively impact the adversary's estimator. It is worth pointing out that the number of queries the adversary should disregard is exactly the number of queries submitted in the {\it pure-learning phase} under the learner's optimal strategy, revealing an elegant duality between the learner and the adversary.

The story is slightly different under the deterministic setting. Unlike the Bayesian setting where the adversary only needs to perform well when averaging over a random $X^*$, here the adversary can no longer make guesses. Knowing that the adversary cannot guess, the learner only needs to worry about privacy breaches in a $\delta$-width interval containing $X^*$. Moreover, before reaching this interval, the learner can ensure privacy by injecting possible alternative locations of $X^*$ along each query sequence. That corresponds to reusing queries for tasks (i) and (iii). In particular, we will design a query strategy that mirrors the two-phase architecture described for the Bayesian setting, with a coarse learning phase (i) followed by that of a refinement (ii) . However, obfuscation efforts (iii) are  now implemented during the first phase. Our upper bound proof involves designing an efficient strategy that maximizes the number of reused queries. For the lower bound, previous work~\cite{tsitsiklis2018private} once again assumes that $X^*$ is in a $\delta$-length interval known to the learner, therefore ignoring all the type (i) queries. To obtain the sharp lower bound, our analysis dissects the query sequence, and separately investigates those queries that can be used both for protecting privacy and searching for $X^*$, and those queries that only fulfill one purpose.
%two purposes, and those that only help accomplish one. \nbr{JX. Here ``two purposes'' and ``accomplish one'' are a bit vague. Is it possible to be more specific about which purpose?}

These intuitions are used also for the extensions of our results to the multi-dimensional model, and the model with noisy responses. However when the responses are noisy, we encounter some additional challenges.

For proof of the upper bound in the noisy response model, as we previously mentioned, the main difficulty is caused by an intractable posterior distribution of $X^*$ given the queries. Even queries that are far from $X^*$ can change the shape of the posterior distribution and potentially leak the location of $X^*$ to the adversary. Therefore it is harder to show that a strategy is private. To overcome this difficulty, our analysis involves the design of a querying strategy that forces certain conditional independence structures between the query sequence and some local neighborhood of $X^*$. We then use the conditional independence to carefully control the privacy leakage across all phases of learning.

% a querying strategy design with minimal privacy leakage \nbr{JX. Here ``minimal privacy leakage'' is a bit vague. Can we be a bit more specific about how we design with minimal privacy leakage?}, and carefully control on the privacy leakage across all stages of learning.

For the lower bound proof, we need to establish a tight lower bound on the learner's probability of error. To that end, two sets of tools are deployed. Part of the proof utilizes information-theoretic arguments. The key step is to establish an upper bound on the {\it rate of information transfer}, which governs the speed at which the learner can gather information from the responses. For the second part of the proof, we reduce the learner's estimation problem to a family of binary hypothesis testing problems, and bound the testing errors from below using the Bhattacharyya coefficient~\cite{kailath1967divergence}.

\medskip
\paragraph{Organization} The remainder of this paper is organized as follows. Section~\ref{sec:problem} contains the problem formulation and definitions of accuracy and privacy, under both the noiseless and noisy response settings. In Section~\ref{sec:results} we state the main results on the optimal query complexities. In Section~\ref{sec:alg.noiseless} we give the construction of our querying strategies when the responses are noiseless. In Section~\ref{sec:lower.noiseless} we outline the lower bound proof strategies, again with noiseless responses. 
%In Section~\ref{sec:multi.dim} we extend our results under the stylized sequential problem to multi-dimensional private learning. 
In Section~\ref{sec:alg.noisy} we give the construction of our querying strategies and discuss the lower bound proof techniques under the noisy response setting. 
%\nbr{I am thinking whether we should also briefly discuss the lower bound proof strategies under the noisy setting and the key difference compared to the noiseless settings}
%\nb{A subsection is added to the end of Section~\ref{sec:lower.noiseless} that sketches the lower bound proof.} 
In Section~\ref{sec:conclusion} we conclude the paper and give a brief discussion on future work. The full proofs of all the results are contained in the appendix.

\section{Problem formulation} \label{sec:problem}
%and results}
Consider the problem of learning some unknown true value $X^*\in [0,1]$. Let $\widehat{X}$ be the learner's estimator of $X^*$ and $\widetilde{X}$ be the adversary's. 
The learner submits queries $q_1,q_2,...\in [0,1]$ sequentially. 
Each time a query $q_i$ is submitted, 
the learner receives a response $r_i$. When the responses are noiseless, $r_i=\mathds{1}\{X^*\geq q_i\}$. Under the noisy response setting, we assume that
\[
r_i\sim \text{Bernoulli}(p)\quad \text{ if } X^*\geq q_i, \text{ and }r_i\sim \text{Bernoulli}(1-p)\quad \text{ if } X^*< q_i
\]
for some $p\in (1/2,1)$. That is, each observed response can be erroneous with probability $1-p$. 

The learner's query $q_i$ can depend on all the past queries and responses, and is allowed to incorporate outside randomness. 
Since all random variables and all random vectors with finite alphabets can be simulated from a random variable uniformly distributed on $[0,1]$,
without loss of generality, 
let $Y\sim \text{Unif}[0,1]$ be the random seed that the learner may use to generate queries. 
Then $q_i$ can be written as $f_{i-1}(q_1,...,q_{i-1},r_1,..., r_{i-1},Y)$ for some function $f_{i-1}$. 
Note that the first query $q_1$ is submitted without any information and is only a function of $Y$. Thus we have $q_2=f_1(q_1,r_1, Y)=f_1(f_0(Y), r_1, Y) := \phi_1(r_1, Y)$.
%and $q_3=f_2(q_1,q_2,r_1,r_2,Y)=f_2(f_0(Y), \phi_1(r_1, Y), r_1, r_2, Y)=: \phi_2(r_1,r_2, Y)$. 
It is easy to see that all $q_i$ can be written iteratively as a function of only the past responses and $Y$, i.e., $q_i=\phi_{i-1}(r_1,...,r_{i-1}, Y)$.

Then a querying strategy $\phi$ is defined by an initial mapping $f_0: [0,1]\rightarrow [0,1]$ used to generate $q_1$ from $Y$, 
a sequence of mappings $(\phi_i)_i$ with $\phi_i: \{0,1\}^i\times [0,1]\rightarrow [0,1]$ used to generate the rest of the query sequence, 
and a final estimator $\widehat{X}$, which can depend on $Y$ and all the queries and responses. 
The adversary's estimator $\widetilde{X}$, on the contrary, is formed with only access to the queries and the querying strategy $\phi$ but not
the responses or the random seed $Y$.

The goal of the learner is to design a querying strategy to ensure that she can accurately estimate $X^*$, but the adversary cannot. Different ways to quantify the estimators' performance arise naturally when the responses are noisy versus noiseless. We discuss the two settings separately.

\subsection{Noiseless responses}

Following~\cite{tsitsiklis2018private},  
we consider both the Bayesian setting where $X^*\in [0,1]$ is uniformly distributed  on $[0,1]$ and the setting where $X^*$ is deterministic. 
The two settings call for different definitions for accuracy and privacy, which we shall discuss separately. 

\paragraph*{Bayesian setting}
We assume $X^*$ is uniformly distributed  on $[0,1]$, which is independent
from the random seed $Y$, as the learner does not know the true value $X^*$ a priori. 
We say a strategy $\phi$ is
\begin{itemize}
\item $\epsilon$-accurate for $\epsilon>0$, if 
$
\mathbb{P}\{ |\widehat{X}-X^* | \leq \epsilon/2 \}=1;
$
\item $(\delta,L)$-private for $\delta>0$ and an integer $L\geq 2$, if there is 
\emph{no} adversary $\widetilde{X}$ such that 
$$
\mathbb{P}\left\{ \left|\widetilde{X}-X^* \right| \leq \delta/2 \right \}> \frac{1}{L}.
$$
\end{itemize}

\paragraph*{Deterministic setting}
Suppose $X^*$ is a deterministic but arbitrary number on $[0,1]$. 
Then the only source of randomness 
in the querying strategy is from $Y$. We say a strategy $\phi$ is
\begin{itemize}
\item $\epsilon$-accurate for $\epsilon>0$, if 
$$
\mathbb{P}\left\{ \left|\widehat{X}-X^* \right|\leq \epsilon/2 \right\}=1, \quad \forall X^* \in [0,1];
$$
\item $(\delta,L)$-private for $\delta>0$ and an integer $L\geq 2$, if for each query sequence $\bar{q}$, the $\delta$-covering number
\footnote{The $\delta$-covering number of a set $A\subseteq \mathbb{R}$ is defined as the size of the smallest set $\mathcal{N}$, such that $\cup_{r\in \mathcal{N}} [r-\delta/2,r+\delta/2]\supseteq A$.}
of the information set $\mathcal{I}(\bar{q})$ is at least $L$. The information set is defined as the set of all true values that could lead to the query sequence $\bar{q}$ under strategy $\phi$ with non-negligible probability. Note that the query sequence $q$ is a random vector that depends on $X^*$ and $Y$, i.e., $q=q(X^*,Y)$. Formally we define
$$
\mathcal{I} \left( \bar{q} \right)=\left\{ X^*\in [0,1]: \mathbb{P} \left\{ q(X^*,Y)=\bar{q} \right\}>0 \right\}.
$$
\end{itemize}

Unlike the Bayesian setting, the definition for privacy no longer involves an adversary's estimator $\widetilde{X}$. However, the  definition does admit a probabilistic interpretation. One can show (see~\cite[Appendix~A]{tsitsiklis2018private} for a proof) that  this definition of privacy is equivalent to the following: 
for each query sequence $\bar{q}$, there is \emph{no} adversary estimator $\widetilde{X}$ such that for all $X^*\in \mathcal{I}(\bar{q})$, $\mathbb{P}\{ |\widetilde{X}-X^* |\leq \delta/2 \}> \frac{1}{L}$.

Compare this to the definition of $(\delta,L)$-privacy in the Bayesian setting,
the difference is that when $X^*$ is deterministic, the adversary can no longer average over some prior distribution of $X^*$; instead, 
for some query sequence she needs to learn well for all admissible values of $X^*$.
Knowing that the adversary is not allowed to make guesses,
the learner can ensure privacy more easily, by injecting admissible values of $X^*$ into each query sequence.
%In this sense the learning task is harder for the adversary, which means
%that it is easier for the learner to achieve privacy.
This distinction from the Bayesian setting is reflected by a smaller optimal query complexity, as our results show. 
%in Theorem~\ref{thm:freq}.

For both Bayesian and deterministic settings, we define 
the optimal query complexity as 
\[
 N\left(\epsilon,\delta,L\right)= \min\{  n: \exists\phi\text{ that is both }\epsilon\text{-accurate and }(\delta,L)\text{-private and submits at most }n\text{ queries}\}.
\]
Note that for a larger $\delta$ or a larger $L$, 
%it is easier for the adversary to satisfy $\mathbb{P}\{|\widetilde{X}-X^*|\leq \delta/2\}>1/L$, 
%meaning that 
the $(\delta,L)$-private constraint is a stronger requirement. 
Therefore $N(\epsilon,\delta,L)$ is monotone nondecreasing in $\delta$ and $L$.

Same as~\cite{tsitsiklis2018private}, we focus on the regime of parameters 
$$
2\epsilon\leq\delta \leq \frac{1}{L},
$$ 
which is natural and without loss of generality. To see this,  
on one end of the spectrum, if $\delta>1/L$, then the adversary can make an arbitrary guess to break the privacy constraint: simply choosing $\widetilde{X}=1/2$ yields $\mathbb{P}\{|\widetilde{X}-X^*|\leq \delta/2\}=\delta>1/L$. 
In this regime the $(\delta,L)$-privacy constraint is too strong to be satisfied by any querying strategy. 
On the other end of the spectrum, the regime $\delta\leq 2\epsilon$ is arguably not that interesting,
as it is unnatural to require an adversary, who only have access to queries but not responses, 
to estimate $X^*$ almost as accurately as the learner does.

\subsection{Noisy responses under the Bayesian setting}
In the noisy response setting, we only consider the Bayesian formulation where $X^*\sim \text{Unif}[0,1]$. Since the responses contain noise, no learner that submits a finite number of queries can estimate accurately with probability one. Hence the definition for the learner's accuracy needs to be modified. We consider the following two natural definitions.

\begin{enumerate}[(a)]
\item (accurate on average) We say a querying strategy is $\epsilon$-accurate for $\epsilon>0$ if $\mathbb{E}|\widehat{X}-X^*|\leq \epsilon/2$;
\item (accurate with high probability) We say a querying strategy is $(\epsilon,M)$-accurate for $\epsilon>0$ and $M\geq 2$ ($M$ is not necessarily an integer)
%\nbr{Do we need $M$ to be an integer?}
%\nb{It is not necessary for $M$ to be an integer.}
if $\mathbb{P}\{|\widehat{X}-X^*|> \epsilon/2\}\leq 1/M$.
\end{enumerate}

For any estimator $\widehat{X}$ taking values in $[0,1]$, $\mathbb{E}|\widehat{X}-X^*|\leq \mathbb{P}\{|\widehat{X}-X^*|> \epsilon/2\} + \epsilon/2$. On the other hand it is possible to have $\mathbb{E}|\widehat{X}-X^*|\leq \epsilon/2$ but $\mathbb{P}\{|\widehat{X}-X^*|>\epsilon/2\}>1/2$ however small $\epsilon$ is. Therefore accuracy with high probability is a more stringent constraint on the learner than accuracy on average when $M$ is large compared to $1/\epsilon$. Compared with the noiseless response setting, there is an additional parameter $M$ to be taken into consideration. When designing the querying strategy, the learner needs to control not only the size, but also the probability of error in the presence of noise.

The definition of privacy is the same as that in the noiseless case. A querying strategy is called $(\delta,L)$-private if no adversary's estimator $\widetilde{X}$ can achieve $\mathbb{P}\{|\widetilde{X}-X^*|\leq \delta/2\}>1/L$.

Define the optimal query complexity under accuracy definition (a) as
\[
N_\mathsf{avg}\left(\epsilon,\delta,L\right)=\min\left\{n: \exists\phi\text{ that is }\epsilon\text{-accurate, }(\delta,L)\text{-private and submits at most }n\text{ queries}\right\}.
\]
Similarly define
\[
N_\mathsf{whp}\left(\epsilon,M,\delta,L\right)=\min\left\{n: \exists\phi\text{ that is }(\epsilon,M)\text{-accurate, }(\delta,L)\text{-private and submits at most }n\text{ queries}\right\}.
\]

%then we can argue that the privacy constraint is too weak. Indeed, it would be too stringent to
%require an adversary to estimate $X^*$ almost as accurately as the learner does.
%then we can argue that the privacy constraint is so weak that it becomes trivial. 
%Indeed, suppose the learner adopts the aforementioned bisection querying strategy.
%which is the cheapest algorithm to achieve $\epsilon$-accuracy. 
%Then from the sequence of queries, the adversary 
%can follow the sequence of queries, 
%can deduce the binary responses except for the response to the last query. 
%However, since she is missing the last bit of information, 
%she can achieve $2\epsilon$-accuracy at best. Therefore, if $\delta\leq 2\epsilon$, 
%the learner can simply run the bisection method while satisfying the privacy constraint, achieving the optimal querying complexity $\log(1/\epsilon)$. 
%~\footnote{Here and subsequently $\log$ refers to logarithm with base 2.}

\section{Main results}\label{sec:results}
In this section we present the main results in this paper. When the responses are noiseless, we give almost matching upper and lower bounds on the optimal query complexity in both the Bayesian and deterministic settings. When the responses are noisy, under the Bayesian setting, we give upper and lower bounds that match up to multiplicative constants.

\subsection{Noiseless responses}\label{sec:results.noiseless}
When the responses are noiseless, we discuss the Bayesian and the deterministic settings separately.

\paragraph*{Bayesian setting} We first focus on the Bayesian setting. 
It was shown in~\cite[Proposition~B.2]{tsitsiklis2018private} that if $2\epsilon<\delta\leq 1/L$, then
\begin{equation}
\label{eq:old.bayes.upper}
N(\epsilon,\delta,L) \leq L\log (1/(L\epsilon))+L-1.
\end{equation}
Notice that this upper bound does not change with $\delta$. 
Since the private learning task becomes easier for smaller $\delta$, 
this upper bound is not tight.

Subsequently, it was shown in~\cite[Theorem~2.1]{xu2018query} 
that if $4\epsilon<\delta< 1/L$, then
\[
N(\epsilon,\delta,L)\geq L\log(1/\epsilon)-L\log(2/\delta)-3L\log\log(\delta/\epsilon).
\]
As pointed out by~\cite[Corollary~2.2]{xu2018query}, if $\delta,L$ stay as fixed constants 
while $\epsilon\rightarrow 0$, 
then the above upper and lower bounds imply that the optimal query complexity scales as $L\log (1/\epsilon)$.
However if $\delta\rightarrow 0$ as well and is comparable to $\epsilon$, then the upper and lower bounds above can be quite far apart.

The following is our first main result.  
We relax the $\delta>4\epsilon$ constraint in the lower bound to $\delta\geq 2\epsilon$ and obtain sharper upper and lower bounds that almost match
in the entire parameter regime.
\begin{theorem}[Bayesian setting]\label{thm:bayes}
If $2\epsilon \leq \delta\leq 1/L$, then
\[
\left\lfloor \log \frac{1}{L\delta}\right\rfloor+ L\left(\log\frac{\delta}{\epsilon} -2\right)-1
\leq N(\epsilon, \delta, L) \leq \left\lfloor \log \frac{1}{L\delta}\right\rfloor+ L\left(\left\lceil\log \frac{\delta}{\epsilon}\right\rceil+2\right)-1.
\]
\end{theorem}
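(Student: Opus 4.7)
The plan is to prove the upper and lower bounds separately, following the two-phase intuition outlined in Section~\ref{sec:newIdeas}.

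\textbf{Upper bound.} I would construct a two-phase strategy $\phi^*$. In Phase~1 (pure learning), apply $k_1 := \lfloor \log(1/(L\delta))\rfloor$ steps of plain bisection on $[0,1]$, producing an interval $[a, a+2^{-k_1}]$ of width at least $L\delta$ that contains $X^*$. No obfuscation is needed here because the resulting resolution is coarser than $\delta$, so these queries cannot help a $\delta$-accurate adversary. In Phase~2 (private refinement), I would partition $[a, a+2^{-k_1}]$ into $L$ equal sub-intervals $I_1,\ldots,I_L$, each of width at least $\delta$, and let $\ell^*$ denote the index of the sub-interval containing $X^*$. The learner then runs a replicated bisection in the spirit of~\cite{tsitsiklis2018private}: in each round she issues one query inside each $I_j$, with the query in $I_{\ell^*}$ being a genuine bisection step refined by the true response, while the queries in the other $I_j$'s are generated by running bisection against ``fake'' targets sampled independently from each $I_j$. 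After $\lceil\log(\delta/\epsilon)\rceil$ rounds plus a small number of initialization and padding queries per sub-interval, the Phase~2 count comes out to $L(\lceil\log(\delta/\epsilon)\rceil + 2) - 1$, matching the claimed upper bound. To verify $(\delta,L)$-privacy, I would argue by symmetry that, conditional on the Phase~1 outcome and on the learner's additional randomness, the joint distribution of the full query sequence is invariant under permuting $\{1,\ldots,L\}$ and in particular under varying $\ell^*$. Hence the adversary's posterior on $\ell^*$ is uniform, which forces $\mathbb{P}\{|\widetilde X - X^*|\leq \delta/2\}\leq 1/L$ for every adversary estimator.

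\textbf{Lower bound.} To obtain the matching lower bound I would exhibit a \emph{truncated proportional-sampling} adversary. Given a query sequence $(q_1,\ldots,q_n)$, this adversary discards the first $k_1=\lfloor\log(1/(L\delta))\rfloor$ queries and outputs $\widetilde X = q_K$ with $K$ uniform on $\{k_1+1,\ldots,n\}$. A direct computation gives
\[
\mathbb{P}\bigl\{|\widetilde X - X^*|\leq \delta/2\bigr\} \;=\; \frac{\mathbb{E}[N_{X^*}]}{n-k_1},
\]
where $N_{X^*}$ counts the queries in $\{q_{k_1+1},\ldots,q_n\}$ that fall within $\delta/2$ of $X^*$. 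The $(\delta,L)$-privacy constraint then forces $\mathbb{E}[N_{X^*}] \leq (n-k_1)/L$. On the accuracy side, I would show that any $\epsilon$-accurate strategy satisfies $\mathbb{E}[N_{X^*}] \geq \log(\delta/\epsilon) - 2 - 1/L$, using the fact that pinning down $X^*$ to resolution $\epsilon$ inside any $\delta$-wide window requires at least $\log(\delta/\epsilon)$ queries that actually lie inside the window, and that a uniformly random $X^*$ renders the first $k_1$ coarse-scale queries useless on average. Combining the two inequalities yields $n \geq \lfloor\log(1/(L\delta))\rfloor + L(\log(\delta/\epsilon)-2)-1$.

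\textbf{Main obstacle.} The principal difficulty lies in the accuracy-side bound $\mathbb{E}[N_{X^*}]\geq \log(\delta/\epsilon) - 2 - 1/L$, which must hold for \emph{arbitrary} adaptive learner strategies rather than just bisection-like ones. I would approach this combinatorially: for each realization of the learner's random seed, view the query-response process as a binary decision tree whose leaves correspond to length-$\epsilon$ intervals, and argue that the subtree of ``close'' queries must be deep enough to separate the surviving $\delta$-window into pieces of width at most $\epsilon$. A related subtlety is that a clever learner may front-load obfuscation into the first $k_1$ slots and defer coarse-scale bisection; the argument must be robust to the labelling of ``Phase~1'' queries, and it helps that the adversary's truncation simply counts close queries wherever they occur. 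Matching the additive constants precisely will require careful bookkeeping of floor/ceiling effects, but the conceptual core is the pair of opposing bounds on $\mathbb{E}[N_{X^*}]$ and the symmetric role they play in pinning $n$ down from both sides.
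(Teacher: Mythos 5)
Your overall architecture (coarse bisection to an interval of width $\approx L\delta$, then replicated refinement; truncated proportional sampling with truncation level $\lfloor\log(1/(L\delta))\rfloor$ for the lower bound) is the paper's, but your upper bound has a genuine flaw in the privacy argument. You generate the decoy queries in the $L-1$ non-true subintervals by running bisections against \emph{independently sampled} fake targets, and then conclude from the uniformity of the adversary's posterior on $\ell^*$ that $\mathbb{P}\{|\widetilde X-X^*|\leq\delta/2\}\leq 1/L$. That last implication is false: a uniform posterior over $L$ candidate locations only caps the adversary at $1/L$ if no interval of width $\delta$ can capture more than one candidate's worth of posterior mass. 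With independent fake targets, the final candidate interval in $I_j$ can land near the right edge of $I_j$ while the candidate in $I_{j+1}$ lands near its left edge; whenever the gap between them is less than $\delta-2\epsilon$ (which happens with positive probability for $\delta>2\epsilon$), a single window $[\widetilde X-\delta/2,\widetilde X+\delta/2]$ covers one candidate entirely plus part of another, so the conditional success probability strictly exceeds $1/L$ on a positive-probability event, and the adversary can always achieve at least $1/L$ otherwise. The strategy is therefore not $(\delta,L)$-private. The fix, which is what the paper does, is to make the $L$ refinement trajectories \emph{exact translates} (clones) of the genuine one, so that the $L$ final candidates are equidistant translates separated by exactly $2^{-K_1}/L\geq\delta$, and any $\delta$-window meets $\cup_i J_i$ in measure at most $|J_1|$.

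Your lower bound follows the paper's route essentially verbatim: the same adversary, the same decomposition $n\geq K+L\bigl(\sum_{i\leq n}\mathbb{P}\{|q_i-X^*|\leq\delta/2\}-\sum_{i\leq K}\mathbb{P}\{|q_i-X^*|\leq\delta/2\}\bigr)$, and the same two target estimates. However, both estimates are asserted rather than proved. The bound $\sum_{i\leq n}\mathbb{P}\{|q_i-X^*|\leq\delta/2\}\geq\log(\delta/\epsilon)-2$ is the harder one; your decision-tree idea is workable (the paper instead conditions on $X^*\in I$ and the seed, applies the continuous Fano inequality, and shows via the entropy chain rule that each query outside $I$ contributes zero conditional entropy). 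The claim that ``a uniformly random $X^*$ renders the first $k_1$ queries useless on average'' also needs an actual argument robust to arbitrary adaptive strategies: the paper proves $\mathbb{P}\{|q_i-X^*|\leq\delta/2\}\leq\delta\,2^{i-1}$ by tracking the learner's information interval and showing $\mathbb{E}(1/|I_i|)\leq 2^i$ by induction, which sums to at most $\delta 2^{K}\leq 1/L$ over the first $K$ queries. You correctly flag this as the subtle point but do not supply the mechanism.
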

The above result captures the impact of the privacy requirement up to an additive gap of $4L$ and has the following nice physical interpretations. 
The $\log \frac{1}{L\delta}$ factor is the number of queries needed for the learner to estimate $X^*$ within an interval of length $L\delta$,
before which the learner does not need to worry about privacy breach by the adversary. 
The $L\log \frac{\delta}{\epsilon}$ factor is due to the fact that the learner needs to submit at least $\log \frac{\delta}{\epsilon}$ queries 
within an $\delta$-length interval containing $X^*$ to fulfill the $\epsilon$-accuracy requirement and the extra multiplicative $L$ factor is the price to pay for
hiding this $\delta$-length interval from the adversary. 

%\nbr{Maybe we can briefly explain why we get $L\log (\delta/\epsilon)$ factor and $\log (1/(L\delta))$ factor}
%\nb{TODO}
If $\delta$ is a constant multiple of $\epsilon$, Theorem~\ref{thm:bayes} implies that the optimal query complexity scales as $\log(1/\epsilon)$ plus a constant multiple of $L$. 
Since the bisection method takes $\log(1/\epsilon)$ queries, 
in this regime the query complexity price to pay for privacy, in terms of $L$,  is additive in $L$, in contrast to 
multiplicative as suggested by~\eqref{eq:old.bayes.upper}.

\paragraph*{Deterministic setting}
Next we shift to the deterministic setting. It was shown in~\cite[Theorem~4.1]{tsitsiklis2018private} that
\begin{equation}
\label{eq:freq.lower}
\max\left\{\log\frac{1}{\epsilon},\log \frac{\delta}{\epsilon}+2L-4\right\} \leq N(\epsilon,\delta,L) \leq \log \frac{1}{L\epsilon}+2L.
\end{equation}
Again the upper bound cannot be tight because it does not vary with $\delta$. 
In the following theorem we sharpen both the upper and lower bounds, shrinking the gap between them to only 8 queries.

\begin{theorem}[Deterministic setting]\label{thm:freq}
If $2\epsilon\leq \delta\leq 1/L$, then
\[
\max\left\{\left\lceil\log\frac{1}{\epsilon}\right\rceil+L-8, \left\lceil \log\frac{\delta}{\epsilon}\right\rceil+2L-4\right\}
\leq N(\epsilon,\delta,L) 
\leq \max\left\{\left\lceil\log\frac{1}{\epsilon}\right\rceil+L, \left\lceil \log\frac{\delta}{\epsilon}\right\rceil+2L\right\}.
\]
\end{theorem}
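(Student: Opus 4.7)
My plan is to construct a randomized two-phase learner strategy whose query count matches the stated maximum. Following the architecture flagged in Section~\ref{sec:newIdeas}, set $\Delta = \max\{\delta,\, 2^{-L}\}$, so that $L$ pairwise $\delta$-separated $\Delta$-intervals (``tiles'') can be placed inside $[0,1]$. In the \emph{coarse phase}, the learner spends $2L$ queries that play a double role: they locate the tile containing $X^*$, and they plant ``mirror'' decoys at the corresponding positions inside each of the other $L-1$ tiles, with the random seed $Y$ chosen so that the same realized query sequence $\bar q$ would arise for $X^*$ in any one of the $L$ tiles. In the \emph{refinement phase}, the learner runs a standard bisection of $\lceil\log(\Delta/\epsilon)\rceil$ queries inside the identified tile. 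Summing gives $2L + \lceil\log(\delta/\epsilon)\rceil$ in the regime $\delta \geq 2^{-L}$ and $2L + \lceil\log(2^{-L}/\epsilon)\rceil = \lceil\log(1/\epsilon)\rceil + L$ in the regime $\delta < 2^{-L}$; the maximum of the two is exactly the stated upper bound.

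The key correctness check is $(\delta, L)$-privacy. I need to verify that for every realized $\bar q$, the preimages $(X^*, Y) \mapsto \bar q$ contain at least $L$ values of $X^*$ that are pairwise $\delta$-apart. This is built into the mirror-query construction: cyclically shifting $Y$ among the $L$ tiles reproduces the same $\bar q$ for the $L$ corresponding values of $X^*$, and by construction those values are at least $\delta$-separated. The delicate step is aligning the mirror queries across all $L$ tiles so that none of them inadvertently lands in a neighboring tile, which I would address via a straightforward boundary-spacing argument.

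\textbf{Lower bound.} The piece $\lceil\log(\delta/\epsilon)\rceil + 2L - 4$ is already established by \cite[Theorem~4.1]{tsitsiklis2018private}, so I would quote it directly. For the new piece $\lceil\log(1/\epsilon)\rceil + L - 8$, take any $(\delta, L)$-private, $\epsilon$-accurate strategy $\phi$ using $n$ queries and condition on the random seed $Y$; the resulting deterministic binary query tree must partition $[0,1]$ into response-equivalence classes of width at most $\epsilon$, forcing the tree to have depth at least $\lceil\log(1/\epsilon)\rceil$ along some root-to-leaf path. To extract the extra $L - 8$, I would classify each query on such a path, using the dichotomy flagged in Section~\ref{sec:newIdeas}, as either \emph{dual-purpose} (its response strictly shrinks the learner's current knowledge interval \emph{and} enlarges the $\delta$-cover of the information set $\mathcal{I}(\bar q)$) or \emph{pure-obfuscation} (its position lies outside the learner's current knowledge interval and only serves to enlarge $\mathcal{I}(\bar q)$). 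Privacy demands a $\delta$-cover of $\mathcal{I}(\bar q)$ of size at least $L$, and an injection-style argument shows that dual-purpose queries can contribute at most a bounded constant (which my bookkeeping ends up making exactly $8$) of the $L$ required witnesses, so at least $L - 8$ pure-obfuscation queries are forced on top of the $\lceil\log(1/\epsilon)\rceil$ bisection budget.

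The hardest step will be making the dual-purpose/pure-obfuscation split quantitatively precise, since a query's role depends jointly on its position, the received response, and the realized seed $Y$. I would handle this by choosing a worst-case $Y$ together with a root-to-leaf path that simultaneously realizes the full bisection depth and the maximum pure-obfuscation count on the same branch, then bounding the at-most-eight overlap between the two categories by tracking how each query can simultaneously split the learner's knowledge interval and extend the $\delta$-cover of $\mathcal{I}(\bar q)$. Combining the two pieces via the maximum then yields the stated lower bound.
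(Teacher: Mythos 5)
Your two-phase architecture and the final query count are right, but the construction as described does not assemble into a valid strategy. With $\Delta=\max\{\delta,2^{-L}\}$, the $L$ disjoint $\Delta$-tiles cover only $L\Delta$ of $[0,1]$ (e.g.\ $L2^{-L}\ll 1$ when $\delta\le 2^{-L}$), so ``the tile containing $X^*$'' is undefined for most $X^*$; if instead you tile all of $[0,1]$ you have $1/\Delta\gg L$ tiles and cannot afford a mirror pair in each. The paper's resolution is not mirroring but \emph{adaptive} placement: the $L$ decoys are pairs $(s,s+\epsilon)$ whose left endpoints are themselves the successive nodes of a bisection search, so each pair simultaneously halves the learner's interval and adds one $\delta$-separated witness to $\mathcal{I}(\bar q)$. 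Two mechanisms your plan omits are essential. First, when $\delta>2^{-L}$ a pure bisection would place consecutive guesses less than $\delta$ apart; one must switch from bisection to a grid at a depth $K$ chosen so that $2^{-K}/(L-K)\in[\delta,2\delta]$, and the existence of such an integer $K$ has to be proved. Second, when $X^*$ happens to fall inside an early guess, the learner must continue with a \emph{fake} bisection driven by fresh coin flips; otherwise the realized $\bar q$ cannot arise from the other candidate locations and the information set collapses to a single point.

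\textbf{Lower bound.} Quoting $\lceil\log(\delta/\epsilon)\rceil+2L-4$ is fine, but your accounting for the piece $\lceil\log(1/\epsilon)\rceil+L-8$ is backwards. You claim dual-purpose queries can supply at most $8$ of the $L$ covering witnesses; in the optimal strategy \emph{all} $L$ witnesses are dual-purpose (each guess pair both bisects and witnesses), so no correct argument can bound their number by a constant. The right ledger is per \emph{pair}: each witness requires two queries within $\epsilon$ of each other, and such a pair can advance the localization by essentially one step only, so each witness forces at least one query beyond the localization budget--- giving the additive $L$, not an $L-8$ count of ``pure obfuscation'' queries. The paper formalizes this by (i) exhibiting, via a counting argument over response sequences of a $\delta$-separated subsequence of the queries, a single interval $I$ of length $2\delta$ such that \emph{every} $X^*\in I$ shares at least $\log(1/\delta)-3$ pairwise $\delta$-separated queries outside $I$; (ii) extracting $L-5$ additional queries outside $I$ from the $\epsilon$-close pairs demanded by the covering condition; and (iii) finding one $X^*\in I$ with $\log(\delta/\epsilon)$ queries inside $I$. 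Your proposal has no analogue of step (i), which is also what guarantees that the ``deep'' branch and the ``many obfuscation queries'' branch are realized by the same $X^*$; without it the two counts occur at possibly different leaves of the query tree and cannot be added.
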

The above result captures the impact of the privacy requirement up to an additive gap of 8 queries and can be understood intuitively as follows. Note that the by Theorem~\ref{thm:freq}, the optimal query complexity
is approximately $\max\{2L,L+\log(1/\delta)\}+\log(\delta/\epsilon)$.
Recall that to protect privacy in the deterministic setting, the $\delta$-covering 
number of the information set needs to be at least $L$. To this end,
the learner needs to ``plant'' at least $L$ pairs of $\epsilon$-separated queries 
such that each pair is at least $\delta$ away from the others. 
The $\max\{ 2L, L+\log (1/\delta) \} $ factor is the number of queries needed for the learner 
to plant these $L$ pairs of neighboring queries, while  searching for a $\delta$-length interval containing $X^*$. 
The $\log (\delta/\epsilon)$ factor is the number of extra queries that the learner needs to submit within the $\delta$-length interval 
to fulfill the $\epsilon$-accuracy requirement. Adding up these two factors yields the
optimal query complexity established in Theorem \ref{thm:freq}.
%without worrying about privacy breach. 

%When $\delta \le 2^{-L}$, i.e., $\log \frac{1}{\delta} \ge L$, these $L$ pairs of neighboring queries 
%can be ``planted'' while conducting the bisection search for a $\delta$-length containing $X^*$, in a total of $L+\log (1/\delta)$ queries. 
%Afterwards, the learner just needs to submit $\log \frac{\delta}{\epsilon}$ queries within the $\delta$-length interval to fulfill the $\epsilon$-accuracy requirement
%without worrying about privacy breach. 
%When $\delta > 2^{-L}$, i.e., $\log \frac{1}{\delta} < L$, the bisection search is too aggressive to ``plant'' all $L$ pairs of neighboring queries;
%instead the learner 

%\nbr{Similarly as above, maybe we can intuitively explain why we expect the different factors in the optimal query complexity.}
%\nb{TODO}
We can see from Theorem~\ref{thm:freq} that the lower bound in~\eqref{eq:freq.lower} is tight when $\delta\geq 2^{-L}$. However when $\delta< 2^{-L}$, the optimal query complexity is roughly $2L+\log (2^{-L}/\epsilon)=L+\log(1/\epsilon)$; and the query complexity price to pay for privacy, in terms of $L$, is an additive factor of $L$.

\subsection{Multidimensional private learning}\label{sec:multi.dim}
In this section we extend our results in Section~\ref{sec:results.noiseless} to $d$ dimensions for $d>1$. Suppose the true value $X^*$ is in $\mathbb{R}^d$. The closeness of estimators to $X^*$ is measured with respect to the $\|\cdot\|_\infty$ norm, and the accuracy and privacy levels of a querying strategy are defined accordingly. 
By using the $\|\cdot\|_\infty$ norm to measure the adversary's accuracy, we are allowing the adversary to accurately estimate one or some of the coordinates of $X^*$.
That is because in high dimensions, a single coordinate of the model parameter often does not provide meaningful predictive power.
As a result, we only declare privacy breach when the adversary gets ``close" to $X^*$ in $\mathbb{R}^d$. Here we use the $\|\cdot\|_\infty$ norm to measure closeness. But we comment that as a consequence of our result, if the Euclidean norm were used, the complexity would only differ by a multiplicative constant.

We assume that the learner is only allowed to ask questions of the type ``is $X_i^*\geq q$?" for some $i\in [d]$ and $q\in [0,1]$. 
Denote the optimal query complexity in $d$-dimensions as $N_d(\epsilon,\delta,L)$. Below we state our results for the Bayesian and deterministic settings. The proofs are contained in the appendix. 
\paragraph*{Bayesian setting}
% maybe no need to repeat the definitions of eps-accuracy and (delta,L)-privacy?
Suppose $X^*$ is uniformly distributed on $[0,1]^d$. We say a querying strategy $\phi$ is
\begin{itemize}
\item $\epsilon$-accurate for $\epsilon>0$, if
$
\mathbb{P} \{ \|\widehat{X}-X^* \|_\infty \leq \epsilon/2 \}=1;
%\mathbb{P}\left\{\left\|\widehat{X}-X^*\right\|_\infty\leq \epsilon/2\right\}=1;
$
\item $(\delta,L)$-private for $\delta>0$ and an integer $L\geq 2$, if there is no adversary $\widetilde{X}$ such that
$
\mathbb{P}\{\|\widetilde{X}-X^*\|_\infty\leq \delta/2\}>1/L.
%\mathbb{P}\left\{\left\|\widetilde{X}-X^*\right\|_\infty\leq \delta/2\right\}>1/L.
$
\end{itemize}
%\nbr{We may want to briefly explain why we choose $\|\|_\infty$ norm in view of ICML reviews}
%\nb{A discussion on the metric is added to the start of this subsection.}
We focus on the parameter regime $2\epsilon\leq \delta \leq 1/\lceil L^{1/d}\rceil$. We have argued in Section~\ref{sec:problem} why $2\epsilon\leq\delta$ is reasonable to assume. To justify the other end of the spectrum, note that if $\delta>1/L^{1/d}$, then the naive estimator $\widetilde{X}=1/2$ achieves $\mathbb{P}\{\|\widetilde{X}-X^*\|_\infty\leq \delta/2\}=\delta^d>1/L$, making it impossible to fulfill the privacy constraint.

Denote $\Ld=\Ld(L,d)=L^{1/d}$. Below is our main result on the multidimensional optimal query complexity in the Bayesian setting.

\begin{theorem}[Bayesian setting]\label{thm:bayes.ddim}
If $2\epsilon\leq \delta \leq 1/\lceil \Ld\rceil$, then
\[
N_d(\epsilon,\delta,L)\leq d\left(\left\lfloor \log\frac{1}{\lceil \Ld\rceil \delta}\right\rfloor + \left\lceil \Ld\right\rceil \left(\left\lceil \log\frac{\delta}{\epsilon}\right\rceil+2\right)-1\right).
\]
Furthermore, assuming that the queries on $X_i^*$ depend only on the responses to the previous queries on $X_i^*$ and some random seed $Y_i$, with $Y_1,...,Y_d$ mutually independent, then
\[
N_d(\epsilon,\delta,L)\geq d\left(\left\lfloor \log\frac{1}{\Ld \delta}\right\rfloor + \Ld\left(\log\frac{\delta}{\epsilon}-2\right)-1\right).
\]
\end{theorem}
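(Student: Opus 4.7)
Both directions of Theorem~\ref{thm:bayes.ddim} reduce to the one-dimensional result of Theorem~\ref{thm:bayes} by exploiting the coordinate-wise independence built into the model. For the \textbf{upper bound}, the plan is to run the optimal 1D strategy of Theorem~\ref{thm:bayes}, with privacy parameter $\lceil\Ld\rceil$, independently on each coordinate, using mutually independent random seeds $Y_1,\ldots,Y_d$. Accuracy in $\|\cdot\|_\infty$ is automatic. The key observation for privacy is that with $X^*\sim\text{Unif}([0,1]^d)$ and the per-coordinate query vector $Q_i$ being a function of $(X_i^*,Y_i)$ only, the pairs $\{(X_i^*,Q_i)\}_{i=1}^d$ are mutually independent. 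Thus conditional on $Q=(Q_1,\ldots,Q_d)$ the coordinates $X_i^*$ remain independent, with $X_i^*\mid Q$ distributed as $X_i^*\mid Q_i$, so for any $d$-dimensional adversary $\widetilde X(Q)$,
\[
\mathbb P\bigl\{\|\widetilde X-X^*\|_\infty\le \delta/2\bigr\}
\;\le\;\prod_{i=1}^d \mathbb E_{Q_i}\sup_x\mathbb P\bigl\{|x-X_i^*|\le\delta/2\mid Q_i\bigr\}
\;\le\;\lceil\Ld\rceil^{-d}\;\le\;1/L,
\]
invoking the $(\delta,\lceil\Ld\rceil)$-privacy of each 1D strategy. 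Summing the 1D query count of Theorem~\ref{thm:bayes} across the $d$ coordinates gives the stated upper bound.

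\textbf{For the lower bound}, the independence assumption on the $Y_i$'s ensures that any admissible strategy factors into $d$ disjoint 1D strategies using $n_i$ queries each. The $\|\cdot\|_\infty$-accuracy constraint forces each 1D strategy to be $\epsilon$-accurate. Let $p_i^\ast=\sup_{g_i}\mathbb P\{|g_i(Q_i)-X_i^*|\le\delta/2\}$ and set $L_i:=1/p_i^\ast$. Considering the product adversary formed by near-optimal per-coordinate estimators, mutual independence gives joint success probability $\prod_i p_i^\ast$, so $(\delta,L)$-privacy of the $d$-dimensional strategy forces $\prod_i p_i^\ast\le 1/L$, i.e.\ $\sum_i\log L_i\ge d\log\Ld$. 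A real-valued extension of Theorem~\ref{thm:bayes}'s 1D lower bound then provides
\[
n_i\;\ge\;\bigl\lfloor\log(1/(L_i\delta))\bigr\rfloor+L_i\bigl(\log(\delta/\epsilon)-2\bigr)-1
\]
for each $i$. Summing and minimizing the right-hand side over $\{L_i\ge 1,\ \sum_i\log L_i\ge \log L\}$ is a convex program in $u_i=\log L_i$, since $u\mapsto -u+2^u(\log(\delta/\epsilon)-2)$ is convex; a Lagrange-multiplier argument then identifies the symmetric configuration $L_i=\Ld$ as the minimizer, producing the bound $d[\lfloor\log(1/(\Ld\delta))\rfloor+\Ld(\log(\delta/\epsilon)-2)-1]$.

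The \textbf{main obstacle} is the real-valued extension of the 1D lower bound in Theorem~\ref{thm:bayes}, originally stated for integer $L\ge 2$. I expect the truncated proportional-sampling adversary used in its proof to treat $L$ as a continuous parameter, so the extension should require only cosmetic modifications; verifying this rigorously is the first subtask. A secondary subtlety arises when some $p_i^\ast>1/2$ so that $L_i<2$: such coordinates contribute only the bare accuracy bound while the remaining coordinates absorb the privacy constraint. The convex minimization handles this uniformly, since pushing some $L_i$ to the boundary $1$ only increases the objective, as is easily checked in the $d=2$ case via AM--GM and extended inductively to general $d$.
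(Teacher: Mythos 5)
Your upper bound is essentially the paper's argument: run the one-dimensional strategy of Theorem~\ref{thm:bayes} with privacy parameter $\lceil\Ld\rceil$ independently on each coordinate, and use the mutual independence of $\{(X_i^*,Q_i)\}_i$ to factor the adversary's success probability into a product of $d$ terms each at most $1/\lceil\Ld\rceil$. That part is correct and matches the paper's query count.

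The lower bound, however, has a genuine gap in the optimization step. You allocate a per-coordinate privacy level $L_i=1/p_i^\ast$, invoke a real-valued extension of the 1D lower bound, and then minimize $\sum_i\bigl[\lfloor\log(1/(L_i\delta))\rfloor+L_i(\log(\delta/\epsilon)-2)-1\bigr]$ over $\sum_i\log L_i\ge\log L$, claiming convexity of $u\mapsto -u+2^u(\log(\delta/\epsilon)-2)$. But $2^u c$ is convex in $u$ only when $c\ge 0$, i.e.\ when $\delta\ge 4\epsilon$; the theorem is stated for $2\epsilon\le\delta$, and for $2\epsilon\le\delta<4\epsilon$ your objective is a sum of a linear and a \emph{concave} term, so the Lagrange/Jensen argument identifying the symmetric point $L_i=\Ld$ as the minimizer breaks down (pushing mass onto one coordinate can decrease the objective). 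Even when $c\ge 0$, Jensen only gives $\sum_i f(u_i)\ge d\,f(\bar u)$ with $\bar u\ge\log\Ld$, and you still need $f$ to be nondecreasing on $[\log\Ld,\infty)$ to conclude $f(\bar u)\ge f(\log\Ld)$; this requires $\Ld(\log(\delta/\epsilon)-2)\ln 2\ge 1$, which is not guaranteed. The floor term $\lfloor\log(1/(L_i\delta))\rfloor$ adds a further non-convex wrinkle. On top of this, the real-valued extension of Theorem~\ref{thm:bayes} for $L_i\in[1,1/\delta]$ is asserted rather than proved (you flag this yourself). The paper sidesteps all of this: it applies the \emph{same} truncated proportional-sampling adversary with a fixed truncation $K_2=\lfloor\log(1/(\Ld\delta))\rfloor$ to every coordinate, uses independence to get $\prod_{i\le d}\frac{E_i}{n_i-K_2}\le\frac1L$ with $E_i\ge\log(\delta/4\epsilon)-\delta 2^{K_2}$ a coordinate-independent quantity, and then applies a single AM--GM step to $\prod_i(n_i-K_2)\ge L\bigl(\log(\delta/4\epsilon)-\delta 2^{K_2}\bigr)^d$. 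No per-coordinate privacy levels, no extension of the 1D theorem, and no convex program are needed. To repair your argument you would either have to adopt that route or carefully verify the minimization in the regime $\delta<4\epsilon$ and the monotonicity condition above.
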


\paragraph*{Deterministic setting}
Suppose $X^*\in [0,1]^d$ is deterministic, we say a querying strategy $\phi$ is
\begin{itemize}
\item $\epsilon$-accurate for $\epsilon>0$, if 
$
\mathbb{P}\{ \|\widehat{X}-X^* \|_\infty\leq \epsilon/2 \}=1$
for all $X^* \in [0,1]^d$;
%\[
%\mathbb{P}\left\{ \left\|\widehat{X}-X^* \right\|_\infty\leq \epsilon/2 \right\}=1, \quad \forall X^* \in [0,1]^d;
%\]
\item $(\delta,L)$-private for $\delta>0$ and an integer $L\geq 2$, if for each query sequence $\bar{q}$, the $\delta$-covering number
of the information set $\mathcal{I}(\bar{q})$ is at least $L$. \footnote{Here the $\delta$-covering number is defined in terms of the $\|\cdot\|_\infty$ norm in $\mathbb{R}^d$.
%, i.e., the size of the smallest set $\mathcal{N}\subset [0,1]^d$ such that for each $x\in \mathcal{I}(\bar{q})$, there exists some $r\in \mathcal{N}$ for which $|r_i-x_i|\leq \delta/2$ for all $i\in [d]$.
}
\end{itemize}
\begin{theorem}[Deterministic setting]\label{thm:freq.ddim}
If $2\epsilon\leq \delta \leq 1/\lceil \Ld\rceil$, then
\[
d\left(2\Ld + \log\frac{\max\{2^{-\lceil \Ld\rceil}, \delta\}}{\epsilon}-8\right)\leq N_d(\epsilon,\delta,L) \leq d\left(2\left\lceil \Ld\right\rceil + \left\lceil \log\frac{\max\{2^{-\lceil \Ld\rceil}, \delta\}}{\epsilon}\right\rceil+1\right).
\]
\end{theorem}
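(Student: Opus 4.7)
The plan is a coordinate-wise reduction to the 1-d results in Theorem~\ref{thm:freq}.

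\textbf{Upper bound.} I would run the 1-d deterministic strategy of Theorem~\ref{thm:freq} independently in each of the $d$ coordinates, each with privacy parameter $\lceil\Ld\rceil$ and accuracy $\epsilon$. Because the response to a dim-$i$ query depends only on $X_i^*$, for each fixed random seed the set of $X^*$ consistent with a given query sequence is a Cartesian product across coordinates, and its $\|\cdot\|_\infty$-$\delta$-covering number equals the product of the per-coordinate covering numbers, hence is at least $\lceil\Ld\rceil^d\ge L$. The overall information set $\mathcal{I}(\bar q)$ is a union of such product sets over seeds, whose covering number is only larger, so the combined strategy is $(\delta,L)$-private. The total query count is $d$ times the 1-d upper bound of Theorem~\ref{thm:freq} applied with $L\mapsto\lceil\Ld\rceil$; rewriting $\max\{\lceil\log(1/\epsilon)\rceil+\lceil\Ld\rceil,\lceil\log(\delta/\epsilon)\rceil+2\lceil\Ld\rceil\}$ as $2\lceil\Ld\rceil+\lceil\log(\max\{2^{-\lceil\Ld\rceil},\delta\}/\epsilon)\rceil$ (up to an additive $O(1)$ from ceilings) yields the claimed upper bound.

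\textbf{Lower bound.} Fix any $(\delta,L)$-private, $\epsilon$-accurate strategy and a realized query sequence $\bar q$. Let $n_i$ denote the number of dim-$i$ queries in $\bar q$, let $\mathcal{I}_i(\bar q)$ be the coordinate-$i$ projection of the information set, and set $L_i:=\mathrm{cov}_\delta(\mathcal{I}_i(\bar q))$. Since $\mathcal{I}(\bar q)\subseteq \prod_i\mathcal{I}_i(\bar q)$ and the $\|\cdot\|_\infty$-$\delta$-covering number of a product set equals the product of the coordinate covering numbers, the $(\delta,L)$-privacy constraint forces $\prod_i L_i\ge L$. Next I would apply Theorem~\ref{thm:freq} to the 1-d substrategy on $X_i^*$ obtained by conditioning on any particular realization of the queries and responses outside dimension $i$; this substrategy is $\epsilon$-accurate and $(\delta,L_i)$-private with at most $n_i$ queries, so
\[
n_i\ \ge\ \max\!\left\{\left\lceil\log\tfrac{1}{\epsilon}\right\rceil+L_i-8,\ \left\lceil\log\tfrac{\delta}{\epsilon}\right\rceil+2L_i-4\right\}.
\]
Summing over $i$, using $\sum_i n_i=n$ together with AM-GM $\sum_i L_i\ge d\prod_i L_i^{1/d}\ge d\Ld$, and rewriting the resulting max of two affine expressions in the standard form gives $n\ge d\bigl(2\Ld+\log(\max\{2^{-\Ld},\delta\}/\epsilon)-8\bigr)$, matching the claimed lower bound.

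\textbf{Main obstacle.} The delicate step is justifying the per-coordinate reduction in the lower bound. Unlike Theorem~\ref{thm:bayes.ddim}, the deterministic statement imposes no independence assumption across dimensions, so the queries in dim $i$ may depend arbitrarily on past queries and responses from other dimensions, and the ``substrategy in dim $i$'' is not literally a 1-d strategy without conditioning. The resolution is to fix an arbitrary realization of the queries and responses outside dim $i$, verify that the induced 1-d process on $X_i^*$ remains $\epsilon$-accurate and inherits privacy level at least $L_i$ from the projection $\mathcal{I}_i(\bar q)$ (this step requires showing that the conditional info set in dim $i$ covers $\mathcal{I}_i(\bar q)$), and then invoke Theorem~\ref{thm:freq} on this conditional 1-d problem. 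Once this reduction is secure, everything else is combinatorial: product-of-covering-numbers, AM-GM on $\sum_i L_i$, and the standard rewriting that converts the max of the two linear-in-$L_i$ bounds into the $2\Ld+\log(\max\{2^{-\Ld},\delta\}/\epsilon)$ form.
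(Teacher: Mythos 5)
Your upper bound is essentially the paper's construction (run the one-dimensional strategy per coordinate with privacy parameter $\lceil\Ld\rceil$, and observe that the resulting information set contains $\lceil\Ld\rceil^d$ well-separated $\epsilon$-cubes), and it goes through.

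The lower bound, however, has a genuine gap, and it is not the one you flagged. Beyond the (real) difficulty that the ``conditional substrategy in dimension $i$'' is not a bona fide one-dimensional strategy --- queries in other dimensions may adapt to responses in dimension $i$, so you cannot fix the outside realization independently of $X_i^*$, and the conditional information set need not cover the projection $\mathcal{I}_i(\bar q)$ --- the decisive problem is the summation step. Theorem~\ref{thm:freq} is a worst-case statement: it asserts that \emph{there exists some} $X_i^*$ forcing at least the stated number of dimension-$i$ queries. When you fix a single realized sequence $\bar q$ (hence a single $X^*$) and write $\sum_i n_i = |\bar q|\le n$, you need every coordinate to require many queries \emph{for that same} $X^*$; the worst-case witnesses from $d$ separate invocations of the one-dimensional theorem generally live at $d$ different values of $X^*$ and cannot be combined. (There is also the minor issue that some $L_i$ may equal $1$ or violate $\delta\le 1/L_i$, putting them outside the hypotheses of Theorem~\ref{thm:freq}.) The paper circumvents exactly this by not invoking the one-dimensional theorem as a black box: it extracts from the one-dimensional proof the stronger per-coordinate claims that hold \emph{for all} $X_i^*$ in a suitable interval $J_i$ (at least $\log(1/\delta)-3$ well-separated queries outside $J_i$, and for each such box an $x_i$ forcing $\log(\delta/\epsilon)$ queries inside), which makes the coordinates simultaneously realizable by a single $X^*\in\prod_i J_i$; the privacy contribution is then obtained directly by counting the $\epsilon$-close query pairs per coordinate, noting that the information set is contained in a union of at most $\prod_i m_i$ hyperrectangles, and applying AM--GM to $\prod_i m_i\ge L$. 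Your product-of-covering-numbers observation and the AM--GM step are the right ingredients, but they must be applied to quantities witnessed by one common $X^*$, not to $d$ independent worst cases.
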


From the upper and lower bounds in the theorem statements, we see that in $d$-dimensions the optimal query complexity suffers from a multiplicative factor of $d$. This is consistent with the optimal query complexity $d\log (1/\epsilon)$ when there is no privacy constraint. The query complexity for each dimension depends on $\Ld=L^{1/d}.$ As $d$ grows, the price to pay for privacy per dimension decreases. In the extreme case where $d\rightarrow \infty$ with $L$ fixed, the optimal query complexity behaves like $d\log (1/\epsilon)$ in both the Bayesian and the deterministic settings, making the privacy constraint obsolete in high dimensions.

One interesting direction to strengthen Theorem~\ref{thm:bayes.ddim} and Theorem~\ref{thm:freq.ddim} is to allow the learner to query ``is $X^*$ in $H$?" where $H$ is an arbitrary half-space in $\mathbb{R}^d$. The upper bounds are still valid since every comparison query corresponds to a half-space. However for both the Bayesian and the deterministic setting, our current lower bound proof strategies do not accommodate this wider class of queries. This variant of the problem was studied by~\cite[Theorem~2]{xu2019query} under the Bayesian setting, where the author gives a lower bound of $c_1\delta^{d-1}L\log(\delta/\epsilon)-c_2 L$ for constants $c_1,c_2$ that depend on $d$. The lower bound is obtained via a hyperplane transversality argument. More specifically, divide $[0,1]^d$ into $\delta$-wide cubes and consider an adversary who samples from the cubes that intersect with the queried hyperplanes. The maximum number of cubes each hyperplane can intersect with grows like $\delta^{-(d-1)}$, resulting in the $\delta^{d-1}$ factor in the lower bound. However this dependence on $\delta$ and the dimension is far from desirable. We conjecture that allowing the learner to query arbitrary half-spaces does not help lower the query complexity, and that the querying strategies we construct for the proofs of Theorem~\ref{thm:bayes.ddim} and Theorem~\ref{thm:freq.ddim} remain optimal.

\subsection{Noisy responses}
The following theorem is our main result when the responses are noisy. Recall that each response is corrupted with probability $1-p$. In short, we are able to characterize the optimal query complexities up to constants that only depend on $p$.
\begin{theorem}\label{thm:noisy}
Assume that $4\epsilon\leq \delta\leq 1/L$. Then
\begin{equation}
\label{eq:res.mean}
\frac{1}{2c_2(p)}\max\left\{L\log\frac{\delta}{16\epsilon}, \; \log\frac{1}{8\epsilon}\right\}
\leq N_\mathsf{avg}\left(\epsilon,\delta,L\right)
\leq \left(\frac{14}{c_3(p)}+\frac{7}{c_4(p)}\right)\left(\log \frac{1}{\epsilon} + L\log\frac{64\delta}{\epsilon}\right),
\end{equation}
and
\begin{align}
\nonumber& \max\left\{\frac{L}{2c_2(p)}\log \frac{\delta}{8\epsilon}, \;
\frac{1}{2c_2(p)}\log \frac{1}{4\epsilon}, \;
\frac{L}{2c_1(p)}\log\frac{M}{8}\right\}\\
 \label{eq:res.prob} & \leq  N_\mathsf{whp}\left(\epsilon,M, \delta,L\right)
\leq \left(\frac{8}{c_3(p)} + \frac{7}{c_4(p)}\right)\left( \log\frac{1}{\epsilon} + L\log\frac{12 M\delta}{\epsilon}\right),
\end{align}
where 
\begin{align*}
c_1(p) = & D(\text{Bern}(1-p)||\text{Bern}(p))=(1-p)\log\frac{1-p}{p}+p\log\frac{p}{1-p},\\
c_2(p) = & h(1/2)-h(p),\;\;\;\text{with }h(p)=H(\text{Bern}(p)) = -p\log p - (1-p)\log (1-p),\\
c_3(p) = & (p-1/2)^2 \log e,\\
c_4(p) = & D(\text{Bern}(1/2)||\text{Bern}(p))=\tfrac{1}{2}\left(\log\frac{1}{2p}+\log\frac{1}{2(1-p)}\right)
\end{align*}
are constants that only depend on $p$.
\end{theorem}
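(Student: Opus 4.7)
I would adapt the two-phase architecture from Section~\ref{sec:newIdeas} to the noisy setting by replacing each ``effective'' noiseless bisection step with a short noise-tolerant subroutine. In the \emph{pure-learning} phase, at every bisection step the learner submits the midpoint $O(1/c_3(p))$ times and takes a majority vote; by Hoeffding's inequality (whose exponent is precisely $c_3(p)=(p-1/2)^2\log e$), each step errs with probability below a small constant, so $O(\log(1/(L\delta))/c_3(p))$ queries suffice to localize $X^*$ inside an interval $I^*$ of length $\Theta(L\delta)$ with high probability. In the \emph{private-refinement} phase, $I^*$ is partitioned into $L$ equal sub-intervals $J_1,\ldots,J_L$; the learner then runs the same noise-robust bisection inside the true $J_\ell$ containing $X^*$, while in parallel submitting mirrored ``decoy'' queries at the corresponding locations of every other $J_{\ell'}$, whose responses are drawn from independent Bernoulli coins \emph{uncoupled from $X^*$}. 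This design enforces that the joint law of $(q_1,\ldots,q_n)$ is invariant under permutations of $\{J_1,\ldots,J_L\}$, which together with Bayes' rule ensures $(\delta,L)$-privacy. For the high-probability accuracy definition~\eqref{eq:res.prob}, I append a confirmation block of $O(L\log M/c_4(p))$ queries using KL-type concentration (hence the $1/c_4(p)$ factor) to boost each per-$J_\ell$ estimate's success probability to $1-1/M$.

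\textbf{Information-theoretic lower bounds.} The lower-bound terms scaled by $1/c_2(p)$ follow from a rate-of-information-transfer argument. For every adaptive strategy one has $I(X^*;r_i\mid q_i,\text{past})\le 1-h(p)=c_2(p)$, so after $n$ queries $I(X^*;r_1,\ldots,r_n)\le n\,c_2(p)$. Combining this with Fano's inequality applied to the event $\{|\widehat X-X^*|\le \epsilon/2\}$ yields $n\ge \log(1/\epsilon)/c_2(p)$. To obtain the privacy-driven $L\log(\delta/\epsilon)/c_2(p)$ term I would deploy the truncated proportional-sampling adversary sketched in Section~\ref{sec:newIdeas}: $(\delta,L)$-privacy forces the learner's query-marginal to be compatible with at least $L$ nearly equally plausible $\delta$-intervals containing $X^*$, so one effectively pays the per-interval rate cost $\log(\delta/\epsilon)/c_2(p)$ inside each, producing an additional multiplicative $L$. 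These two arguments together deliver the lower bound in~\eqref{eq:res.mean} and the first two terms inside the $\max$ in~\eqref{eq:res.prob}.

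\textbf{Bhattacharyya bound for high-probability accuracy, and the main obstacle.} The remaining $L\log M/c_1(p)$ term in~\eqref{eq:res.prob} requires a finer tool than Fano, since the target error $1/M$ can be much smaller than any constant. Following the program of Section~\ref{sec:newIdeas}, I reduce the estimation problem to a family of $\epsilon$-separated binary hypothesis tests and lower bound the minimum test error via the Bhattacharyya coefficient $\rho(p)=2\sqrt{p(1-p)}$, obtaining $P_\mathrm{err}\ge \tfrac12\rho(p)^n$ with exponent governed by $c_1(p)=D(\mathrm{Bern}(1-p)\|\mathrm{Bern}(p))$. Requiring $P_\mathrm{err}\le 1/M$ inside each of the $L$ indistinguishable $\delta$-intervals forced by privacy yields the claimed $L\log M/c_1(p)$ lower bound. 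The principal obstacle throughout is adaptivity: each $q_i$ is an intricate function of past responses, and the adversary observes only $(q_1,\ldots,q_n)$. Consequently, the privacy analysis must carefully track the posterior of $X^*$ under the adversary's reduced $\sigma$-algebra while the accuracy analysis tracks the posterior under the learner's richer $\sigma$-algebra, and the Bhattacharyya/Chernoff step must be applied through a chain-rule argument. As anticipated in Section~\ref{sec:newIdeas}, it is this simultaneous control of two intractable posteriors, in the absence of any closed-form expression, that is the central technical difficulty of the noisy extension.
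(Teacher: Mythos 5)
Your lower-bound program matches the paper's: the $1/c_2(p)$ terms come from exactly the rate-of-information-transfer bound you describe (formalized in the paper as Lemma~\ref{lmm:learner.error}, which localizes the bound $I(X^*;r_{i+1}\mid\cdot)\le h(1/2)-h(p)$ to queries landing in a fixed interval $J$, then applies the continuous Fano inequality), combined with a proportional-sampling adversary (plain, not truncated, in the noisy case); and the $L\log(M)/c_1(p)$ term comes from the same reduction to $\epsilon$-separated binary tests bounded via the Bhattacharyya coefficient, with Jensen's inequality converting the random number of separating queries into its conditional expectation — which is precisely where the exponent $c_1(p)$ (rather than the per-query value $2c_4(p)$ your formula $\rho(p)=2\sqrt{p(1-p)}$ suggests) enters. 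Your identification of adaptivity and the two posteriors as the central difficulty is also on target.

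The genuine gaps are in the upper bound. First, the search subroutine: repeating each bisection midpoint $O(1/c_3(p))$ times and majority-voting so that ``each step errs with probability below a small constant'' does not localize $X^*$ correctly with the probability you need. For the average-accuracy criterion a phase-one failure contributes error up to $1$ to $\mathbb{E}|\widehat X-X^*|$, so the failure probability must be driven down to $O(\epsilon)$; with independent per-step majority votes over $\log\frac{1}{L\delta}$ adaptive steps this forces either a $\log\log$ or a $\log(1/\epsilon)$ blow-up per step, and the claimed $O(\log\frac{1}{L\delta}/c_3(p))$ total is not achieved. The paper avoids this by running the Burnashev--Zigangirov algorithm, whose error after $n$ total queries is at most $\frac{|I|}{\Delta}2^{-c_3(p)n}$ with no per-step amplification. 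Second, you skip the stage in which the learner determines \emph{which} of the $\delta$-length sub-intervals of $I$ contains $X^*$: this requires $m=\Theta(\log(\delta/\epsilon)/c_4(p))$ repeated queries at each of the $L'-1$ endpoints followed by a maximum-likelihood selection, and it is the source of the $7/c_4(p)$ term in both upper bounds (not a ``confirmation block'' specific to the whp version). Third, the privacy accounting is more delicate than permutation invariance over $L$ sub-intervals: one must handle the events that phase one mislocalizes $X^*$ or that the selected sub-interval $\widehat k$ differs from the true $k^*$ (in which case the replicated trajectory is not a search toward $X^*$ at all), and a $\delta/2$-ball can straddle two adjacent sub-intervals. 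The paper therefore uses $L'=7L$ replicas and proves conditional independence of $k^*$ from the query sequence separately on the events $\{\widehat k<k^*\}$, $\{\widehat k>k^*\}$, $\{\widehat k=k^*\}$, bounding each contribution by $2/L'$; with only $L$ replicas and no case analysis the privacy constraint $\mathbb{P}\{|\widetilde X-X^*|\le\delta/2\}\le 1/L$ would fail.
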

%\nbr{Do we really need to keep $1-1/M$ in the theorem statement? If $M\ge 2$, then $1-1/M\ge 1/2$.}
%\nb{The theorem is now restated with $1-1/M$ replaced with $1/2$. The proofs are updated accordingly.}
%[Comment on the slightly different parameter regime? It seems necessary for proving $N_b\gtrsim L\log M$.]

It follows from~\eqref{eq:res.mean} and~\eqref{eq:res.prob} and the basic inequality $\max\{a,b\}\geq (a+b)/2$ that there exist constants $c_5,c_6,c_7,c_8$ that only depend on $p$, such that
\[
c_5\left(L\log\frac{\delta}{\epsilon} + \log\frac{1}{\epsilon}\right) \leq N_\mathsf{avg}(\epsilon,\delta,L) \leq c_6\left(L\log\frac{\delta}{\epsilon} + \log\frac{1}{\epsilon}\right),
\]
\[
c_7\left(L\log\frac{M\delta}{\epsilon} + \log\frac{1}{\epsilon}\right) \leq N_\mathsf{whp}(\epsilon,M,\delta,L)\leq c_8\left(L\log\frac{M\delta}{\epsilon} + \log\frac{1}{\epsilon}\right).
\]
As we mentioned in the Section~\ref{sec:problem}, definition (b) is a stronger condition on the learner's accuracy than (a) when $M$ is large. As a result an extra additive factor of order $L\log M$ shows up in $N_\mathsf{whp}(\epsilon,M,\delta,L)$. The intuition behind the $L\log M$ factor is that there must be at least $\Omega(\log M)$ queries near $X^*$ to achieve an error probability of $1/M$; these queries then need to be duplicated $L$ times to disguise the location of $X^*$ from the adversary.
%\nbr{Maybe we can briefly explain why we expect a $L\log M$ factor. Loosely speaking, I guess this is because we need
%$\log M$ queries to achieve an error probability of $1/M$ in view of Chernoff's bound.}
%\nb{The sentence above is added to provide some intuition on this factor.}

Compared with the noiseless response setting where we obtained tight control on the optimal query complexity, the results for the noisy response case are only up to multiplicative constants that depend on $p$. We remark that even for the vanilla noisy binary search problem without privacy consideration, the precise $p$-dependent constant remains an open problem.

When $p=1/2$ the problem becomes completely noisy and the responses provide no information. As $p\rightarrow 1/2$, the optimal query complexity should go to infinity, which is reflected by the fact that $c_1,c_2,c_3,c_4$ all converge to 0. In fact, Taylor expansions around $p=1/2$ reveal that $c_i(p) \asymp (p-1/2)^2$ for all $i\in [4]$. As a result, the constants $c_5,c_6,c_7,c_8$ go to infinity at the same rate as $p\rightarrow 1/2$.

\section{The querying strategies with noiseless responses}\label{sec:alg.noiseless}
In this section we describe our querying strategies when the responses do not contain noise, and argue heuristically why our constructions lead to the optimal query complexity. Section~\ref{sec:proof} contains the full proofs of our results, where these heuristic arguments are made precise. 
%As before we discuss the Bayesian and deterministic settings separately.

\subsection{The Bayesian setting}\label{sec:alg.bayes}
When constructing a querying strategy, we want it to possess the merits of accuracy, meaning that the learner can learn $X^*$ well with probability one;  privacy, meaning that the adversary cannot learn $X^*$ well with probability greater than $1/L$; and efficiency, meaning that the strategy submits as few queries as possible. As discussed in the Introduction, the bisection method is the most efficient, but not private. The grid search, on the contrary, is almost completely private, but very inefficient.

In~\cite{tsitsiklis2018private} a querying strategy named \emph{replicated bisection} is proposed, which can be viewed as a combination of the grid search and bisection. First the learner divides $[0,1]$ into $L$ subintervals $I_1,...,I_L$ of equal length, and queries all the endpoints to determine the subinterval $I_{i^*}$ that contains $X^*$. She then runs a bisection search on $I_{i^*}$, submitting replicated queries in all the other subintervals at the same time. The replicated bisection method is private since the adversary cannot discern which subinterval contains $X^*$ without observing the responses. However,  it can be vastly inefficient, as its query complexity increases multiplicatively in $L$ due to the replication.

Notice that the replicated bisection method is not adapted to $\delta$, even though the learner is expected to take fewer queries to achieve $(\delta,L)$-privacy when $\delta$ is small. In fact when $\delta$ is small, it is rather wasteful and unnecessary to repeat bisection searches on subintervals of length $1/L$. It was shown in~\cite{tsitsiklis2018private} that the replicated bisection strategy incurs a query complexity $L\log (1/(L\epsilon))+L-1$, which entails that the cost for privacy under replicated bisection is roughly a multiplicative factor of $L$ regardless of the range of $\delta$. Next we present our $\delta$-adaptive querying strategy, which is much more efficient for small values of $\delta$. In particular, we discover that when $\delta$ is proportional to $\epsilon$, the cost for privacy should be additive in $L$ instead of multiplicative.

By the definition of $(\delta,L)$-privacy, the learner only needs to safeguard $X^*$ against the adversary making a $\delta$-accurate guess. In this sense, we want the querying strategy to create $L$ subintervals of length $\delta$, which  are equally likely to contain $X^*$ from the adversary's perspective. To that end, we construct the following multistage querying strategy (precise description in Algorithm~\ref{alg:bayes} in the appendix):
\begin{enumerate}
\item Run bisection search on $[0,1]$ for $K_1$ steps to locate $X^*$ within an interval $I$ of length $2^{-K_1}\approx L\delta$;
\item Divide $I$ into $L$ subintervals $I_1,..., I_L$ of equal length (about $\delta$). Query the $L-1$ endpoints of all the subintervals (the two endpoints of $I$ were already queried in stage 1) to determine which subinterval contains $X^*$.
\item Say $I_{i^*}$ is the true subinterval which contains $X^*$. Run bisection search on the $I_{i^*}$ for $K_2$ steps until $\epsilon$-accuracy is achieved, while submitting cloned queries in the other $L-1$ subintervals in parallel. 
\end{enumerate}
See Fig.~\ref{fig:bayes.strategy} for a graphical illustration.
Since it takes about $\log(\delta/\epsilon)$ for the bisection on an interval of length $\delta$ to reach accuracy $\epsilon$, the total number of queries submitted is roughly $\log\frac{1}{L\delta} + (L-1) + L\log\frac{\delta}{\epsilon}$. See Section~\ref{sec:proof.bayes} for the full proof that the multistage querying strategy achieves the upper bound in Theorem~\ref{thm:bayes}.

%\begin{center}
\begin{figure}[h]
\includegraphics[width = \textwidth]{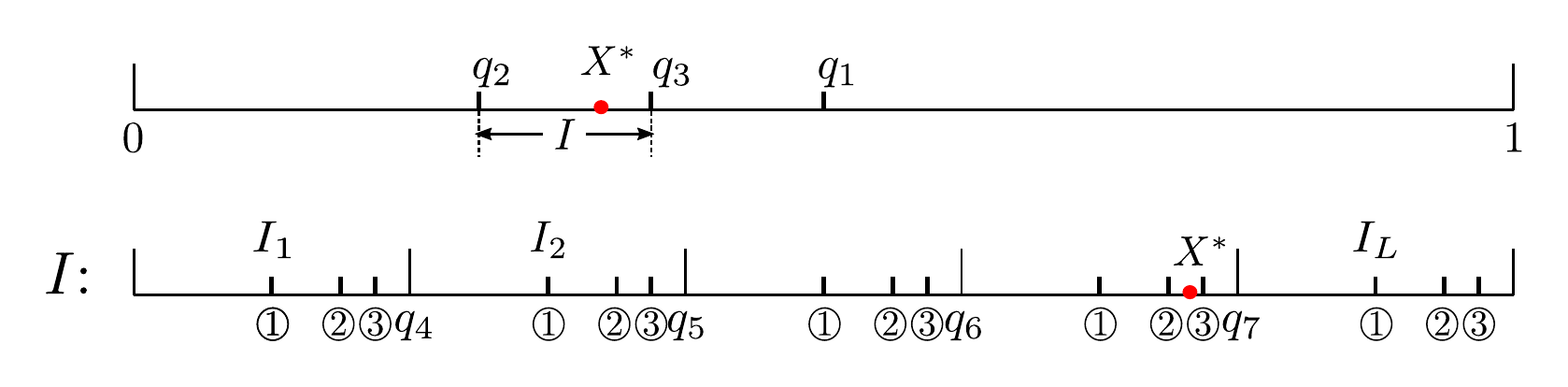}
\captionof{figure}
{An example of the querying strategy with $L=5$, $K_1=3$, $K_2=3$ under the Bayesian setting. The learner first runs $K_1$ steps of bisection to locate $X^*$ within $I$. Divide $I$ into $L$ equal length subintervals $I_1,...,I_L$. By querying the endpoints $q_4,...,q_7$ of the subintervals, the learner locates the subinterval that contains $X^*$, in this case $I_4$. She then proceeds to submit $K_2$ batches of queries. The first, second and third batches of queries submitted are labeled \textcircled{\raisebox{-0.9pt}{1}}, \textcircled{\raisebox{-0.9pt}{2}}, \textcircled{\raisebox{-0.9pt}{3}} respectively. On $I_4$, the queries are submitted via bisection while clones are submitted on the other subintervals in parallel.}
\label{fig:bayes.strategy}
\end{figure}
%\end{center}

The optimal querying strategy for when $X^*\in \mathbb{R}^d$ is based on the one-dimensional strategy. Since accuracy of the adversary is measured in terms of the $\|\cdot\|_\infty$ norm, the learner only needs to run replicated bisection on $L^{1/d}$ subintervals in each dimension to ensure that $\mathbb{P}\{\|\widetilde{X}-X^*\|_\infty\leq \delta\}\leq (L^{-1/d})^d=1/L$. See Section~\ref{sec:d.dim} for details.

\subsection{The deterministic setting}\label{sec:alg.freq}
Recall that under the deterministic setting, a querying strategy is called $(\delta, L)$-private if for each query sequence $\bar{q}$, the $\delta$-covering number of the information set $\mathcal{I}(\bar{q})$ is at least $L$. Compared with the Bayesian setting, the privacy requirement is weaker, as the learner only needs to create $L$ possible locations for $X^*$ that are at least $\delta$ apart. 

To achieve $(\delta,L)$-privacy, a querying strategy named \emph{opportunistic bisection} is proposed in~\cite[Theorem~4.1]{tsitsiklis2018private}. First the learner submits $L$ pairs of queries $((i-1)/L, (i-1)/L+\epsilon)_{i=1,...,L}$, known as $L$ {\it guesses}.
%. The authors of~\cite{tsitsiklis2018private} referred to this step as submitting $L$ {\it guesses}. 
For each guess submitted, the learner is effectively testing a hypothesis $X^*\in I_i=[(i-1)/L, (i-1)/L+\epsilon)$. If none of the guesses is correct, then the learner proceeds to run a bisection search on the $1/L$-length subinterval that contains $X^*$. If one of the guesses is correct, say $X^*\in I_{i^*}$, then the accuracy requirement is already achieved by taking $\widehat{X}$ to be the midpoint of $I_{i^*}$. However to disguise this finding from the adversary, the learner runs a ``fake" bisection search on a randomly selected $1/L$-length subinterval. By ``fake" bisection we mean a simulated bisection search where the binary responses are generated {\it i.i.d.} Bernoulli(1/2). The opportunistic bisection method is private since the adversary cannot rule out the possibility that $X^*\in I_i$ for some $i=1,...,L$. In other words, the information set $\mathcal{I}(\bar{q})$ always contains $\cup_{i\leq L}I_i$. Hence it has $\delta$-covering number no less than $L$. 

First notice that just like replicated bisection, the opportunistic bisection is not adapted to $\delta$. As a result it does not enjoy savings when $\delta$ is small. Furthermore, the guesses are submitted on a grid, which is highly inefficient. Ideally, we want the guesses to not only help conceal the location of $X^*$ from the adversary, but also help the learner locate $X^*$ at the same time. 

Naturally, the most efficient way to submit the guesses is via a bisection search. The problem with bisection is that when $\delta$ is large, some of the $L$ guesses may not be $\delta$ apart from each other. In a way, the bisection search is too aggressive when $\delta>2^{-L}$, which calls for a more sophisticated query sequence construction. Below is the construction of our $\delta$-adaptive querying strategy:

\begin{enumerate}
\item Submit $L$ guesses that are at least $\delta$ apart. This further breaks down into two cases, depending on the value of $\delta$:

If $\delta\leq 2^{-L}$, submit the guesses via a bisection search. That is, the first guess is at $1/2$, the second guess is at $1/4$ if $X^*<1/2$ and at $3/4$ otherwise, etc. However if at any point a guess turns out to be correct ($X^*\in [s,s+\epsilon)$ for a guess at $s$), then in order to hide this knowledge from the adversary, the learner keeps submitting guesses via a fake bisection search using random responses distributed {\it i.i.d.} Bernoulli(1/2).

If $\delta>2^{-L}$, submit the first guess at 0. The next $K$ guesses are submitted via a bisection search, locating $X^*$ in a interval $I$ of length $2^{-K}$. As in the previous case, transition into a fake bisection search whenever a guess is found to contain $X^*$. Submit the rest of the $(L-K-1)$ guesses through a grid search on $I$. Here $K$ is chosen to be an integer in $\{0,1,...,L-1\}$ for which $2^{-K}/(L-K)\in [\delta,2\delta]$\footnote{We prove in Section~\ref{sec:proof.freq} in the appendix that such $K$ always exists. In fact the initial guess at 0 is to ensure existence of an integer solution for $K$.}. In this way the closest pair of guesses are made as close as possible, while still being at least $\delta$ apart.

\item If none of the $L$ guesses made in stage 1 is correct, then through the $L$ guesses the learner should locate $X^*$ within an interval $J$ of length about $\max\{2^{-L}, \delta\}$. Run a bisection search on $J$ until $\epsilon$-accuracy is reached. If any of the guesses is correct, replace this step with a fake bisection search on a simulated interval $J$. When $\delta\leq 2^{-L}$, $J$ is obtained from the last step of the fake bisection search in stage 1;  when $\delta>2^{-L}$, $J$ is selected from the $L-K$ subintervals of $I$ uniformly at random.
% \nbr{Here it is unclear which previous step you're referring to}.
\end{enumerate}

Examples of the above querying strategy is illustrated in Fig.~\ref{fig:freq_small_delta} (when $\delta\leq 2^{-L}$) and Fig.~\ref{fig:freq_large_delta} (when $\delta>2^{-L}$). Since each guess contains 2 queries, the first stage involves $2L$ queries. The total number of queries submitted under our querying strategy is roughly $2L + \log (\max\{2^{-L},\delta\}/\epsilon)$. See Section~\ref{sec:proof.freq} in the appendix for the precise descriptions of our querying strategy and the proof of the upper bound in Theorem~\ref{thm:freq}.

%\begin{center}
\begin{figure}[h]
\includegraphics[width = \textwidth]{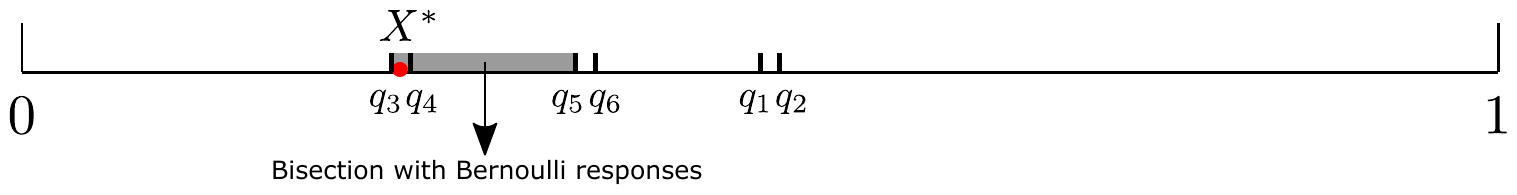}
\captionof{figure}{An example of the querying strategy under the deterministic setting when $\delta\leq 2^{-L}$, with $L=3$. From the response to the first four queries the learner deduces that $X^*$ is between $q_3$ and $q_4=q_3+\epsilon$. The learner proceeds to run a ``fake'' bisection in $[q_3,q_5)$ by generating Bernoulli responses to confuse the adversary. From the perspective of the adversary, $X^*$ could be in any of the three length-$\epsilon$ subintervals.}
\label{fig:freq_small_delta}
\end{figure}
%\end{center}

%\begin{center}
\begin{figure}[h]
\includegraphics[width = \textwidth]{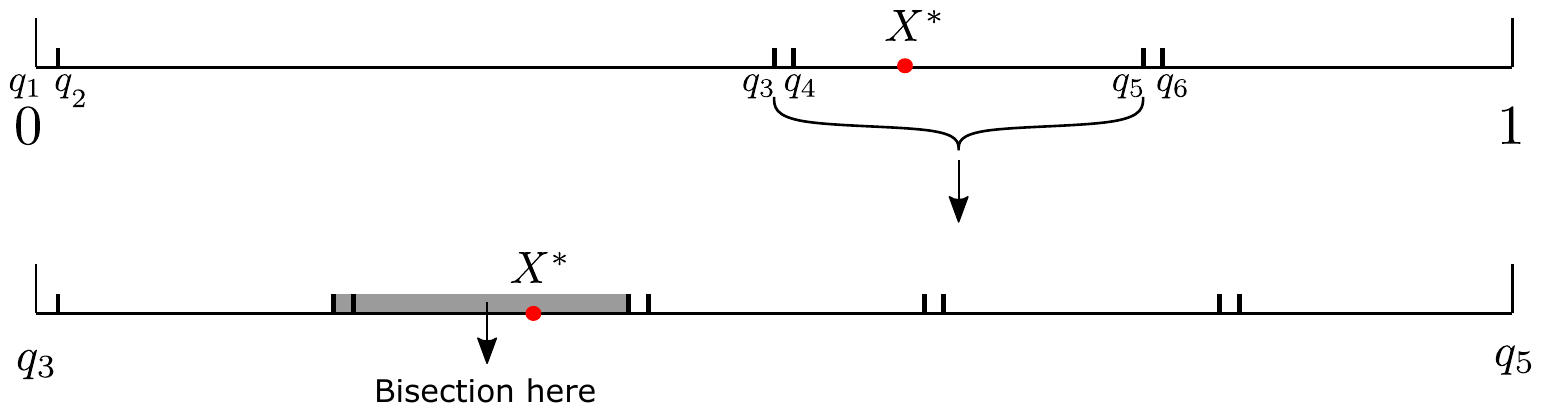}
\captionof{figure}{An example of the querying strategy under the deterministic setting when $\delta>2^{-L}$, with $L=7$ and $K=2$. The first guess is at 0. The learner submits the next $K$ guesses via bisection to locate $X^*$ in $[q_3,q_5)$. She then partitions $[q_3,q_5)$ into $L-K$ equal length subintervals. Eventually through bisection the learner is able to approximate $X^*$ up to accuracy $\epsilon$. But from the perspective of the adversary, $X^*$ could be in $[q_{2i-1},q_{2i})$ for any $i\leq L$.}
\label{fig:freq_large_delta}
\end{figure}
%\end{center}

When $X^*\in \mathbb{R}^d$, each {\it guess} corresponds to a cube in $\mathbb{R}^d$ of diameter $\delta$. By running the one-dimensional algorithm submitting $L^{1/d}$ guesses in each dimension, the learner forms $L$ guesses in $\mathbb{R}^d$. See Section~\ref{sec:d.dim} for the detailed description of the querying strategy in $d$ dimensions.

\section{Lower bound proof strategies with noiseless responses}\label{sec:lower.noiseless}
In this section, we sketch our lower bound proof strategies for Theorem~\ref{thm:bayes} and Theorem~\ref{thm:freq}. The rigorous proofs are deferred to Section \ref{sec:proof}. The multidimensional lower bound proofs are developed in Section~\ref{sec:d.dim} and build upon the arguments in this section.

\subsection{Bayesian setting}\label{sec:lower.bayes}
The lower bound is shown by constructing an intelligent adversary so that the learner cannot disguise the location of $X^*$ without a certain number of queries. The adversary strategy considered in~\cite{xu2018query} is called \emph{proportional-sampling}, which samples from all the queries proportionally. In particular, given an observed query sequence $q_1,...,q_n$, the proportional-sampling estimator is defined as $\widetilde{X}=q_J$, where $J\sim \text{Unif}\{1,...,n\}$.

We argue that the proportional sampling is not the optimal estimation strategy for the adversary. Since the first few queries are very unlikely to be close to $X^*$, the adversary suffers unnecessary loss whenever those queries are sampled. Instead we propose \emph{truncated proportional-sampling} scheme: disregard the first $K= \lfloor\log(1/(L\delta))\rfloor$ queries and proportionally sample from $q_{K+1}, ..., q_n$. Notice here a subtle but interesting duality between the learner and the adversary: the number of queries disregarded by the adversary $K$ is exactly the number of queries submitted in stage 1 of the learner's optimal querying strategy. However, in the proof of our lower bound, this argument with discarded queries is effective against any learner strategy, regardless of what the learner tries to achieve with these early queries. 

%Even though the adversary seems to throw away useful information, since  the first $K$ queries are very unlikely to be close to $X^*$, 
%it follows that $n>K$ and the adversary in fact achieves more accurate estimation of $X^*$. 

For any querying strategy that is $(\delta,L)$-private, it must satisfy $\mathbb{P}\{|\widetilde{X}-X^*|\leq \delta/2\}\leq 1/L$ for all adversary strategies. Suppose the adversary's estimator $\widetilde{X}$ is obtained through the truncated proportional-sampling, then we have

\begin{equation}\label{eq:prop.sampling}
\mathbb{P}\left\{|\widetilde{X}-X^*|\leq \delta/2\right\}=\frac{\sum_{i=K+1}^n\mathbb{P}\{|q_i-X^*|\leq \delta/2\}}{n-K}.
\end{equation}
The numerator can  be interpreted as the expected number of queries among $q_{K+1}, ..., q_n$ that are in $I=[X^*-\delta/2, X^*+\delta/2]$. Loosely speaking, since $I$ is a length-$\delta$ interval, the learner needs to submit at least $\log(\delta/\epsilon)$ queries in $I$ to estimate $X^*$ within $\epsilon$-accuracy. Out of these queries, we show that they are almost all taken from $q_{K+1}, ..., q_n$, because with high probability the learner simply wouldn't have enough information during the first $K$ queries to locate a small enough neighborhood of $X^*$. As a result, $\log(\delta/\epsilon)$ roughly serves as a lower bound for the numerator in~\eqref{eq:prop.sampling}. Deduce from $\mathbb{P}\{|\widetilde{X}-X^*|\leq \delta/2\}\leq 1/L$ that $n\geq K+L\log(\delta/\epsilon)$, which only differs from the precise lower bound in Theorem~\ref{thm:bayes} by an additive factor of $2L+1$. 
See Section~\ref{sec:proof.bayes} in the appendix for a rigorous lower bound proof.

%\nbr{Somewhere here we may want to briefly emphasize
%the duality between the learner and the adversary:
%the number of queries to disregard is exactly the number of queries submitted in stage 1 of the learner’s optimal strategy. 
%}
%\nb{A sentence is added after the description of truncated proportional-sampling that points out this duality.}

\subsection{Deterministic setting}\label{sec:lower.freq}

As in the upper bound proof, we discuss the $\delta>2^{-L}$ case and $\delta\leq 2^{-L}$ separately.

{\bf Case 1: $\delta>2^{-L}$}. We only need a lower bound of $\lceil\log(\delta/\epsilon)\rceil+2L-4$ since it is always above $\lceil \log(1/\epsilon)\rceil+L-8$ in this regime. We adopt the lower bound shown in~\cite[Theorem~4.1]{tsitsiklis2018private}
$
N(\epsilon,\delta,L)\geq 2L+\log\frac{\delta}{\epsilon}-4,
$
which when $\delta>2^{-L}$, is almost tight in view of the upper bound Theorem~\ref{thm:freq}.
For completeness we sketch their proof here. Fix an $\epsilon$-accurate and $(\delta,L)$-private querying strategy $\phi$ and let $I=[0,\delta]$. 
On the one hand, for some $X^*\in I$, there are at least $\log(\delta/\epsilon)$ queries in $I$ by the optimality of the bisection search method. 
On the other hand, note that for a point $x$ to belong to the information set $\mathcal{I}(\bar{q})$, there must be two queries that are at most $\epsilon$ apart on opposite sides of $x$; otherwise, the learner cannot be $\epsilon$-accurate. 
Since the $\delta$-covering number of the information set is at least $L$, there are at least $L$ pairs of queries that are at most $\epsilon$ apart. The interval $I$ is of length only $\delta$, so almost all these $L$ pairs of queries are outside of $I$, yielding a total of roughly $2L+\log (\delta/\epsilon)$ queries.

This proof strategy however, cannot yield a tight lower bound when $\delta$ is small. By fixing $I=[0,\delta]$, it disregards the cost of finding the interval $I$ of length $\delta$ that contains $X^*$. Since the learner does not know $X^*\in I$ a priori, she would need to submit more than $2L$ queries outside of $I$. 
When $\delta\leq 2^{-L}$, proving a tight lower bound for the number of queries outside of $I$ turns out to involve much more sophisticated analysis.

{\bf Case 2: $\delta\leq 2^{-L}$}. The lower bound of $\lceil\log(\delta/\epsilon)\rceil+2L-4$ continues to hold when $\delta$ is small. Thus it suffices to show there is at least one true value $X^*\in[0,1]$ for which the learner submits at least
$L+\lceil\log(1/\epsilon)\rceil-8$ queries. 
On a high level, we prove this lower bound by finding an interval $I$ roughly of length $\delta$ and $X^*\in I$ such that when $X^*$ is the true value, there are at least $\log(\delta/\epsilon)$ queries in $I$ and $L+\log(1/\delta)-8$ queries outside of $I$. 
%Within $I$, it takes at least $\log (\delta/\epsilon)$ queries to approximate $X^*$ up to $\epsilon$ accuracy. 
To prove the lower bound for the number of queries outside of $I$, we show that there are roughly $\log(1/\delta)$ queries that are at least $\delta$ away from each other. Moreover, there are at least roughly $L$ pairs of queries that are at most $\epsilon$ apart to ensure that the $\delta$-covering number of the information set is at least $L$, which contribute around $L$ extra queries outside of $I$. Hence there are at least around $\log(\delta/\epsilon)+\log(1/\delta)+L=\log(1/\epsilon)+L$ queries needed by $\phi$. The key challenge lies in showing existence of such an interval $I$, as taking $I=[0,\delta]$ no longer works.

\section{The analysis with noisy responses}\label{sec:alg.noisy}
In this section we give the construction of the querying strategies that achieve the upper bounds in Theorem~\ref{thm:noisy} when the responses are noisy, and discuss the lower bound proof techniques. Our querying strategies relies heavily on an existing search algorithm known as the Burnashev-Zigangirov(BZ) algorithm~\cite{burnashev1974interval}. For completeness, we give in Section~\ref{sec:BZ} a brief description of the BZ algorithm and its statistical properties.

\subsection{Background: the Burnashev-Zigangirov algorithm}\label{sec:BZ}
Suppose $[0,1]$ is divided into $1/\Delta$ (assumed to be an integer) equal length subintervals, labeled $I_1,...,I_{1/\Delta}$ from left to right. Let $J$ denote the subinterval that contains the true value $X^*$. The BZ algorithm is a selection procedure that returns $\widehat{J}$, an estimator of $J$. 

Since $X^*$ is distributed uniformly on $[0,1]$, the algorithm starts from a uniform distribution $\mu_1$ on $[0,1]$, which can be viewed as a priori belief distribution on the location of $X^*$. Each time the learner observes a response $R_j$ to a query $X_j$, where $R_j\sim \text{Bernoulli}(p)$ if $X^*\geq X_j$ and 
$R_j \sim \text{Bernoulli}(1-p)$ if $X^*<X_j$ for some $p\in (1/2,1)$. Then the belief distribution is updated as follows:
\[
\frac{d\mu_{j+1}}{d\mu_j}(x) = 
\begin{cases}
\frac{2(1-\alpha)\mathds{1}\{x\in [0,X_j)\} + 2\alpha\mathds{1}\{x\in [X_j,1]\}}{\int \left(2(1-\alpha)\mathds{1}\{y\in [0,X_j)\} + 2\alpha\mathds{1}\{y\in [X_j,1]\}\right) \mu_{j}(dy)} & \text{if }R_j=1;\\
\frac{2\alpha\mathds{1}\{x\in [0,X_j)\} + 2(1-\alpha)\mathds{1}\{x\in [X_j,1]\}}{\int \left(2(1-\alpha)\mathds{1}\{y\in [0,X_j)\} + 2\alpha\mathds{1}\{y\in [X_j,1]\}\right) \mu_{j}(dy)} & \text{if }R_j=0,
\end{cases}
\]
%\nbr{JX. Is there any reason to put factor of $2$ in the above expression?}
where $\alpha\in (1/2,p)$ is a parameter whose value will be later specified. Note that if $\alpha=p$, the display above is exactly the posterior update rule for the distribution of $X^*$ given the responses. The intuition behind choosing $\alpha < p$ is to tilt the update rule in the more conservative direction, so that the effect of ``incorrect" responses can be mitigated.

The query $X_j$ is selected to be close to the median of $\mu_j$. Specifically, if $I_j = [s,t)$ is the subinterval that contains the median of $\mu_j$, then $X_j$ is chosen to be the left endpoint $s$ of $I_j$ with probability $\pi_1 = (\mu_j[0, t) - \mu_j[t, 1])/(2\mu_j(I_j))$, and $X_j=t$ with probability $\pi_2=1-\pi_1 = (\mu_j[s,1] - \mu_j[0,s))/(2\mu_j(I_j))$. Here $\pi_1$ and $\pi_2$ are chosen so that the conditional mean of $X_j$ is exactly the median of $\mu_j$. Since the learner only queries the endpoints of the subintervals, the density of $\mu_j$ is a piecewise-constant function whose change points can only occur at the endpoints of the subintervals. Suppose $n$ queries are submitted, the estimator $\widehat{J}$ is taken to be the subinterval with the highest $\mu_{n+1}$ density, ties broken arbitrarily.

For simplicity write $\overline{p}=1-p$, $\overline{\alpha} = 1-\alpha$. It has been shown that the error probability of $\widehat{J}$ decreases exponentially in the number of queries~\cite[Eq~(3.24)]{burnashev1974interval}:
\begin{equation}
\label{eq:BZ}
\mathbb{P}\{X^*\notin \widehat{J}\}\leq \frac{1-\Delta}{\Delta}\left[\frac{\overline{p}}{2\overline{\alpha}}+\frac{p}{2\alpha}\right]^n \leq \frac{1}{\Delta}\left[\frac{\overline{p}}{2\overline{\alpha}}+\frac{p}{2\alpha}\right]^n.
\end{equation}
The factor $\overline{p}/(2\overline{\alpha})+p/(2\alpha)$ is minimized at 
\[
\alpha = \frac{\sqrt{p}}{\sqrt{p}+\sqrt{\overline{p}}},\;\;\;\text{with }\frac{\overline{p}}{2\overline{\alpha}}+\frac{p}{2\alpha}=\frac{1}{2} + \sqrt{p\overline{p}}.
\]
%\nbr{JX. The above expressions are a bit overly complicated. Basically, $\alpha=\frac{\sqrt{p}}{\sqrt{p}+\sqrt{\bar{p}}}$ and 
%$\frac{\overline{p}}{2\overline{\alpha}}+\frac{p}{2\alpha}=\frac{1}{2} + \sqrt{p\bar{p}} \le 1 - (p-1/2)^2$,
%where the last inequality holds because $\sqrt{x(1-x)} \le 1/2-(x-1/2)^2$ for $x\in[0,1]$.
% }
It follows from~\eqref{eq:BZ} that
\begin{equation}
\label{eq:BZ.clean}
\mathbb{P}\{X^*\notin\widehat{J}\} \leq \frac{1}{\Delta}\left(\frac{1}{2}+\sqrt{p\overline{p}}\right)^n\leq \frac{1}{\Delta}\left(1-(p-1/2)^2\right)^n\leq \frac{1}{\Delta}\exp\left(-(p-1/2)^2 n\right).
\end{equation}
The last two inequalities are due to the basic inequalities $\sqrt{x(1-x)}\leq 1/2-(x-1/2)^2$ for $x\in [0,1]$ and $1+x\leq e^x$ for all $x\in\mathbb{R}$.

The lemma below follows from~\eqref{eq:BZ.clean} via a simple scaling argument.

\begin{lemma}
\label{lmm:BZ}
Suppose the BZ algorithm is run on an interval $I$ divided into $\Delta$-length subintervals. The output estimator $\widehat{J}$ satisfies
\[
\mathbb{P}\{X^*\notin \widehat{J}\}\leq \frac{|I|}{\Delta} 2^{-c_3(p) n},
\]
where $c_3(p) = (p-1/2)^2\log e$.
\end{lemma}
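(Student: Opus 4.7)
The plan is to reduce the claim to the already-established bound~\eqref{eq:BZ.clean} by a simple affine change of variables, exploiting the scale-invariance of the BZ dynamics. Write $I = [a, a+|I|]$ and define $\psi: I \to [0,1]$ by $\psi(x) = (x-a)/|I|$. Under $\psi$, the $|I|/\Delta$ subintervals of $I$ of width $\Delta$ are mapped bijectively onto $|I|/\Delta$ equal subintervals of $[0,1]$, each of width $\Delta' := \Delta/|I|$.

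Next I would verify that running the BZ algorithm on $I$ with subinterval width $\Delta$ is equivalent, via $\psi$, to running it on $[0,1]$ with subinterval width $\Delta'$. The point is that every object defining the algorithm depends only on relative positions: the prior $\mu_1$ (uniform on $I$ vs.\ uniform on $[0,1]$) pushes forward correctly under $\psi$; the posterior update rule depends only on which side of the query $X_j$ each subinterval lies on, a property preserved by the monotone affine map $\psi$; the query selection rule (endpoints of the median subinterval, chosen with the specified probabilities $\pi_1,\pi_2$) is invariant under $\psi$ since it depends only on masses of $\mu_j$; and finally the response distribution depends only on the sign of $X^*-X_j$, which is also preserved. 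Consequently, the error event $\{X^* \notin \widehat{J}\}$ has the same probability in the two formulations.

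Applying~\eqref{eq:BZ.clean} to the rescaled problem on $[0,1]$ with $n$ queries and subinterval width $\Delta' = \Delta/|I|$, so that $1/\Delta' = |I|/\Delta$, gives
\[
\mathbb{P}\{X^* \notin \widehat{J}\} \;\leq\; \frac{|I|}{\Delta}\,\exp\!\left(-(p-1/2)^2\, n\right).
\]
To finish, I would convert the natural exponential to base $2$: since $c_3(p) = (p-1/2)^2 \log e$, we have $\exp(-(p-1/2)^2 n) = 2^{-(p-1/2)^2 (\log e)\, n} = 2^{-c_3(p)\, n}$, yielding the claimed bound.

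The statement is essentially a corollary, so there is no real obstacle; the only point that requires a moment of care is the scale-invariance verification in the second step, but this is immediate once one notes that every rule in the algorithm is expressed purely in terms of masses of $\mu_j$ and the order of queries relative to subinterval endpoints, both of which are preserved by the monotone affine map $\psi$.
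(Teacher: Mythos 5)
Your proof is correct and follows the same route as the paper, which simply states that the lemma ``follows from~\eqref{eq:BZ.clean} via a simple scaling argument''; you have filled in exactly that argument, including the correct base conversion $\exp(-(p-1/2)^2 n)=2^{-c_3(p)n}$. No issues.
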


\subsection{Construction of the querying strategies}
The idea behind the construction of the querying strategies inherits from the construction under the noiseless response setting. Recall that under the querying strategy described in Section~\ref{sec:alg.bayes}, the learner first runs bisection search to locate $X^*$ within a length $L\delta$ interval. She then runs replicated bisection on the $L$ length $\delta$ subintervals, submitting queries via the bisection search in the true subinterval containing $X^*$ and cloning those queries in the other $L-1$ subintervals. When the responses are noisy, firstly we replace the bisection searches with the BZ algorithm. Moreover, the learner can no longer discern the true interval by querying the endpoints of the subinterval only once. Instead we need to query each endpoint enough times, so that via a maximum-likelihood type procedure, the learner can estimate the true subinterval with high enough certainty.

Under the requirement that the learner is accurate on average, as per definition (a), we construct the following multi-stage querying strategy that achieves the upper bound in~\eqref{eq:res.mean}.
\begin{enumerate}
\item Let $L'=7L$. Divide $[0,1]$ into $(L'\delta)$-length subintervals\footnote{For simplicity, we assume $(L'\delta)^{-1}$ is an integer. If not, the analysis can be repeated by dividing $[0,1]$ into subintervals of length $(\lfloor(L'\delta)^{-1}\rfloor)^{-1}$.}
and run the BZ algorithm to estimate the subinterval that contains $X^*$. The BZ algorithm is run for $K_1= \frac{1}{c_3(p)}\log\frac{8}{7\epsilon L\delta}$ iterations. Write $I$ for the subinterval returned by the BZ algorithm.

\item Divide $I$ into $L'$ $\delta$-length subintervals, labeled $J_1,...,J_{L'}$ from left to right. Label the endpoints as $x_0,..., x_{L'}$ so that $J_k = [x_{k-1}, x_k)$. Submit $m=\frac{1}{c_4(p)}\log\frac{64\delta}{\epsilon}$ queries at each of the $L'-1$ endpoints $x_1, ..., x_{L'-1}$, where $c_4(p) = D(\text{Bern}(1/2)||\text{Bern}(p))=(\log\frac{1}{2p}+\log\frac{1}{2(1-p)})/2$. 

Write $m_k$ for the sum of the $m$ responses to the query at $x_k$. In other words, $m_k$ denotes the number of times the learner receives the response to ``$X^*\geq x_k$" being $1$. Let
\[
\widehat{k}=\arg\max_{1\leq k\leq L'}\sum_{i=1}^{k-1}m_i+\sum_{i=k}^{L'-1}(m-m_i),
\]
and take $J_{\widehat{k}}$ as the estimator for the subinterval that contains $X^*$.

\item Divide $J_{\widehat{k}}$ into length $(\epsilon/4)$ subintervals and run the BZ algorithm, while submitting queries in parallel in the other $L'-1$ subintervals $\{J_k\}_{k\neq \widehat{k}}$, as one would do in the replicated bisection. Run the BZ algorithm for $K_2 = \frac{2}{c_3(p)}\log \frac{4\sqrt{2}\delta}{\epsilon}$ iterations and obtain the output $J\subseteq J_{\widehat{k}}$.

\item Define the estimator $\widehat{X}$ as the midpoint of $J$.
\end{enumerate}

When the learner needs to be accurate with high probability, as per definition (b), we adopt the same multi-stage querying strategy above, with a slightly modified set of parameters. Let
\begin{equation}
\label{eq:param.prob}
K_1 = \frac{1}{c_3(p)}\log\frac{3M}{\delta},\quad m = \frac{1}{c_4(p)}\log(12 M),\quad K_2 = \frac{1}{c_3(p)}\log\frac{12 M\delta}{\epsilon}. 
\end{equation}

We claim that these querying strategies satisfy the desired levels of accuracy and privacy, and achieve the query complexities stated in the upper bound in~\eqref{eq:res.mean} and~\eqref{eq:res.prob} under definitions (a) and (b) respectively. See Section~\ref{sec:proof.upper.noisy} for a full proof.

\subsection{Lower bound proof techniques with noisy responses }
%The lower bound proof under definition a) of $\epsilon$-accuracy reuses the techniques used in the lower bound proof under the noiseless response setting.
Our lower bound proof reuses the techniques under the noiseless response setting, but substantial care is needed to deal with the noise. 
By upper bounding the mutual information between the responses and $X^*$, and applying the continuous version of Fano's inequality~\cite{duchi2013distance}, we can show that there are at least $\Omega(\log (|J|/\epsilon)$ queries on average in an interval $J$ that contains $X^*$. By letting $J$ be the length-$\delta/2$ interval containing $X^*$ and considering a proportionally-sampling adversary, we obtain a lower bound $\Omega(L\log(\delta/\epsilon))$ on the optimal query complexity. By taking $J=[0,1]$, we obtain the term $\Omega(\log(1/\epsilon))$ in the lower bound.

For the more stringent notion of $(\epsilon,M)$-accuracy, we need to establish an additional $\Omega(L\log M)$ term in the lower bound.
Again by considering a proportionally-sampling adversary, it suffices to show that there exist at least $\Omega(\log M)$ queries on average in a length-$\delta/2$ interval $J$ containing $X^*$. To that end, we first reduce the estimations problem of $X^*$ to a family of binary hypothesis testing problems between pairs of hypothesis on $J$, and then bound the average testing error from below using the Bhattacharyya coefficient~\cite{kailath1967divergence}.

See Section~\ref{sec:proof.lower.noisy} for the complete proofs for the lower bounds in Theorem~\ref{thm:noisy}.

%\nbr{Maybe we should briefly discuss the lower bound proof strategies for completeness. As far as I understand, the main idea is to assume
%that adversary uses proportionally sampling and to show there are at least $\Omega(\log \frac{\delta}{\epsilon})$ 
%queries that are up to $\delta$ distance away from $X^*$. 
%For the latter task, we use the continuous version of Fano's inequality and lower bound the mutual information between
%responses and $X*$. For the more stringent $(\epsilon,M)$-accurate requirement, to establish the additional $\Omega(L\log M)$ lower bound
%to the query complexity, we further show there are at least $\Omega(\log M)$ 
%queries that are up to $\delta$ distance away from $X^*$ by reducing the estimation problem to a
%family of binary hypothesis testing problems and applying the Battacharyya coefficient to lower bound
%the average testing error. }
%\nb{This subsection now briefly describes the strategies of the lower bound proof.}

\section{Conclusion and future work}\label{sec:conclusion}
Motivated by privacy concerns in applications such as Federated Learning and online price learning, we study a sequential learning problem that focuses on protecting the learner's privacy. A learner aims to estimate a value by sequentially submitting queries and receiving binary responses, while ensuring an adversary who observes queries but not responses cannot estimate well. We design new querying strategies and prove 
upper bounds on the optimal query complexity. We also derive almost-matching lower bounds,
showing that our querying strategies are nearly optimal. In the deterministic setting, our upper and lower bounds have a gap of only $8$ queries. In the Bayesian setting, there is a gap of $4L$ between the upper and lower bounds. %In this paper, we obtain the optimal query complexity in the stylized private sequential learning model.
The results are further extended to when the unknown value is in high dimensions, and  when the binary responses are noisy.
Under the noisy response model,
we obtain upper and lower bounds on the optimal query complexity that match up to constant factors.
An important future direction is to investigate how to protect the learner's privacy 
in more general online convex optimization problems, such as stochastic gradient descent algorithms, where the adversary observes the query $x_t$ but not the stochastic 
gradient $g(x_t)$ at each iteration $t$.

\section*{Acknowledgment}
J.~Xu is supported by the NSF Grants IIS-1838124, CCF-1850743, and CCF-1856424. 
D.~Yang is supported by the NSF Grant CCF-1850743.

\begin{appendices}

\section{Proofs of Theorem~\ref{thm:bayes} and~\ref{thm:freq}}\label{sec:proof}
In this section we prove our results on the optimal query complexity for the sequential learning model, under the Bayesian setting (Theorems~\ref{thm:bayes}) and the deterministic setting (Theorem~\ref{thm:freq}).

\subsection{Analysis under the Bayesian setting}\label{sec:proof.bayes}
\begin{proof}[Proof of Theorem \ref{thm:bayes}]

{\bf Upper bound:} 
First we give in algorithm~\ref{alg:bayes} a precise description of the multistage querying strategy introduced in Section~\ref{sec:alg.bayes}. We claim that this querying strategy achieves the upper bound in Theorem~\ref{thm:bayes}.
\begin{algorithm}
$K_1:=\lfloor \log (1/(L\delta))\rfloor$; $I:= [0,1]$\;
\For(\tcp*[f]{bisection to an interval $I$ of length $2^{-K_1}$}){i = 1 to $K_1$} {
	$q_i:=$ the midpoint of $I=[a,b]$\;
	\leIf {$r_i=1$} {$I:=[q_i,b]$} {$I:=[a,q_i]$} 
}
\For(\tcp*[f]{divide $I$ into $L$ equal-length subintervals $I_1,...,I_L$}){i in 1 to L} {
	$I_i:= [a+(i-1)(b-a)/L, a+i(b-a)/L]$, where $[a,b]=I$\;
	$J_i:=I_i$\;
}
\For(\tcp*[f]{query the endpoints of $I_1,...,I_L$}){i in 1 to L-1} {
	$q_{K_1+i}:=$ the right endpoint of $I_i$\;
}
Inspect the responses to find $i^*\in \{1,...,L\}$ such that $X^*\in I_{i^*}$\;
$K_2:=\lceil\log (\delta/\epsilon)\rceil +1$\;
\For(\tcp*[f]{replicated bisection on $I_1,...,I_L$}){i in 1 to $K_2$} {
	\For{j in 1 to L} {
		$q_{K_1+L-1+(i-1)L+j}:=$ the midpoint of $J_j$\;
	}
	\eIf{$r_{K_1+L-1+(i-1)L+i^*}=1$}{
		\lFor{j in 1 to L} {the left endpoint of $J_j:=q_{K_1+L-1+(i-1)L+j}$}
	}{
		\lFor{j in 1 to L} {the right endpoint of $J_j:=q_{K_1+L-1+(i-1)L+j}$}
	}
}
$\widehat{X}:=$ the midpoint of $J_{i^*}$\;
\caption{Our querying strategy under the Bayesian setting}
\label{alg:bayes}
\end{algorithm}

Under algorithm~\ref{alg:bayes}, the total number of queries submitted is
\[
K_1+L-1+LK_2 = \left\lfloor \log\frac{1}{L\delta}\right\rfloor + L\left(\lceil \log\frac{\delta}{\epsilon}\rceil+2\right)-1,
\]
matching the desired upper bound. It remains to show that algorithm~\ref{alg:bayes} is both $\epsilon$-accurate and $(\delta,L)$-private. First we establish accuracy. From the responses to all the queries, the learner can narrow down the possible values of $X^*$ to an interval $I^{(\text{final})}$ of length
\begin{equation}
\label{eq:eps.accuracy}
\left|I^{(\text{final})}\right|= \frac{1}{L}2^{-K_1}2^{-K_2}
 = \frac{1}{L}2^{-(\lfloor \log (1/L\delta)\rfloor+\lceil\log (\delta/\epsilon)\rceil +1)}\leq \frac{1}{L}2^{-\log (1/L\epsilon)}=\epsilon.
\end{equation}
The learner can then take $\widehat{X}$ to be the midpoint of this interval so that $|\widehat{X}-X^*|\leq \epsilon/2$.

Next we show privacy. Recall that the learner performs parallel bisections on the $L$ intervals $I_1,...,I_L$. Since the adversary only observes the queries and the querying strategy $\phi$, she learns that $X^*$ is contained in one of $L$ intervals $J_1,..., J_L$ where $J_j=[a_j,b_j]\subseteq I_j$. But she cannot tell which one of them $X^*$ is in. Therefore she cannot guess the location of $X^*$ with probability higher than $1/L$. More precisely, the posterior distribution of $X^*$ given all the query sequence is uniform over the union of $J_1,..., J_L$. Use $|\cdot|$ to denote the Lebesgue measure of subsets of $[0,1]$. We have
\begin{equation}\label{eq:post}
\mathbb{P}\{|\widetilde{X}-X^*|\leq \delta/2\mid \text{the query sequence}\}= \frac{\left|\left(\cup_{i\leq L}J_i\right)\cap \left[\widetilde{X}-\delta/2, \widetilde{X}+\delta/2\right]\right|}{\left|\cup_{i\leq L}J_i\right|}.
\end{equation}
Since the queries on $I_1,..., I_L$ are exact copies of each other, $J_1,..., J_L$ are also equidistant translations on the real line. The left endpoints $a_1,..., a_L$ of $J_1,..., J_L$ satisfy $a_{i+1}=a_i+|I_1|$ for all $i$ where $|I_1|=2^{-K_1}/L\geq \delta$. Moreover, note that the lengths of all $J_i$ are equal, and because the adversary does not observe the response to the last batch of queries, $|J_i|=2|I^{(\text{final})}|$. From~\eqref{eq:eps.accuracy} we have $|J_i|\leq 2\epsilon$. Therefore under the assumption that $\delta\geq 2\epsilon$, any interval of length $\delta$ can only intersect with $\cup_i J_i$ on a set of Lebesgue measure at most $|J_1|$. Deduce that the right hand side of~\eqref{eq:post} is upper bounded by $|J_1|/|\cup_i J_i|=1/L$. Therefore
\[
\mathbb{P}\{|\widetilde{X}-X^*|\leq \delta/2\} = \mathbb{E}\left(\mathbb{P}\{|\widetilde{X}-X^*|\leq \delta/2\mid \text{the queries}\}\right)\leq 1/L.
\]

{\bf Lower bound:} 
Suppose $\phi$ is an $\epsilon$-accurate and $(\delta,L)$-private strategy that submits at most $n$ queries. Denote $\mathbf{n}(X^*,Y)$ as the number of queries submitted when $X^*$ is the truth and the random seed is $Y$, so $n=\sup_{X^*,Y}\mathbf{n}(X^*, Y)$.
The goal is to bound $n$ from below. Consider the querying strategy $\widetilde{\phi}$ that concatenates trivial queries at $0$ to the query sequence so that the length of query sequence is always $n$, $i.e., $ $\widetilde{q}_i=q_i$ for $i\leq \mathbf{n}(X^*,Y)$ and $\widetilde{q}_i=0$ for $\mathbf{n}(X^*,Y)<i\leq n$. Clearly $\widetilde{\phi}$ is also $\epsilon$-accurate and $(\delta,L)$-private, because the trivial queries at $0$ do not provide the adversary with any extra information. Moreover the maximum number of queries submitted by $\widetilde{\phi}$ equals that submitted by $\phi$. Hence for the rest of this proof, without loss of generality, we can assume that the learner always submits exactly $n$ queries under $\phi$.

Since $\phi$ is $(\delta,L)$-private, we have $\mathbb{P}\{|\widetilde{X}-X^*|\leq \delta/2\}\leq 1/L$ for each adversary $\widetilde{X}$. Consider the adversary that adopts the \emph{truncated proportional-sampling} strategy described in Section~\ref{sec:lower.bayes}: let $\widetilde{X}=q_J$ where $J\sim \text{Unif}\{K+1,...,n\}$. Choose $K=\lfloor \log (1/(L\delta))\rfloor$. 
Let us point out that $n$ must be larger than $K$ so truncated proportional-sampling can be run. We will show later in the proof that $n>K$ always holds for any strategy $\phi$ that is $\epsilon$-accurate.
%This adversary is smart to neglect the first $K$ queries since they are very unlikely to be close to $X^*$, as our calculations will show. 
By construction,
\[
\mathbb{P}\left\{\left|\widetilde{X}-X^*\right|\leq \delta/2\right\}=\mathbb{E}\frac{\sum_{i=K+1}^n\mathds{1}\{|q_i-X^*|\leq \delta/2\}}{n-K}\leq \frac{1}{L}.
\]
Deduce that
\[
n\geq K+L\left(\sum_{i=1}^n \mathbb{P}\left\{|q_i-X^*|\leq \frac{\delta}{2}\right\}-\sum_{i=1}^K \mathbb{P}\left\{|q_i-X^*|\leq \frac{\delta}{2}\right\}\right).
\]
We claim that
\begin{enumerate}[(i)]
\item $\sum_{i\leq n}\mathbb{P}\{|q_i-X^*|\leq \delta/2\}\geq \log (\delta/4\epsilon)$.
\item $\sum_{i\leq K}\mathbb{P}\{|q_i-X^*|\leq \delta/2\}\leq 1/L$.
\item $n>K$, so that the truncated proportional-sampling strategy is valid.
\end{enumerate}
The desired lower bound immediately follows.

{\it Proof of (i) and (iii):} The statement (i) claims that on average, there are at least $\log(\delta/4\epsilon)$ queries in the interval $[X^*-\delta/2,X^*+\delta/2]$. One would expect this to be true because $[X^*-\delta/2, X^*+\delta/2]$ is an interval of length $\delta$. In order for the learner to achieve $\epsilon$-accuracy, it needs to submit at least $\log (\delta/\epsilon)$ queries by optimality of the bisection method. Next we make this argument rigorous. The randomness of the interval $[X^*-\delta/2, X^*+\delta/2]$ complicates the proof. We will instead show something stronger than (i). We claim that for each fixed interval $I\subseteq [0,1]$, we have 
\begin{equation}
\label{eq:queries.in.I}
\sum_{i\leq n}\mathbb{P}\{q_i\in I\mid X^*\in I\}\geq \log (|I|/2\epsilon).
\end{equation}
To see why (i) follows from~\eqref{eq:queries.in.I}, note that for each interval $I$ of length $\delta/2$,
\[
\sum_{i\leq n}\mathbb{P}\left\{|q_i-X^*|\leq \delta/2 \mid X^*\in I\right\}
\geq \sum_{i\leq n}\mathbb{P}\{q_i\in I\mid X^*\in I\}\geq \log (|I|/2\epsilon)=\log (\delta/4\epsilon).
\]
Moreover, claim (iii) also follows from~\eqref{eq:queries.in.I} by taking $I=[0,1]$:
\[
n=\sum_{i\leq n}\mathbb{P}\{q_i\in [0,1]\} \geq \log (1/2\epsilon)>\lfloor \log (1/(L\delta))\rfloor =K,
\]
where the strict inequality holds because by assumption $2\epsilon \le \delta$ and $L\ge2.$

It remains to show~\eqref{eq:queries.in.I}. Since $\phi$ is $\epsilon$-accurate, we have
\[
\mathbb{P}\left\{\left|\widehat{X}-X^*\right|>\epsilon/2\mid X^*\in I, Y=y\right\}=0
\]
for all but a negligible (zero-measure) set of the random seed $Y$, 
denoted as $\mathcal{N}^y$. For $y\notin \mathcal{N}^y$, conditioning on $Y=y$, the estimator $\widehat{X}$ is only a function of the responses $r_1,...,r_n$. Further conditioning on $X^*\in I$, 
since $X^*$ is independent from the random seed $Y$, 
 $X^*$ is distributed uniform in $I$. By the continuous version of Fano inequality~\cite[Proposition~2]{duchi2013distance}, %\hl{(check if their proof goes through if the logs are with base 2, i.e., if the information theoretic quantities are defined in terms of bits, is their log2 actually 1 bit?) Answer: yes it can be done by simply dividing both the numerator and denominator by log2}
\[
\mathbb{P}\left\{\left|\widehat{X}-X^*\right|>\epsilon/2\mid X^*\in I,Y=y\right\}
\geq 1-\frac{I(X^*;r_1,..., r_n\mid X^*\in I, Y=y)+1}{\log (|I|/\epsilon)}.
\]
Hence
\begin{equation}
\label{eq:fano}
H(r_1,..., r_n\mid X^*\in I,Y=y)\geq I(X^*; r_1,..., r_n\mid X^*\in I,Y=y)
\geq \log (|I|/\epsilon)-1 = \log(|I|/2\epsilon).
\end{equation}
Using the entropy chain rule, the left hand side can also be written as
\begin{align}
\nonumber & H(r_1,..., r_n\mid X^*\in I,Y=y)\\
\label{eq:chain.rule} = & H(r_1|X^*\in I,Y=y) + \sum_{i=1}^{n-1} H(r_{i+1}\mid X^*\in I, Y=y, r_1,..., r_i).
\end{align}
%\nbr{The following argument does not consider $H(r_1|X^*\in I)$. 
%But I think the following argument still holds if we set $i=0$ and view $r_1,\ldots, r_0$ as void.}
Expand each summand:
%\hl{Jiaming, could you please check if the derivations below are correct? Can they be simplified?}
\begin{align}
\nonumber & H\left(r_{i+1}\mid X^*\in I, Y=y, r_1,..., r_i\right)\\
\label{eq:summand} = & \sum_{\rho_1,.., \rho_i}\mathbb{P}\left\{r_1=\rho_1,...,r_i=\rho_i\mid X^*\in I, Y=y\right\}
H\left(r_{i+1} \mid X^*\in I, Y=y, r_1=\rho_1,..., r_i=\rho_i\right).
\end{align}
Write $I=[a,b]$. On the event $X^*\in I$, if $q_{i+1}=\phi_i(\rho_1,..., \rho_i, y)$ is smaller than $a$, then $r_{i+1}=1$. Similarly if $q_{i+1}>b$, then $r_{i+1}=0$. In other words, the value of $r_{i+1}$ is completely determined by $\rho_1,...,\rho_i$ if $q_{i+1}\notin I$ and $X^*\in I$. Hence the summation~\eqref{eq:summand} equals
\begin{align*}
&\sum_{\rho_1,.., \rho_i: \phi(\rho_1,...,\rho_i, y)\in I} \mathbb{P}\left\{r_1=\rho_1,...,r_i=\rho_i\mid X^*\in I, Y=y\right\}H\left(r_{i+1} \mid X^*\in I, Y=y, r_1=\rho_1,..., r_i=\rho_i\right)\\
\leq & \mathbb{P}\left\{q_{i+1}\in I \mid X^*\in I, Y=y\right\}.
\end{align*}
With $Y=y$ fixed, we have $q_1=f_0(y)$ is deterministic. Similarly argue that $H(r_1|X^*,Y=y)\leq \mathds{1}\{q_1 \in I\}$.
Combine with~\eqref{eq:fano} and~\eqref{eq:chain.rule} to deduce that
\[
\sum_{i\leq n}\mathbb{P}\{q_i \in I\mid X^*\in I, Y=y\}
\geq H(r_1,..., r_n\mid X^*\in I, Y=y)\geq \log(|I|/2\epsilon).
\]
The above holds for all $y\notin \mathcal{N}^y$. Since $\mathcal{N}^y$ is a negligible set, we have
\[
\sum_{i\leq n}\mathbb{P}\{q_i\in I\mid X^*\in I\}
 = \int_{[0,1]\backslash \mathcal{N}^y}\sum_{i\leq n}\mathbb{P}\{q_i \in I\mid X^*\in I, Y=y\} dy \geq \log (|I|/2\epsilon).
\]
The proof of~\eqref{eq:queries.in.I} is complete.

{\it Proof of (ii):} To show (ii) we introduce the notion of {\it learner intervals}, which stands for the sequence of intervals that the learner knows $X^*$ is in, as the learner submits queries sequentially. Start from $I_0=[0,1]$. If $r_1=1$, then the learner learns that $X^*\in [q_1,1]$ and $I_1$ is defined as $[q_1,1]$. Otherwise $I_1=[0,q_1]$. For all $i$,
\begin{align*}
\mathbb{P}\left\{|q_i-X^*|\leq \frac{\delta}{2}\right\} = & \mathbb{E}\left[\mathbb{P}\left\{|q_i-X^*|\leq \frac{\delta}{2}\mid r_1,..., r_{i-1}\right\}\right]\\
 = &\mathbb{E} \left[\frac{\left|I_{i-1}\cap [q_i-\delta/2,q_i+\delta/2]\right|}{|I_{i-1}|}\right]\\
 \leq & \delta \mathbb{E}(1/|I_{i-1}|).
\end{align*}
Next we show that $ \mathbb{E}(1/|I_{i}|) \leq 2^i$ for all $i$ by induction. Suppose it is true for $i=0,...,k$. For $i=k+1$,
\[
\mathbb{E}(1/|I_{k+1}|) = \mathbb{E}\left[\mathbb{E}\left(1/|I_{k+1}| \;\Big\rvert \; r_1,..., r_k\right)\right].
\]
Conditioning on $r_1,..., r_k$, the learner interval $I_k$ is deterministic and so is $q_{k+1}$. Let $I_k=[a_k,b_k]$. There are three possibilities for $I_{k+1}$:
\begin{enumerate}
\item $q_{k+1}\notin I_k$. In this case the $r_{k+1}$ provides no additional information on the location of $X^*$. Therefore $I_{k+1}=I_k$.
\item $q_{k+1}\in I_k$ and $r_{k+1}=1$. The learner learns that $X^*\geq q_{k+1}$ and $I_{k+1}=[q_{k+1},b_k]$.
\item $q_{k+1}\in I_k$ and $r_{k+1}=0$. In this case $I_{k+1}=[a_k,q_{k+1}]$.
\end{enumerate}
Therefore
\begin{align*}
\mathbb{E}\left(1/|I_{k+1}| \mid r_1,..., r_k\right)
= & \mathds{1}\left\{q_{k+1}\notin I_k\right\}\frac{1}{|I_k|} + \mathds{1}\left\{q_{k+1}\in I_k\right\}\mathbb{P}\left\{X^*\geq q_{k+1}\mid r_1,...,r_k\right\}\frac{1}{b_k-q_{k+1}} \\
& + \mathds{1}\left\{q_{k+1}\in I_k\right\}\mathbb{P}\left\{X^*< q_{k+1}\mid r_1,...,r_k\right\}\frac{1}{q_{k+1}-a_k}\\
= & \mathds{1}\left\{q_{k+1}\notin I_k\right\}\frac{1}{|I_k|} + \mathds{1}\left\{q_{k+1}\in I_k\right\}\frac{1}{|I_k|} + \mathds{1}\left\{q_{k+1}\in I_k\right\}\frac{1}{|I_k|}\leq \frac{2}{|I_k|}.
\end{align*}
Hence $\mathbb{E}(1/|I_{k+1}|)\leq \mathbb{E}(2/|I_k|)\leq 2\cdot 2^k = 2^{k+1}$. Deduce that $\mathbb{P}\{|q_i-X^*|\leq \delta/2\}\leq \delta \mathbb{E}(1/|I_{i-1}|)\leq \delta 2^{i-1}$ for all $i$. Therefore
\[
\sum_{i=1}^K\mathbb{P}\left\{\left|q_i-X^*\right|\leq \frac{\delta}{2}\right\}\leq \delta \sum_{i=1}^K 2^{i-1}\leq \delta 2^K\leq 1/L,
\]
where the last inequality is from $K=\lfloor \log (1/(L\delta))\rfloor \leq \log (1/(L\delta))$.
\end{proof}

\subsection{Analysis under the deterministic setting}\label{sec:proof.freq}
%$2L+\log(\max\{2^{-L},\delta\}/\epsilon)-8$ queries.

\begin{proof}[Proof of Theorem \ref{thm:freq}]

{\bf Upper bound:} 
As mentioned in Section~\ref{sec:alg.freq}, our querying strategy requires different construction for the $\delta\leq 2^{-L}$ and $\delta> 2^{-L}$ cases. The precise description of the querying strategy is given in algorithm~\ref{alg:small.delta} (for $\delta\leq 2^{-L}$) and algorithm~\ref{alg:large.delta} (for $\delta>2^{-L}$). 
%The reader might notice that algorithm~\ref{alg:large.delta} starts with an initial guess $X^*\in [0,\epsilon)$.  This creates an $\epsilon$-length interval $[0,\epsilon)$ belonging to the information set, which is needed to ensure that the $\delta$-covering number of the information set is at least $L$ (See the derivations around \eqref{eq:find.K} for details). 

\begin{algorithm}
$GuessedIt := False$\;
$I:=[0,1]$\;
\For(\tcp*[f]{submit $L$ guesses via (possibly partially fake) bisection}){i = 1 to L} {
	$q_{2i-1}:=$ the midpoint of $I=[a,b]$\;
	$q_{2i}:=q_{2i-1}+\epsilon$\;
	\eIf{not GuessedIt}{
		Inspect the responses $r_{2i-1}$ and $r_{2i}$\;
		\leIf {$r_{2i-1}=1$} {$I:=[q_{2i-1},b]$} {$I:=[a,q_{2i-1}]$} 
		\lIf {$r_{2i-1}=1$ and $r_{2i}=0$} {$GuessedIt:=True$}
	}(\tcp*[f]{once guessed correctly, proceed with a fake bisection}){
		Sample $R\sim \text{Bernoulli}(1/2)$\;
		\leIf {$R=1$} {$I:=[q_{2i-1},b]$} {$I:=[a,q_{2i-1}]$}
	}	
}
$i:= 2L+1$, $J:=I$\;
\While(\tcp*[f]{run (possibly fake) bisection on $J$}){$|J|>\epsilon$} {
	$q_i:=$ the midpoint of $J=[a,b]$\;
	\eIf{not GuessedIt} {
		\leIf {$r_i=1$} {$J:=[q_i,b]$} {$J:=[a,q_i]$}
	}{
		Sample $R\sim \text{Bernoulli}(1/2)$\;
		\leIf {$R=1$} {$J:=[q_i,b]$} {$J:=[a,q_i]$}
	}
	$i:=i+1$\;
}
$\widehat{X}:=$ the midpoint of $J$\;
\caption{Our querying strategy under the deterministic setting when $\delta\leq 2^{-L}$}
\label{alg:small.delta}
\end{algorithm}

First consider the case $\delta\leq 2^{-L}$. Under algorithm~\ref{alg:small.delta}, the learner first submits $L$ guesses ($2L$ queries). She then conducts a bisection search within an interval of length $2^{-L}$, taking $\lceil 2^{-L}/\epsilon\rceil$ queries to achieve $\epsilon$-accuracy. The total number of queries submitted is $L+\lceil\log (1/\epsilon)\rceil$.

Because the guesses $I_1,...,I_L$ are all of length $\epsilon$, algorithm~\ref{alg:small.delta} is $\epsilon$-accurate. It suffices to show it is also $(\delta,L)$-private. As argued in Section~\ref{sec:alg.freq}, the adversary cannot rule out the possibility that $X^*\in I_i$ for some $i=1,..,L$, so the information set contains the union of $I_1,...,I_L$. That is, for each query sequence $q$,
\[
\mathcal{I}(q)\supseteq \cup_{i\leq L}[q_{2i-1}, q_{2i}).
\]
When $\delta\leq 2^{-L}$, these $L$ intervals do not overlap. Since their left endpoints are submitted via a bisection search, legitimate or fake, they are at least $2^{-L}\geq \delta$ apart from each other. Therefore the $\delta$-covering number for $\mathcal{I}(q)$ is at least $L$. We have shown that algorithm~\ref{alg:small.delta} is $(\delta,L)$-private.

\begin{algorithm}
$q_1:=0; q_2:=\epsilon$;\tcp*[f]{submit initial guess at 0}

$K:=$ an integer solution in $\{0,1,..., L-1\}$ to $\ell_K=2^{-K}/(L-K) \in [\delta, 2\delta]$\;
\leIf {$r_1=1$ and $r_2=0$} {$GuessedIt:=True$} {$GuessedIt:=False$}
$I:=[0,1]$\;
\For(\tcp*[f]{submit the next $K$ guesses via bisection}){i=2 to K+1} {
	$q_{2i-1}:=$ the midpoint of $I=[a,b]$\;
	$q_{2i}:=q_{2i-1}+\epsilon$\;
	\eIf{not GuessedIt}{
		\leIf {$r_{2i-1}=1$} {$I:=[q_{2i-1},b]$} {$I:=[a,q_{2i-1}]$} 
		\lIf {$r_{2i-1}=1$ and $r_{2i}=0$} {$GuessedIt:=True$}
	}{
		Sample $R\sim \text{Bernoulli}(1/2)$\;
		\leIf {$R=1$} {$I:=[q_{2i-1},b]$} {$I:=[a,q_{2i-1}]$}
	}	
}
Divide $I$ into $L-K$ equal length subintervals $I_1,..., I_{L-K}$\;
\For(\tcp*[f]{submit the next $L-K-1$ guesses via grid search}){i = (K+2) to L} {
	$q_{2i-1}:=$ the right endpoint of $I_{i-K-1}$\;
	$q_{2i}:= q_{2i-1}+\epsilon$\;
	\lIf {$r_{2i-1}=1$ and $r_{2i}=0$} {$GuessedIt := True$}
}
\eIf{not GuessedIt} {
	$J:=$ the subinterval $I_{i^*}$ that contains $X^*$\;
}{
	$J:= I_{i^*}$ where $i^*$ is sampled uniformly from $\{1,..., L-K\}$\;
}
$i:= 2L+1$\;
\While(\tcp*[f]{run (possibly fake) bisection on $J$}){$|J|>\epsilon$} {
	$q_i:=$ the midpoint of $I=[a,b]$\;
	\eIf{not GuessedIt} {
		\leIf {$r_i=1$} {$J:=[q_i,b]$} {$J:=[a,q_i]$}
	}{
		Sample $R\sim \text{Bernoulli}(1/2)$\;
		\leIf {$R=1$} {$J:=[q_i,b]$} {$J:=[a,q_i]$}
	}
	$i:=i+1$\;
}
$\widehat{X}:=$ the midpoint of $J$\;
\caption{Our querying strategy under the deterministic setting when $\delta> 2^{-L}$}
\label{alg:large.delta}
\end{algorithm}

Next consider the case $\delta>2^{-L}$. Again algorithm~\ref{alg:large.delta} is clearly $\epsilon$-accurate. To show that it is also $(\delta,L)$-private, note that algorithm~\ref{alg:large.delta} is designed so that the closest pair of guesses are of distance $[\delta,2\delta]$ apart. Hence
\begin{enumerate}[(i)]
\item The intervals $I_i=[q_{2i-1},q_{2i})$, $i=1,...,L$ do not overlap, and their left endpoints are at least $\delta$ from each other;
\item After the $L$ guesses are submitted, the learner can always narrow down the possibilities for $X^*$ to an interval of length at most $2\delta$.
\end{enumerate}
% is it worth repeating this, or do we refer the reader back to the previous section?

We claim that (i) ensures $(\delta,L)$-privacy. As in the $\delta\leq 2^{-L}$ case, we have for each $q$, $\mathcal{I}(q)\supseteq \cup_{i\leq L}[q_{2i-1}, q_{2i})$. Assuming (i), the $\delta$-covering number of $\cup _{i\leq L}[q_{2i-1},q_{2i})$ is at least $L$. 

Given (ii), the learner only needs to submit at most $\lceil\log (2\delta/\epsilon)\rceil$ queries to achieve $\epsilon$-accuracy in stage 2. The total number of queries submitted under algorithm~\ref{alg:large.delta} is at most $2L+\lceil \log (\delta/\epsilon)\rceil+1$. Moreover, as we can see from algorithm~\ref{alg:large.delta} the learner always submits $q_1=0$. Omit this first trivial query to obtain the desired query complexity upper bound $2L+\lceil\log (\delta/\epsilon)\rceil$. 

We still need to show that (i) and (ii) are satisfied by algorithm~\ref{alg:large.delta}. The first $K$ guesses locate $X^*$ within an interval $I$ of length $2^{-K}$. The remaining $L-K-1$ odd queries then divide $I$ into $L-K$ subintervals of equal length. Therefore the closest pair of odd queries among $q_1,q_3,...,q_{2L-1}$ are at distance $2^{-K}/(L-K)$. In stage 2, the learner conducts a bisection search in one of the $L-K$ subintervals, which is also of length $2^{-K}/(L-K)$. Therefore (i) and (ii) translate to $\delta\leq 2^{-K}/(L-K)\leq 2\delta$. It remains to show that we can find at least one $K\in \{0,1,...,L-1\}$ for which 
\begin{equation}
\label{eq:find.K}
\ell_K:= \frac{2^{-K}}{L-K}\in [\delta, 2\delta].
\end{equation}
Observe that
\begin{enumerate}
\item $\ell_0=1/L\geq \delta$;
\item $\ell_{L-1}=2^{-(L-1)}\leq 2\delta$;
\item for all $K<L-1$,
\[
\frac{\ell_K}{\ell_{K+1}}= \frac{2^{-K}}{2^{-(K+1)}}\frac{L-K-1}{L-K}\leq 2.
\]
\end{enumerate}
These facts above ensure that there is at least one solution to~\eqref{eq:find.K} in $\{0,1,..., L-1\}$.

{\bf Lower bound:} 
The lower bound $\lceil\log(\delta/\epsilon)\rceil+2L-4$ has already been proven in~\cite[Theorem~4.1]{tsitsiklis2018private}. As in the upper bound proof we separately consider the cases $\delta> 2^{-L}$ and $\delta\leq 2^{-L}$. When $\delta>2^{-L}$, the term $\lceil\log(\delta/\epsilon)\rceil+2L-4$ is always larger than $\lceil\log(1/\epsilon)\rceil+L-8$. Thus we only need to show that
when $\delta> 2^{-L}$, optimal query complexity is lower bounded by $\lceil\log(1/\epsilon)\rceil+L-8$.

It suffices to show the lower bound holds for all realizations of the random seed $Y$ so the dependences on $Y$ are suppressed for the rest of the proof. Fix any querying strategy $\phi$ that is both $\epsilon$-accurate and $(\delta,L)$-private.
Let $\mathcal{Q}(X^*)$ denote the set of queries when the true value is $X^*$. 
We want to show there is at least one $X^*$ for which $|\mathcal{Q}(X^*)|\geq L+\log(1/\epsilon)-8$. To this end, 
we will prove the following claims:
\begin{enumerate}[(i)]
\item There exists an interval $I$ of length $2\delta$ and $\widetilde{\mathcal{Q}}=\{\widetilde{q}_1,...,\widetilde{q}_K\}$ where $K\geq \log (1/\delta)-3$ and $|\widetilde{q}_i-\widetilde{q}_j|>\delta$ for all $i\neq j$, such that for each $X^* \in I$, $\mathcal{Q}(X^*)\backslash I\supseteq \widetilde{\mathcal{Q}}$.
\item For each interval $I$ of length $2\delta$ and each $X^*\in I$, there exist at least $L-5$ distinct pairs of queries $\{s_1,t_1\}, ... \{s_{L-5}, t_{L-5}\}\subseteq \mathcal{Q}(X^*)\backslash I$ for which $|s_i-t_i|\leq \epsilon$ for all $i$.
\item For each interval $I$ of length $2\delta$, there exists $X^*\in I$ such that $\mathcal{Q}(X^*)$ contains at least $\log (\delta/\epsilon)$ queries in $I$.
\end{enumerate}
%\nbr{When I read the above claim first time, I was struggled to understand the claim and 
%why we need it.  Below are some possible suggestions to improve the readability. 

%First, I think we can move the definition of $\mathcal{Q}(X^*)$ outside of the claim.

%Second, it is unclear whether the interval $I$ in (ii) is an arbitrary interval of length $2\delta$,
%or it must be the interval in (i). From the proof of (ii), it seems that $I$ in (ii) can be arbitrary. 

%Finally, I feel here we may want to cast some high-level proof idea before introducing the claim.
%I extracted some words from our exchanged emails:

%``Basically, consider an interval $I$ of length $\delta$ around the true value $X^*$. We know that 
%(1) Within the interval $I$, there are at least $\log (\delta/\epsilon)$ queries;
%(2) Outside the interval $I$, there are at least $\log (1/\delta)$ queries that are at least delta away from each other. 
%The key is to argue that there are at least $\log (1/\delta) +L-1$ queries outside the interval $I$. 
%To this end, we first show that outside the interval $I$, 
%there exist (i) $\log (1/\delta)$ queries that are more than $\delta$ apart
%and (ii) $L$ pairs of queries that are at most $\epsilon$ apart. Note that 
%(i) holds for the bisection search strategy, but we need to argue that it holds for any $\epsilon$-accurate
%and $(\delta,L)$-private learner strategy; (ii) holds because the learner needs to achieve $\epsilon$-accuracy
%while ensuring the $\delta$-covering number is at least $L$.''

%}
Claims (i) and (ii) together imply that there exists an interval $I$ of length $2\delta$  such that for all $X^*\in I$,
\[
\mathcal{Q}(X^*)\backslash I \supseteq \widetilde{\mathcal{Q}} \cup \left(\cup_{i\leq L-5}\{s_i,t_i\}\right).
\]
Since all members of $\widetilde{\mathcal{Q}}$ are at least $\delta$-apart and $|s_i-t_i|\leq \epsilon$, at least one of $s_i$ and $t_i$ is outside of $\widetilde{\mathcal{Q}}$. To show that on top of $\widetilde{\mathcal{Q}}$, each pair $\{s_i,t_i\}$ contributes at least one extra member to $\mathcal{Q}(X^*)\backslash I$, we only need to rule out the case where two pairs $\{s_i,t_i\}$ and $\{s_j,t_j\}$ are such that $s_i,s_j\in \widetilde{\mathcal{Q}}$ and $t_i=t_j$. This cannot happen because otherwise,
\[
\delta < |s_i-s_j|\leq |s_i-t_i| + |s_j-t_i| = |s_i-t_i| + |s_j-t_j|\leq \epsilon+\epsilon,
\]
contradicting the assumption $\delta\geq 2\epsilon$. Thus $\mathcal{Q}(X^*)\backslash I$ contains at least $K+L-5$ distinct members. From claim (iii) there exists $X^*\in I$ for which $|\mathcal{Q}(X^*)\cap I|\geq \log (\delta/\epsilon)$. We have
\[
\left|\mathcal{Q}(X^*)\right| = \left|\mathcal{Q}(X^*)\cap I\right| + \left|\mathcal{Q}(X^*)\backslash I\right| \geq \log(\delta/\epsilon)+ K+L-5\geq L+\log\frac{1}{\epsilon}-8,
\]
which equals $2L+\log (\max\{2^{-L},\delta\}/\epsilon)-8$ when $\delta\leq 2^{-L}$. It remains to prove the three claims.

{\it Proof of (i):} To prove this claim, we first construct for each $X^*\in [0,1]$ a subsequence $\widetilde{q}$ of $q$ where all the queries in $\widetilde{q}$ are at least $\delta$ apart from each other. %The main idea behind the construction is to introduce an {\it oracle querying strategy} $\widetilde{\phi}$, which has access to more information than the ordinary querying strategy $\phi$. 

Let $\widetilde{q}_1=q_1$. If $X^*\in [\widetilde{q}_1-\delta,\widetilde{q}+\delta]$, then declare the construction finished, {\it i.e.} the subsequence $\widetilde{q}= (\widetilde{q}_1)$ is of length one. Otherwise look at $q_2=\phi_1(r_1)$. If $q_2\in [\widetilde{q}_1-\delta, \widetilde{q}_1+\delta]$, then $\widetilde{q}_1$ and $q_2$ must be on the same side of $X^*$ and $r_2=\mathds{1}\{X^*\geq q_2\}$ must be equal to $r_1$. Proceed to look at $q_3=\phi_2(r_1,r_2)=\phi_2(r_1,r_1)$, $q_4= \phi_3(r_1,r_1,r_1)$ and so on, until $q_i\notin [\widetilde{q}_1-\delta,\widetilde{q}_1+\delta]$. Let $\widetilde{q}_2=q_i$. Similarly define the rest of $\widetilde{q}$ as follows: For $k\geq 2$ if $\widetilde{q}_k$ is chosen to be $q_{i_k}$, then let $\widetilde{q}_{k+1}=q_{i_{k+1}}$, where
\[
i_{k+1} = \min_j\left\{j>i_k: q_j\notin \cup_{k'\leq k}[\widetilde{q}_{k'}-\delta,\widetilde{q}_{k'}+\delta]\right\}.
\]
Repeat this process until $[\widetilde{q}_k-\delta,\widetilde{q}_k+\delta]$ contains $X^*$. 
Note that such a $k$ always exists, as $\phi$ is $\epsilon$-accurate and hence there exists at least one query that is within $\epsilon$ distance to $X^*$.

Let $\widetilde{r}_i=\mathds{1}\{X^*\geq \widetilde{q}_i\}$.
%Note that $\widetilde{\phi}$ is not a legitimate querying strategy, since it requires access to additional information. 
Next we argue that $\widetilde{q}$ is completely determined by $\widetilde{r}$. Indeed, given $\widetilde{r}= (\widetilde{r}_1,..., \widetilde{r}_k)$, we have $\widetilde{q}_j=q_{i_j}$ for all $j\leq k$, where $i_1=1$ and
\[
i_2 = \min_j\{j>i_1: \phi_{j-1}(\widetilde{r}_1,...,\widetilde{r}_1)\notin [\widetilde{q}_1-\delta,\widetilde{q}_1+\delta]\}.
\]
Thus $\tilde{q}_2=q_{i_2}=\phi_{i_2-1}(\widetilde{r}_1,...,\widetilde{r}_1)$.
To determine $i_3$, inspect $q_{i_2+1}=\phi_{i_2}(r_1,...,r_{i_2})=\phi_{i_2}(\widetilde{r}_1,...,\widetilde{r}_1,\widetilde{r}_2)$. If $q_{i_2+1}\notin \cup_{j=1,2}[\widetilde{q}_j-\delta,\widetilde{q}_j+\delta]$, the we have $i_3= i_2+1$. Otherwise if $q_{i_2+1}\in[\widetilde{q}_1-\delta,\widetilde{q}_1+\delta]$, then we have $r_{i_2+1}=\widetilde{r}_1$ and $q_{i_2+2}=\phi_{i_2+1}(\widetilde{r}_1,...,\widetilde{r}_1,\widetilde{r}_2,\widetilde{r}_1)$; similarly if $q_{i_2+1}\in[\widetilde{q}_2-\delta,\widetilde{q}_2+\delta]$, then $q_{i_2+2}=\phi_{i_2+1}(\widetilde{r}_1,...,\widetilde{r}_1,\widetilde{r}_2,\widetilde{r}_2)$. As such we can reconstruct the queries $q_{i_2+3},q_{i_2+4}$ and so on until we find $j>i_2$ where $\phi_j\notin\cup_{j=1,2}[\widetilde{q}_j-\delta,\widetilde{q}_j+\delta]$. Then we have determined $i_3=j$ and $\widetilde{q}_3=q_j$, which is 
completely determined by $(\widetilde{r}_1, \widetilde{r}_2)$. 
Following the same argument, the entire $\widetilde{q}$ sequence can be reconstructed from $\widetilde{r}$. Consequently,
\[
\left|\left\{\widetilde{q}: X^*\in [0,1]\right\}\right|\leq \left|\left\{\widetilde{r}: X^*\in [0,1]\right\}\right|.
\]
Suppose $K+1$ is the maximum length of $\widetilde{q}\equiv \widetilde{q}(X^*)$ among all $X^* \in [0,1]$. Then the total number of distinct binary $\widetilde{r}$ sequences is at most $\sum_{k\leq K+1}2^k< 2^{K+2}$. In addition if $\widetilde{q}$ is of length $k$, then $X^*\in [\widetilde{q}_k-\delta,\widetilde{q}_k+\delta]$ by construction. Hence
\[
1=\left|[0,1] \right| \leq \left|\cup_{X^*\in [0,1]}\left[\widetilde{q}_k-\delta,\widetilde{q}_k+\delta\right]\right|
\leq 2\delta\left|\left\{\widetilde{q}: X^*\in [0,1]\right\}\right|
\leq 2\delta\cdot 2^{K+2}.
\]
Deduce that $K\geq \log(1/\delta)-3$. In other words, there exists $X^* \in [0,1]$ for which $\widetilde{q}$ is of length $k$ where $k\geq K+1\geq \log(1/\delta)-2$. We choose $I=[\widetilde{q}_k-\delta,\widetilde{q}_k+\delta]$ for such $\widetilde{q}$ and show that it satisfies the statement in (i). By construction all the queries in $\widetilde{q}$ are more than $\delta$ apart; therefore,
all the queries in $\widetilde{q}$ except $\widetilde{q}_k$ are all outside of $I$. As a result for all $X\in I$ and $i\leq k-1$, $\mathds{1}\{X\geq \widetilde{q}_i\}$ yields the same response as $\mathds{1}\{X^*\geq \widetilde{q}_i\}$. Deduce that $\widetilde{q}(X)=\widetilde{q}(X^*)$ for all $X\in I$.
%\nbr{I do not understand why this is needed for the
%proof of claim (i). In fact, this shows a slightly stronger claim: There exists an interval $I$ of length $2\delta$
%and a sequence of queries $\widetilde{\mathcal{Q}}=\{\widetilde{q}_1,..., \widetilde{q}_K\}$
%with $K\geq \log (1/\delta)-3$ and $|\widetilde{q}_i-\widetilde{q}_j|>\delta$ for all $i\neq j$
%such that $\widetilde{\mathcal{Q}} \subseteq \mathcal{Q}(X^*)\backslash I$ for all $X^* \in I.$}
To complete the proof of (i), take $\widetilde{\mathcal{Q}}=\{\widetilde{q}_1,...,\widetilde{q}_{k-1}\}$ to obtain a subset of $\mathcal{Q}(X^*)\backslash I$ of size at least $K\geq \log(1/\delta)-3$.

{\it Proof of (ii):} 
%This part of the proof borrows the idea from the lower bound proof in~\cite[Theorem~4.1]{tsitsiklis2018private}. 
For $q=q(X^*)=(q_1,...,q_n)$, let $\overline{\mathcal{Q}}(X^*)=\{q_1,...,q_n,0,1\}$. The key observation is that for each $x$ in the information set $\mathcal{I}(q)$, there must be two queries $s,t\in \overline{\mathcal{Q}}(X^*)$ with $s\leq x$, $t>x$ and $t-s\leq \epsilon$. Otherwise when $x$ is the truth, the learner could not have achieved $\epsilon$-accuracy through the query sequence $q$. The inclusion of $0,1$ in $\overline{\mathcal{Q}}(X^*)$ is because even if they are never queried, they could still serve in these $(s,t)$ pairs.

Let 
\[
\mathcal{P}=\{(s,t): s,t\in \overline{\mathcal{Q}}(X^*), 0<t-s\leq \epsilon\}
\]
denote the set of all pairs of queries that are no more than $\epsilon$-apart. We have
\[
\mathcal{I}(q)\subseteq \cup_{(s,t)\in \mathcal{P}}[s,t].
\]
From the definition of $(\delta,L)$-privacy, the $\delta$-covering number of $\cup_{(s,t)\in \mathcal{P}}[s,t]$ is at least $L$, which immediately implies $|\mathcal{P}|\geq L$. However since we want to lower bound the number of pairs $(s,t)$ where both $s$ and $t$ are outside of $I$, the proof is slightly more complicated. Write $I=[a,b]$. If one of $s,t$ is in $I$, then $[s,t]\subseteq [a-\epsilon,b+\epsilon]\subseteq [a-\delta/2,b+\delta/2]$. This is an interval of length $3\delta$. We also need to discount the pairs that use 0 or 1 as one of the endpoints. Let
\begin{align*}
\widetilde{\mathcal{P}}= & \{(s,t): s,t\in \overline{\mathcal{Q}}(X^*)\backslash (I\cup \{0,1\}), 0<t-s \leq \epsilon\}\\
\supseteq & \{(s,t)\in \mathcal{P}: [s,t]\subseteq [0,1]\backslash ([a-\delta/2,b+\delta/2]\cup [0,\delta]\cup [1-\delta,1])\}.
\end{align*}
The $\delta$-covering number for $[a-\delta/2,b+\delta/2]\cup [0,\delta]\cup [1-\delta,1])$ is at most 5. Deduce that the $\delta$-covering number for $\cup _{(s,t)\in\widetilde{\mathcal{P}}}[s,t]$ is at least $L-5$. Thus $|\widetilde{\mathcal{P}}|\geq L-5$.

{\it Proof of (iii):} The part of the proof is similar to the proof of (i). We take $\widetilde{q}(X^*)$ to be the subsequence of $q(X^*)$ that contains all the queries in $\mathcal{Q}(X^*)$ that are in $I$. Let $J(X^*)$ be the interval formed by the two queries in $q(X^*)$ to the left and right of $X^*$ that are the closest to $X^*$. For all $X^*\in I$, $X^*\in J(X^*)$ and thus $I\subseteq \cup_{X^*\in I}J(X^*)$.
%First define an oracle querying strategy $\widetilde{\phi}$ where the learner knows that $X^*\in I$ so the learner skips all the queries outside of $I$. Since the querying strategy $\phi$ is $\epsilon$-accurate, for each $X^*\in I$, the oracle learner can locate an interval $J(X^*)$ of length at most $\epsilon$ that contains $X^*$. Deduce that $I\subseteq \cup_{X^*\in I}J(X^*)$. 
Since $|I|=2\delta$ and the querying strategy $\phi$ is $\epsilon$-accurate so that  
$|J(X^*)| \le \epsilon$, we have that $\{J(X^*):X^*\in I\}$ contains at least $2\delta/\epsilon$ distinct members. 

Let $\widetilde{r}_i(X^*)=\mathds{1}\{X^*\geq \widetilde{q}_i(X^*)\}$.
Next we show that for each $X^*\in I$,  $J(X^*)$ is completely determined by $\widetilde{r}(X^*)$. 
Indeed given any $X^*\in I$, the responses to the queries outside of $I$ can be deduced from their  position relative to $I$. Therefore from only $\widetilde{r}(X^*)$, which only contains responses to the queries in $I$, one can reconstruct the entire query sequence $q(X^*)$, from which one can infer $J(X^*)$. Thus
\[
\left|\left\{\widetilde{r}(X^*):X^*\in I\right\}\right|
\geq \left|\left\{J(X^*):X^*\in I\right\}\right|
\geq 2\delta/\epsilon.
\]
Suppose $T$ is the maximal length of $\widetilde{q}(X^*)$ among all $X^*\in I$. 
Then $\widetilde{r}(X^*)$ can take no more than $\sum_{t\leq T}2^t<2^{T+1}$ distinct values. Deduce that $T\geq \log(\delta/\epsilon)$. Recall that $\widetilde{q}(X^*)$ is a subsequence of $q(X^*)$ and only contains queries in $I$. Conclude that there exists $X^*\in I$ for which the  querying strategy $\phi$ submits at least $\log (\delta/\epsilon)$ queries that are in $I$.
\end{proof}

\section{Proofs of the multidimensional results}\label{sec:d.dim}
In this section we prove our multidimensional results Theorem~\ref{thm:bayes.ddim} and Theorem~\ref{thm:freq.ddim}. To avoid repetition we only outline the proofs and highlight the parts that differ from the one-dimensional case.
\begin{proof}[Proof of Theorem~\ref{thm:bayes.ddim}]
{\bf Upper bound:} Consider the following multistage querying strategy:
\begin{enumerate}
\item For each $i=1,...,d$, submit $K_1=\lfloor \log (1/ (\lceil \Ld\rceil \delta))\rfloor$ queries on $X_i^*$ via bisection. This stage locates $X^*$ in a cube $J=[a_1,b_1]\times...\times [a_d,b_d]$ of diameter $2^{-K_1}\geq \lceil \Ld\rceil \delta$. This stage involves $dK_1$ queries.
\item For each $i=1,...,d$, run replicated bisection on $[a_i,b_i]$: First evenly split $[a_i,b_i]$ into $\lceil \Ld\rceil$ subintervals. For all $q$ that are endpoints of the subintervals, query the events $\{X_i^*\geq q\}$. From the responses determine the true subinterval $X_i^*$ is in. Run bisection on this subinterval to find $X_i^*$ up to $\epsilon$-accuracy and submits cloned queries in all the other subintervals. This stage involves $d(\lceil \Ld\rceil-1+ \lceil \Ld\rceil \lceil \log (2\delta/\epsilon)\rceil)$ queries.
\end{enumerate}
To show this strategy is $(\delta,L)$-private, notice that from the adversary's perspective, for each $i$ there are $\lceil \Ld\rceil$ subintervals that contain $X_i^*$ with equal probability. That creates $L'=\lceil \Ld\rceil^d$ cubes $J_1,..., J_{L'}$ that are at least $\delta$ apart in $\|\cdot\|_\infty$ distance. Since $L'\geq L$, the adversary cannot achieve $\|\widetilde{X}-X^*\|_\infty\leq \delta$ with probability greater than $1/L$.

{\bf Lower bound:} Suppose $\phi$ is an $\epsilon$-accurate and $(\delta,L)$-private strategy that submits at most $n$ queries. By assumption the query sequence on $X_i^*$ depends only on $X_i^*$ and $Y_i$. Thus we can write $\mathbf{n}_i(X_i^*,Y_i)$ for the number of queries submitted on the $i$'th coordinate. Let $n_i=\sup_{X_i^*,Y_i} \mathbf{n}_i(X_i^*,Y_i)$, so that $n \ge \sum_{i\leq d}n_i$. 
 As in the one-dimensional proof we can assume that the learner always submits exactly $n_i$ queries on $X_i^*$ by filling up the end of the query sequence with trivial queries on $\{X_i^*\geq 0\}$.

Consider an adversary that adopts the truncated proportional-sampling scheme on each coordinate. Let $q_i=(q_{i,1},...,q_{i, n_i})$ denote the sequence of queries submitted on $X_i^*$. The adversary obtains $\widetilde{X}_i$ by sampling from the empirical distribution of $q_{i,K_2+1},..., q_{i,n_i}$ with $K_2=\lfloor \log (1/ ( \Ld\delta))\rfloor$. Since $q_i$ only depends on $X_i^*$ and $Y_i$ with $\{(X_i^*,Y_i)\}_{i\leq d}$ mutually independent, we have that the sequences $q_1,...,q_d$ are also mutually independent. Thus
\begin{align*}
\mathbb{P}\left\{\left\|\widetilde{X}-X^*\right\|_\infty\leq \delta/2\right\}
= & \mathbb{E}\prod_{i\leq d}\left(\frac{\sum_{j=K_2+1}^{N_i}\mathds{1}\{|X_i^*-q_{i,j}|\leq \delta/2\}}{n_i-K_2}\right)\\
= & \prod_{i\leq d}\left(\frac{\sum_{j=K_2+1}^{N_i}\mathbb{P}\{|X_i^*-q_{i,j}|\leq \delta/2\}}{n_i-K_2}\right)\leq \frac{1}{L}.
\end{align*}
Via the same analysis as in the one-dimensional proof,
\[
\sum_{j=K_2+1}^{n_i}\mathbb{P}\{|X_i^*-q_{i,j}|\leq \delta/2\} \geq \log (\delta/4\epsilon)- \delta 2^{K_2}.
\]
Deduce that
\[
\prod_{i\leq d}(n_i-K_2)\geq L\left(\log (\delta/4\epsilon)- \delta 2^{K_2}\right)^d.
\]
Given the lower bound on $\prod_{i\leq d}(n_i-K_2)$, the minimal value for $\sum_{i\leq d}(n_i-K_2)$ is achieved when all the summands are equal. Hence the total number of queries is at least
\[
\sum_{i\leq d}n_i \geq dK_2 + d\Ld\left(\log (\delta/4\epsilon)- \delta 2^{K_2}\right)
\geq d\left(K_2 + \Ld\left(\log\frac{\delta}{\epsilon}-2\right)-1\right),
\]
where the second inequality is from the choice of $K_2$.
\end{proof}

\begin{proof}[Proof of Theorem~\ref{thm:freq.ddim}]
{\bf Upper bound:} As in the one-dimensional proof, the upper bound is proved by constructing a querying strategy that first submits $L$ guesses (intervals of length $\epsilon$) that are at least $\delta$ apart. In $[0,1]^d$, we submit $\lceil \Ld\rceil$ guesses on the location of $X_i^*$ for each coordinate $i\leq d$. These guesses across the $d$ coordinates form $L'=\lceil \Ld\rceil^d $ cubes of diameter $\epsilon$, all of which are contained in the information set $\mathcal{I}(\bar{q})$. Moreover the centers of these $L'$ cubes are at least $\delta$ away from each other in $\|\cdot\|_\infty$ norm. Since $L'\geq L$, the $\delta$-covering number of $\mathcal{I}(\bar{q})$ is at least $L$.

The way the guesses are submitted following algorithm~\ref{alg:small.delta} when $\delta\leq 2^{-\lceil \Ld\rceil}$ and algorithm~\ref{alg:large.delta} when $\delta>2^{-\lceil \Ld\rceil}$, except that $L$ is replaced with $\lceil \Ld\rceil$ in the algorithms.
Each guess consists of two queries $\epsilon$ away from each other. In total it takes $2d\lceil \Ld\rceil$ queries to submit all the guesses. If none of the guesses is correct, the guesses help the learner narrow down the range of $X_i^*$ to an interval $J_i$. Via similar analysis as in the one-dimensional proof, we have $|J_i|=2^{-\lceil\Ld\rceil}$ when $\delta\leq 2^{-\lceil\Ld\rceil}$ and $|J_i|\in [\delta,2\delta]$ otherwise. The next stage of the strategy simply runs bisection in $J_i$ to achieve $\epsilon$-accuracy on $X_i^*$, which requires at most $\log (|J_i|/\epsilon)\leq \log (\max\{2^{-\lceil \Ld\rceil}, \delta\}/\epsilon)+1$ queries.

{\bf Lower bound:} First consider the case $\delta\leq 2^{-\lceil \Ld\rceil}$. 
%\nbr{Shoudn't be $\delta \leq 2^{-\lceil \Ld\rceil}$}
From the lower bound proof of Theorem~\ref{thm:freq}, we have for each $i\leq d$,
\begin{enumerate}[(i)]
\item There exists an interval $J_i$ of length $2\delta$ such that if $X_i^*$ is in $J_i$, then there are at least $\log (1/\delta)-3$ queries on $X_i^*$ that are outside of $J_i$ and are separated from each other by at least $\delta$;
\item For each interval $J$ of length $2\delta$, there exists $x_i\in J$ such that if $x_i$ is the true value for $X_i^*$, then there are at least $\log (\delta/\epsilon)$ queries on $X_i^*$ that are in $J$.
\end{enumerate}
The above guarantee that there are at least $d(\log (1/\epsilon)-3)$ queries in total. The extra queries arise from the privacy requirement. For a point $x$ to enter the information set $\mathcal{I}(\bar{q})$, there must be at least $2$ queries on $X_i^*$ surrounding $x_i$, that are $\epsilon$-close to each other. Hence $\mathcal{I}(\bar{q})$ is contained in the union of $d$-dimensional hyperrectangles $\prod_{i\leq d}[s_i,t_i]$, where $s_i,t_i$ are pairs of queries on $X_i^*$ with $0<t_i-s_i\leq \epsilon$. Suppose aside from the queries identified by (i), there are $m_i$ extra queries on $X_i^*$ outside of $J_i$. Approximately speaking, the queries outside of $\prod_{i\leq d}J_i$ form at most $\prod_{i\leq d}m_i$ hyperrectangles that are contained in $\mathcal{I}(\bar{q})$. Therefore the $\delta$-covering number of $\mathcal{I}(\bar{q})$ is at most $\prod_{i\leq d}m_i$. Deduce that $\prod_{i\leq d}m_i \geq L$ and thus $\sum_{i\leq d}m_i\geq d\Ld$. The queries identified in (i),(ii) plus $\sum_{i\leq d}m_i$ is approximately the lower bound stated in Theorem~\ref{thm:freq.ddim}. The extra constant $-5$ is to account for the hyperrectangles that are formed with queries that are either near the boundary of $[0,1]^d$ or near $\prod_{i\leq d}J_i$.

The proof for the $\delta> 2^{-\lceil \Ld\rceil}$ 
% \nbr{See above.}
case is much simpler. Fix any cube $J=\prod_{i\leq d} J_i$ of diameter $\delta$. For each $i\in [d]$, there are at least $\log (\delta/\epsilon)$ queries about $X_i$ inside of $J_i$. Outside of $J$ there are at least $2d\Ld$ queries to ensure that the information set contains at least $L$ hyperrectangles. The proof is similar to the previous case and is therefore omitted.
\end{proof}

\section{Proof of Theorem~\ref{thm:noisy} (noisy responses)}
\subsection{Proof of the upper bounds}\label{sec:proof.upper.noisy}
\subsubsection{Proof of the upper bound in~\eqref{eq:res.mean}}
It suffices to show that under definition (a) of $\epsilon$-accuracy, the querying strategy described in Section~\ref{sec:alg.noisy} is $\epsilon$-accurate, $(\delta,L)$-private, and achieves the upper bound in~\eqref{eq:res.mean}.

{\bf Accuracy:}
Discuss the following events:
\begin{enumerate}
\item $\mathcal{E}_1$: the BZ algorithm returns the wrong subinterval in stage 1. In other words, $X^*\notin I$.
\item $\mathcal{E}_{2,j}$ for $j=1,...,L'-1$: stage 1 does not incur an error, but stage 2 returns a subinterval out of $J_1,..., J_{L'}$ that does not contain $X^*$, with $\widehat{k}$ at distance $j$ away from the correct index. In other words, $\mathcal{E}_{2,j} = \{X^*\in J_{k^*} \text{ for some } k^*\in [L'], \text{ and } |\widehat{k}-k^*|=j\}$.
\item $\mathcal{E}_3$: the BZ algorithm makes an error in stage 3: $X^*\in J_{\widehat{k}}$ but $X^*\notin J$.
\item $\mathcal{E}_4$: $X^*\in J$.
\end{enumerate}
The events above are disjoint, and their union forms the entire probability space. It is also easy to see that $|\widehat{X}-X^*|$ is upper bounded by $1, (j+1)\delta, \delta, \epsilon/8$ on $\mathcal{E}_1, \mathcal{E}_{2,j}, \mathcal{E}_3, \mathcal{E}_4$ respectively. Hence
\begin{equation}
\label{eq:mean.split}
\mathbb{E}\left|\widehat{X}-X^*\right|
\leq \mathbb{P}\left\{\mathcal{E}_1\right\} + \sum_{j=1}^{L'-1} (j+1)\delta\mathbb{P}\left\{\mathcal{E}_{2,j}\right\} + \delta\mathbb{P}\left\{\mathcal{E}_3\right\} + \tfrac{\epsilon}{8}\mathbb{P}\left\{\mathcal{E}_4\right\}.
\end{equation}
We claim that all events but $\mathcal{E}_4$ occur with low probability. Firstly, it follows from Lemma~\ref{lmm:BZ} that
\begin{align}
\label{eq:event.1}\mathbb{P}\left\{\mathcal{E}_1\right\}\leq & \frac{1}{L'\delta}2^{-c_3(p)K_1}\leq \frac{\epsilon}{8},\\
\label{eq:event.3}\mathbb{P}\left\{\mathcal{E}_3\right\}\leq & \frac{\delta}{\epsilon/4}2^{-c_3(p)K_2} \leq \frac{\epsilon}{8\delta}
\end{align}
from the choice of $K_1,K_2$. 

To handle $\mathcal{E}_{2,j}$, note that conditional on $X^*\in J_{k^*}$,
\[
m_i\stackrel{indep}{\sim}
\begin{cases}
\text{Binomial}(m,p) & \text{for}\quad 1\leq i \leq k^*-1; \\
\text{Binomial}(m,1-p) & \text{for}\quad k^*\leq i\leq L'.
\end{cases}
\]
Hence $\widehat{k}=\arg\max_{1\leq k\leq L'}\sum_{i=1}^{k-1}m_i+\sum_{i=k}^{L'-1}(m-m_i)$ is the maximum likelihood for $k^*$, and for all $k\neq k^*$,
\begin{align*}
\mathbb{P}\left\{\widehat{k}=k \rvert X^*\in J_{k^*}\right\}\geq &  \mathbb{P}\left\{\sum_{i=1}^{k-1}m_i+\sum_{i=k}^{L'-1}(m-m_i) \leq \sum_{i=1}^{k^*-1}m_i+\sum_{i=k^*}^{L'-1}(m-m_i)\right\}\\
= & \mathbb{P}\left\{B\leq \frac{|k-k^*|m}{2}\right\},\hfill\text{for some }B\sim \text{Binomial}(|k-k^*|m, p),
\end{align*}
which is further bounded by $2^{-c_4(p) |k-k^*|m}$ from the binomial tail bound~\cite[Theorem~2.1]{mulzer2018five}. Deduce that
\begin{equation}
\label{eq:event.2.j}
\mathbb{P}\left\{\mathcal{E}_{2,j}\right\}\leq \mathbb{P}\{X^*\in J_{k^*}, \widehat{k}=k^*-j\} + \mathbb{P}\{X^*\in J_{k^*}, \widehat{k}=k^*+j\}\leq 2\cdot 2^{-c_4(p) jm}.
\end{equation}
Thus
\begin{align}
\nonumber\sum_{j=1}^{L'-1} (j+1)\delta\mathbb{P}\left\{\mathcal{E}_{2,j}\right\} \leq & 2\sum_{j=1}^{L'-1} (j+1)\delta 2^{-c_4(p) jm}\\
\nonumber \leq & 2\delta\left(\sum_{j=1}^\infty 2^{-c_4(p)jm} + \sum_{i=1}^\infty \sum_{j=i}^\infty 2^{-c_4(p)jm}\right)\\
= & \frac{2\delta 2^{-c_4(p) m}}{1-2^{-c_4(p)m}}\left(1+\frac{1}{1-2^{-c_4(p) m}}\right)\\
\label{eq:event.2} \leq & 8\delta 2^{-c_4(p) m}\leq \epsilon/8,
\end{align}
where the last two inequalities are due to the choice $m=\frac{1}{c_4(p)}\log \frac{64\delta}{\epsilon}$ and $\delta\geq 2\epsilon$.

Combining~\eqref{eq:mean.split}-\eqref{eq:event.2} yields that
\[
\mathbb{E}\left|\widehat{X}-X^*\right| \leq \epsilon/8 + \epsilon/8 + \epsilon/8 +\epsilon/8 = \epsilon/2.
\]

{\bf Privacy:}
The goal is to show that for all adversary's estimators $\widetilde{X}$ that could depend on $q$, 
\[
\mathbb{P}\left\{|\widetilde{X}-X^*|\leq \delta/2\right\}\leq \frac{1}{L}.
\]
In the multi-stage algorithm the learner first runs the BZ algorithm on a $L'\delta$-fine grid to obtain an interval estimator $I$, then runs replicated BZ on the $L'$ subintervals $J_1,..., J_{L'}$ of $I$. Write $I^*$ for the true subinterval on the $L'\delta$-fine grid that contains $X^*$. When $X^*\in I$, {\it i.e.} $I^*=I$, we used $k^*$ to index the true subinterval out of $J_1,...,J_{L'}$ that contains $X^*$. For the proof of $(\delta,L)$-privacy, we need to expand the definition of $k^*$ to incorporate the case $X^*\notin I$ as well. Label the $L'$ length-$\delta$ subintervals of $I^*$ as $J^*_1,..., J^*_{L'}$ and define $k^*$ so that $X^*\in J^*_{k^*}$. Recall that $\widehat{k}$ is the learner's estimator of $k^*$. We have
\begin{align}
\nonumber &\mathbb{P}\left\{|\widetilde{X}-X^*|\leq \delta/2\right\}\\
\label{eq:term.1}\leq & \mathbb{P}\{X^*\notin I\}\\
\label{eq:term.2} + & \mathbb{P}\left\{|\widetilde{X}-X^*|\leq \delta/2, \widehat{k}<k^*, X^*\in I\right\}\\
\label{eq:term.3} + & \mathbb{P}\left\{|\widetilde{X}-X^*|\leq \delta/2, \widehat{k}>k^*, X^*\in I\right\}\\
\label{eq:term.4} + & \mathbb{P}\left\{|\widetilde{X}-X^*|\leq \delta/2, \widehat{k}=k^*, X^*\in I\right\}.
\end{align}
Of the four terms above, the first term equals the probability that the algorithm makes a mistake in the BZ algorithm in the first stage:
\[
\eqref{eq:term.1}= \mathbb{P}\mathcal{E}_1\leq \frac{1}{L'\delta}2^{-c_3(p)K_1}\leq \frac{1}{L'}
\]
where the last inequality holds due to $K_1 = \frac{1}{c_3(p)}\log \frac{8}{L'\epsilon \delta}  \geq \frac{1}{c_3(p)}\log(1/\delta)$ in view of $2\epsilon \le 1/L$.

For the second term, use the Bayes rule to write
\begin{equation}\label{eq:term.2.bayes}
\eqref{eq:term.2} = \mathbb{P}\left\{\widehat{k}<k^*, X^*\in I\right\}\mathbb{P}\left\{|\widetilde{X}-X^*|\leq \frac{\delta}{2}\mid \widehat{k}<k^*, X^*\in I\right\}.
\end{equation}
Since $\widetilde{X}$ is a function of $q$,
\begin{align}
\nonumber& \mathbb{P}\left\{|\widetilde{X}-X^*|\leq \delta/2\mid \widehat{k}<k^*, X^*\in I\right\}\\
\nonumber\leq & \mathbb{E}_q\left(\sup_{t\in [0,1]}\mathbb{P}\left\{X^*\in [t-\delta/2, t+\delta/2]\cap I\mid \widehat{k}<k^*, X^*\in I, q\right\}\right)\\
\label{eq:term.2.max}\leq & 2\mathbb{E}_q\left(\max_{k\leq L'}\mathbb{P}\left\{k^*=k\mid \widehat{k}<k^*, X^*\in I, q\right\}\right).
\end{align}
The last inequality is because all intervals of the form $[t-\delta/2, t+\delta/2]\cap I$ must be covered by the union of two consecutive subintervals $J_k\cap J_{k+1}$ for some $k$. 

Next we show that for all $q$ and $k$,
\[
\mathbb{P}\left\{k^*=k\mid \widehat{k}<k^*, X^*\in I, q\right\} = \mathbb{P}\left\{k^*=k\mid \widehat{k}<k^*, X^*\in I\right\}, i.e.,
\]
\[
\mathcal{L}(q\mid k^*=k,\widehat{k}<k^*, X^*\in I) = \mathcal{L}(q\mid \widehat{k}<k^*, X^*\in I).
\]
In other words, $k^*$ is independent of $q$ conditional on $\widehat{k}<k^*$ and $X^*\in I$. Denote the queries submitted in the three stages as $q^{(1)}$, $q^{(2)}$ and $q^{(2)}$. We will establish conditional independence in two steps:
\begin{enumerate}
\item Show that $(q^{(1)}, q^{(2)})$ is independent of $k^*$ conditional on $\widehat{k}<k^*$ and $X^*\in I$:

note that given $I^*$, the conditional distribution of $X^*$ is uniform on $I^*$. Therefore $k^*$ is distributed uniformly on $[L']$ and is independent of $I^*$. Since the BZ algorithm only queries the endpoints of the subintervals, the distribution of the responses in the first stage $r^{(1)}$ only depends on $X^*$ through $I^*$. Hence $k^*$ is independent of the tuple $(I^*, r^{(1)})$. Moreover, $r^{(1)}$ completely determines $I$, so that $k^*$ is independent of $(I^*, I, r^{(1)})$. On the other hand, when $X^*\in I$, $\widehat{k}$ is can be written as $f(k^*,\textbf{noise}^{(2)})$, a function of only $k^*$ and the binary noise variables in the second stage. We have
\begin{align*}
&\mathcal{L}\left(r^{(1)}\mid k^*=k, \widehat{k}<k^*, X^*\in I\right)\\
= & \mathcal{L}\left(r^{(1)}\mid k^*=k, f(k^*,\textbf{noise}^{(2)})<k^*, I^*=I\right)\\
= & \mathcal{L}\left(r^{(1)}\mid \widehat{k}<k^*, I^*=I\right).
\end{align*}
The second equality is because by the independence of $(k^*, \textbf{noise}^{(2)})$ and $(I^*,I,r^{(1)})$, $(k^*, \textbf{noise}^{(2)})$ and $r^{(1)}$ are conditionally independent given $I^*=I$.
%\nbr{JX. It is still unclear to me how the above follows from the independence.} 
%Note that the conditional distribution of $r^{(1)}$ given $I^*=I$ does not depend on any function of $(k^*, \textbf{noise}^{(2)})$. Here we only drop the event $k^*=k$ because we only need to establish conditional independence between $k^*$ and $r^{(1)}$.

We have shown that $r^{(1)}$ and $k^*$ are independent conditional on $\widehat{k}<k^*$ and $X^*\in I$. Notice that $(q^{(1)},q^{(2)})$ is a deterministic function of $r^{(1)}$. Thus $(q^{(1)}, q^{(2)})$ and $k^*$ are also conditionally independent.
\item Show that $q^{(3)}$ is independent of $k^*$ conditional on $\widehat{k}<k^*$, $X^*\in I$, $q^{(1)}$ and $q^{(2)}$:

conditional on $\widehat{k}<k^*$ and $X^*\in I$, all the queries submitted in $\widehat{J}=J_{\widehat{k}}$ are smaller than $X^*$. Therefore the joint distribution of the queries in the third stage that fall in $\widehat{J}$ does not depend on $k^*$. Since the queries in the other subintervals are only copies of those in $\widehat{J}$, we have that $q^{(3)}$ is independent of $k^*$, conditional on $\widehat{k}<k^*$, $X^*\in I$, $q^{(1)}$ and $q^{(2)}$. 
\end{enumerate}

We have shown that the entire query sequence $q$ is independent of $k^*$ conditional on $\widehat{k}<k^*$ and $X^*\in I$. Thus
\[
\mathbb{P}\left\{k^*=k\mid \widehat{k}<k^*, X^*\in I, q\right\} = \mathbb{P}\left\{k^*=k\mid \widehat{k}<k^*, X^*\in I\right\}.
\]

Combine with~\eqref{eq:term.2.bayes} and~\eqref{eq:term.2.max} to obtain
\begin{align*}
\eqref{eq:term.2}\leq & \mathbb{P}\left\{\widehat{k}<k^*, X^*\in I\right\}\cdot 2\max_{k\leq L'}\mathbb{P}\left\{k^*= k\mid \widehat{k}<k^*, X^*\in I\right\}\\
= & 2\max_{k\leq L'}\mathbb{P}\left\{k^*=k,\widehat{k}<k^*, X^*\in I\right\}\\
\leq & 2\max_{k\leq L'}\mathbb{P}\left\{k^*=k\right\}=2/L'.
\end{align*}
where the last inequality is because $k^*$ is distributed uniformly on $[L']$. 

Following the same arguments, $\eqref{eq:term.3}\leq 2/L'$. Next we handle term~\eqref{eq:term.4}. 

Once again because $\widetilde{X}$ is a function of $q$, we have
\[
\eqref{eq:term.4} \leq  2\mathbb{E}\left(\max_{k\leq L'}\mathbb{P}\left\{k^*= k\mid \widehat{k}=k^*, X^*\in I, q\right\}\right).
\]
We claim that $k^*$ and $q$ are independent conditional on $\widehat{k}=k^*$ and $X^*\in I$. By the same arguments as in the analysis of~\eqref{eq:term.2} we can show that $k^*$ is independent of $(q^{1},q^{(2)})$ conditional on $\widehat{k}=k^*$ and $X^*\in I$. It remains to show that $k^*$ is independent of $q^{(3)}$ conditional on $\widehat{k}=k^*$, $X^*\in I$ and $(q^{(1)},q^{(2)})$. In other words,
\[
\mathcal{L}\left(q^{(3)}\mid k^*=k, \widehat{k}=k^*, X^*\in I, q^{(1)}, q^{(2)}\right) = \mathcal{L}\left(q^{(3)}\mid  \widehat{k}=k^*, X^*\in I, q^{(1)}, q^{(2)}\right)
\]
%Use the Bayes rule to write
%\[
%\mathbb{P}\left\{k^*= k\mid \widehat{k}=k^*, X^*\in I, q\right\} = \mathbb{P}\left\{k^*=k\mid \widehat{k}=k^*, X^*\in I\right\}\frac{\mathbb{P}\left\{q\mid k^* = k, \widehat{k}=k, X^*\in I\right\}}{\mathbb{P}\left\{q\mid \widehat{k}=k, X^*\in I\right\}}.
%\]
To show the above, first note that conditional on $k^*=k, \widehat{k}=k^*$, $X^*\in I$ and $(q^{(1)},q^{(2)})$, $X^*$ is distributed uniformly on $J_k$. The queries sequence $q^{(3)}$ are generated from running the BZ algorithm on $J_k$, and replicating in the other subintervals. The conditional distribution of the $q^{(3)}$ is therefore independent of the value of $k$. Thus
\begin{align*}
\eqref{eq:term.4}= & \mathbb{P}\left\{|\widetilde{X}-X^*|\leq \delta/2\mid \widehat{k}=k^*, X^*\in I\right\}\mathbb{P}\left\{\widehat{k}=k^*,X^*\in I\right\}\\
\leq & 2\max_{k\leq L'}\mathbb{P}\left\{k^*=k\mid \widehat{k}=k^*, X^*\in I\right\}\mathbb{P}\left\{\widehat{k}=k^*,X^*\in I\right\}\\
= & 2\max_{k\leq L'}\mathbb{P}\left\{k^*=k, \widehat{k}=k^*, X^*\in I\right\}\\
\leq & 2\max_{k\leq L'}\mathbb{P}\left\{k^*=k\right\} = 2/L'.
\end{align*}

Collect all the terms to deduce that
\[
\mathbb{P}\left\{|\widetilde{X}-X^*|\leq \delta/2\right\} \leq \frac{1}{L'} + \frac{2}{L'} + \frac{2}{L'} + \frac{2}{L'}= \frac{1}{L}
\]
by picking $L'=7L$.

{\bf Query complexity:} the total number of queries submitted by the querying strategy is
\begin{align*}
K_1 + (L'-1) m + L' K_2 = & \frac{1}{c_3(p)}\log\frac{8}{7\epsilon L\delta} + (7L-1)\frac{1}{c_4(p)}\log\frac{64\delta}{\epsilon} + \frac{14L}{c_3(p)}\log\frac{4\sqrt{2}\delta}{\epsilon}\\
\leq & \frac{1}{c_3(p)}\log\frac{8}{7\epsilon L\delta} + \left(\frac{14}{c_3(p)}+\frac{7}{c_4(p)}\right)L\log\frac{64\delta}{\epsilon}\\
\leq & \left(\frac{14}{c_3(p)}+\frac{7}{c_4(p)}\right)\left(\log \frac{1}{\epsilon} + L\log\frac{64\delta}{\epsilon}\right).
\end{align*}
where the last inequality is because $L\delta\geq 2\delta\geq 2\epsilon$.

\subsubsection{Proof of the upper bound in~\eqref{eq:res.prob}}
The proof is similar to that of the upper bound in~\eqref{eq:res.mean}. Adopt the multi-stage querying strategy described in Section~\ref{sec:alg.noisy}, with the parametrization in~\eqref{eq:param.prob}. We claim that this querying strategy is $(\epsilon,M)$-accurate and $(\delta, L)$-private with the desired query complexity as stated in the upper bound of~\eqref{eq:res.prob}.

{\bf Accuracy:} 
Recall the events $\mathcal{E}_1$, $\mathcal{E}_{2,j}, \mathcal{E}_3, \mathcal{E}_4$ defined in the proof of~\eqref{eq:res.mean}. We have
\begin{align*}
\mathbb{P}\left\{|\widehat{X}-X^*|>\epsilon/2\right\} \leq & \mathbb{P}\mathcal{E}_4^c 
= \mathbb{P}\mathcal{E}_1 + \sum_{j=1}^{L'-1}\mathbb{P}\mathcal{E}_{2,j} + \mathbb{P}\mathcal{E}_3\\
\leq & \frac{1}{L'\delta}2^{-c_3(p) K_1} + 2\sum_{j=1}^{L'-1}2^{-c_4(p) jm} + \frac{\delta}{\epsilon/4}2^{-c_3(p) K_2}\\
\leq & \frac{1}{L'\delta}2^{-c_3(p) K_1} + 4\cdot 2^{-c_4(p) m} + \frac{\delta}{\epsilon/4}2^{-c_3(p) K_2}
\end{align*}
where the second inequality follows from~\eqref{eq:event.1},~\eqref{eq:event.3} and~\eqref{eq:event.2.j}. Plug in the the values of $K_1,K_2$ and $m$ to conclude that $\mathbb{P}\{|\widehat{X}-X^*|>\epsilon/2\}\leq 1/(3M)+1/(3M)+1/(3M) = 1/M$.

{\bf Privacy:} the proof for $(\delta,L)$-privacy is almost identical to the proof of~\eqref{eq:res.mean}. The only part that differs is in the treatment of the term~\eqref{eq:term.1} due to a different choice of $K_1$. We have
\[
\eqref{eq:term.1}= \mathbb{P}\{X^*\notin I\}= \mathbb{P}\mathcal{E}_1\leq \frac{1}{L'\delta}2^{-c_3(p)K_1}\leq \frac{1}{L'}
\]
for $K_1=\frac{1}{c_3(p)}\log(3M/\delta)$.

{\bf Query complexity:} the total number of queries submitted is
\begin{align*}
K_1 + (L'-1) m + L' K_2 = & \frac{1}{c_3(p)}\log\frac{3M}{\delta} + (7L-1) \frac{1}{c_4(p)}\log(12 M) + \frac{7L}{c_3(p)}\log\frac{12 M\delta}{\epsilon}\\
\leq & \left(\frac{8}{c_3(p)} + \frac{7}{c_4(p)}\right)\left(\log\frac{1}{\epsilon} + L\log\frac{12 M\delta}{\epsilon}\right).
\end{align*}

\subsection{Proof of the lower bounds}\label{sec:proof.lower.noisy}
\subsubsection{An auxiliary lemma}
The lower bound proofs rely heavily on the following auxiliary lemma, which connects the expected number of queries near $X^*$ with the learner's error probability.

\begin{lemma}\label{lmm:learner.error}
For each deterministic interval $J$ and $\eta>0$,
\[
\mathbb{E}\left(\text{the number of queries in }J\mid X^*\in J\right) \geq \frac{1}{c_2(p)} \left(\left(1-\mathbb{P}\{|\widehat{X}-X^*|>\eta\mid X^*\in J\}\right)\log\frac{|J|}{2\eta}-1\right).
\]
\end{lemma}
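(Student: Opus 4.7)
The plan is to adapt the Fano-inequality argument used in the noiseless Bayesian lower bound to accommodate noisy responses. The key new ingredient is that the responses are noisy regardless of whether the query lies inside or outside $J$, but the \emph{amount of uncertainty} in each response is $h(p)$ when the query sits outside $J$ (because the sign of $X^*-q_i$ is then determined conditional on $X^*\in J$ and the past) and at most $h(1/2)=1$ when the query sits inside $J$. The gap $1-h(p)=c_2(p)$ is exactly what will multiply the expected number of in-$J$ queries.

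Concretely, I would first condition on the random seed $Y=y$ and on the event $X^*\in J$. Since $X^*$ is independent of $Y$ and uniform on $[0,1]$, the conditional law of $X^*$ is uniform on $J$. Applying the continuous Fano inequality (same version as cited in the proof of Theorem~\ref{thm:bayes}, see~\cite[Proposition~2]{duchi2013distance}) to the estimator $\widehat X$ built from $(r_1,\dots,r_n,Y)$ gives
\[
I(X^*;r_1,\dots,r_n\mid X^*\in J,Y=y)\ge\bigl(1-\beta(y)\bigr)\log\tfrac{|J|}{2\eta}-1,
\]
where $\beta(y):=\mathbb{P}\{|\widehat X-X^*|>\eta\mid X^*\in J,Y=y\}$. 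Because conditional on $X^*=x$ and $Y=y$ each query $q_i$ is determined by the past responses and each $r_i$ is independently Bernoulli with parameter $p$ or $1-p$, we have $H(r_1,\dots,r_n\mid X^*,Y=y)=nh(p)$, so the mutual information equals $H(r_1,\dots,r_n\mid X^*\in J,Y=y)-nh(p)$.

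Next I would upper bound $H(r_1,\dots,r_n\mid X^*\in J,Y=y)$ via the chain rule. For each $i$, condition further on $r_1=\rho_1,\dots,r_{i-1}=\rho_{i-1}$; then $q_i=\phi_{i-1}(\rho_1,\dots,\rho_{i-1},y)$ is deterministic. If $q_i\notin J$, then the sign $\mathds{1}\{X^*\ge q_i\}$ is forced by $X^*\in J$, so $r_i$ is Bern$(p)$ or Bern$(1-p)$ with entropy exactly $h(p)$. If $q_i\in J$, the conditional law of $r_i$ is Bern$(\alpha p+(1-\alpha)(1-p))$ for some $\alpha\in[0,1]$, whose entropy is at most $h(1/2)=1$. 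Averaging over $(\rho_1,\dots,\rho_{i-1})$,
\[
H(r_i\mid r_1,\dots,r_{i-1},X^*\in J,Y=y)\le h(p)+c_2(p)\,\mathbb{P}\{q_i\in J\mid X^*\in J,Y=y\}.
\]
Summing over $i$ and writing $N_J$ for the number of queries landing in $J$,
\[
H(r_1,\dots,r_n\mid X^*\in J,Y=y)\le nh(p)+c_2(p)\,\mathbb{E}[N_J\mid X^*\in J,Y=y].
\]

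Combining the upper bound on entropy with the Fano lower bound on mutual information cancels the $nh(p)$ terms and yields
\[
c_2(p)\,\mathbb{E}[N_J\mid X^*\in J,Y=y]\ge (1-\beta(y))\log\tfrac{|J|}{2\eta}-1.
\]
Dividing by $c_2(p)$ and averaging over $y$ (using $\mathbb{E}_Y\beta(Y)=\mathbb{P}\{|\widehat X-X^*|>\eta\mid X^*\in J\}$ and the concavity-free linearity of the right-hand side in $\beta$) produces the claimed bound. The main conceptual step is the per-query entropy dichotomy driven by $q_i\in J$ vs.\ $q_i\notin J$; once that is in place, the rest is bookkeeping analogous to the noiseless Fano argument in the proof of Theorem~\ref{thm:bayes}.
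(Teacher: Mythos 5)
Your proposal is correct and follows essentially the same route as the paper's proof: condition on $Y=y$ and $X^*\in J$, bound the mutual information $I(X^*;r_1,\dots,r_n\mid X^*\in J,Y=y)$ by $c_2(p)$ times the expected number of in-$J$ queries via the chain rule and the per-query entropy dichotomy ($h(p)$ for queries outside $J$ versus at most $h(1/2)$ inside), apply the continuous Fano inequality, and integrate over $y$. The only cosmetic difference is that you split $I$ as $H(r\mid X^*\in J,Y=y)-nh(p)$ and bound the first term, whereas the paper bounds each chain-rule increment of the mutual information directly; the two computations are identical.
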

%\nbr{JX. Why we cannot absorb the average over $j$ into the lemma and restate the conclusion as 
%\begin{align*}
%\mathbb{E}\left(\text{the number of queries in }[X^*-\delta/2,X^*+\delta/2]\right)
%\ge \frac{1}{c_2(p)} \left(\left(1-\mathbb{P}\{|\widehat{X}-X^*|>\eta\}\right)\log\frac{\delta}{4\eta}-1\right)
%\end{align*}
%for $0<\delta \le 2$? In this way, we do not need to repeat the step of averaging $j$. 
%For Proof of~(\ref{eq:na.lower.term2}), we can simply set $\delta=2$ and $\eta=\epsilon$.
%Also, maybe we can move the lemma and the proof before Section C.2.1, as this lemma is only for the learner's estimation and has nothing to do with privacy, 
%so it might be of independent interest.}

\begin{proof}
{\bf Step 1: discretize w.r.t. $Y$.}

By the independence between $X^*$ and the random seed $Y$, we can write
\begin{align}
\nonumber & \mathbb{E}\left(\text{the number of queries in } J \mid X^*\in J\right)\\
\nonumber= &  \int_0^1 \mathbb{E}\left(\text{the number of queries in } J \mid X^*\in J, Y=y\right)dy\\
=&\int_0^1 \sum_{i\leq n}\mathbb{P}\left\{q_i\in J\rvert X^*\in J, Y=y\right\} dy.
\label{eq:tonelli}
\end{align}

{\bf Step 2: establish a rate of information transfer.}

In this step we show that for all $y\in [0,1]$,
\begin{equation}
\label{eq:info.rate}
I(X^*; r_1,..., r_n\rvert X^*\in J, Y=y)\leq c_2(p)\sum_{i\leq n}\mathbb{P}\left\{q_i\in J\mid X^*\in J, Y=y\right\}.
\end{equation}
Recall that $c_2(p)=h(1/2)-h(p)$ where $h(t)= H(\text{Bern}(t))=-t\log t- (1-t)\log (1-t)$. The intuition behind~\eqref{eq:info.rate} is that since the observed responses are passed through a binary symmetric channel that flips the noiseless responses with probability $1-p$, each query reveals at most $h(1/2)-h(p)$ bits of information about $X^*$. Next we prove~\eqref{eq:info.rate}. We abbreviate $\{X^*\in J, Y=y\}$ as $\mathcal{E}_{J,y}$. Start from the left-hand side:
\begin{align}
\nonumber& I(X^*;r_1,..., r_n\rvert \mathcal{E}_{J,y})\\
\nonumber = & I(X^*;r_1 \rvert \mathcal{E}_{J,y}) + \sum_{i=1}^{n-1}I(X^*; r_{i+1}\rvert \mathcal{E}_{J,y}, r_1,...,r_i)\\
\nonumber = & H(r_1\rvert \mathcal{E}_{J,y}) - H(r_1\rvert X^*, \mathcal{E}_{J,y}) +\\
\label{eq:chain.rule}&  \sum_{i=1}^{n-1} \left(H(r_{i+1}\rvert \mathcal{E}_{J,y}, r_1,..., r_i) - H(r_{i+1}\rvert X^*, \mathcal{E}_{J,y}, r_1,..., r_i)\right).
\end{align}
To analyze the $i$'th summand in~\eqref{eq:chain.rule}, write
\[
H(r_{i+1}\rvert \mathcal{E}_{J,y}, r_1,..., r_i) = \sum_{\rho_1,..., \rho_i}\mathbb{P}\{r_1=\rho_1,..., r_i=\rho_i\rvert \mathcal{E}_{J,y}\}H(r_{i+1}\rvert \mathcal{E}_{J,y}, r_1=\rho_1,..., r_i=\rho_i).
\]
Recall that $q_{i+1}=\phi_i (r_1,..., r_i, Y)$. Therefore with the values of $r_1,..., r_i, Y$ fixed, the $i+1$'th query is deterministic. If it lands to the left of $J$, then conditional on $X^*\in J$, we have $X^*\geq q_{i+1}$ with (conditional) probability 1, implying that $r_{i+1} \sim \text{Bern}(p)$. We have
\[
H(r_{i+1}\rvert \mathcal{E}_{J,y}, r_1=\rho_1,..., r_i=\rho_i)  = h(p), \quad \text{if } \phi_i(\rho_1,..., \rho_i, y)\text{ is to the left of }J.
\]
By the same logic, the equality also holds if $\phi_i(\rho_1,..., \rho_i, y)$ is to the right of $J$. If $\phi_i(\rho_1,..., \rho_i, y)$ lands inside of $J$, we can use $h(1/2)$ to bound the conditional entropy of $r_{i+1}$ since it is a binary random variable. Deduce that
\begin{align*}
H(r_{i+1}\rvert \mathcal{E}_{j,y}, r_1,..., r_i) \leq 
 & h(1/2) \sum_{\rho_1,..., \rho_i: \phi_i(\rho_1,..., \rho_i, y)\in J}\mathbb{P}\{r_1=\rho_1,..., r_i=\rho_i\rvert \mathcal{E}_{J,y}\} + \\
& h(p) \sum_{\rho_1,..., \rho_i: \phi_i(\rho_1,..., \rho_i, y)\notin J}\mathbb{P}\{r_1=\rho_1,..., r_i=\rho_i\rvert \mathcal{E}_{J,y}\}\\
= & h(1/2) \mathbb{P}\{q_{i+1}\in J \rvert \mathcal{E}_{J,y}\} + h(p) \mathbb{P}\{q_{i+1}\notin J \rvert \mathcal{E}_{J,y}\}.
\end{align*}
On the other hand, if we condition on $\mathcal{E}_{J,y}, r_1,..., r_i$ and the value of $X^*$, then not only is $q_{i+1}$ deterministic, so is $\mathds{1}\{X^*\geq q_{i+1}\}$. As a result, $r_{i+1}$ is distributed Bernoulli with success probability either $p$ or $1-p$ depending on the relative position of $q_{i+1}$ to $X^*$. Hence
\[
H(r_{i+1}\rvert X^*, \mathcal{E}_{J,y}, r_1,..., r_i) = h(p).
\]
We have shown that
\begin{align}
\nonumber& H(r_{i+1}\rvert \mathcal{E}_{J,y}, r_1,..., r_i) - H(r_{i+1}\rvert X^*, \mathcal{E}_{J,y}, r_1,..., r_i)\\
\nonumber\leq & h(1/2) \mathbb{P}\{q_{i+1}\in J \rvert \mathcal{E}_{J,y}\} + h(p) \mathbb{P}\{q_{i+1}\notin J \rvert \mathcal{E}_{J,y}\} - h(p)\\
\label{eq:h.i+1}= & (h(1/2)-h(p)) \mathbb{P}\{q_{i+1}\in J \rvert \mathcal{E}_{J,y}\}.
\end{align}
Similarly we obtain the following bound:
\begin{equation}
\label{eq:h.1}
H(r_1\rvert \mathcal{E}_{J,y}) - H(r_1\rvert X^*, \mathcal{E}_{J,y}) \leq (h(1/2)-h(p)) \mathbb{P}\{q_1\in J \rvert \mathcal{E}_{J,y}\}.
\end{equation}
Combine~\eqref{eq:chain.rule},~\eqref{eq:h.i+1} and~\eqref{eq:h.1} to finish the proof of~\eqref{eq:info.rate}.

{\bf Step 3: apply Fano's inequality to connect the learner's probability of error with the expected number of queries in $J$.}

From the continuous Fano's inequality~\cite[Proposition~2]{duchi2013distance},
\[
\mathbb{P}\left\{|\widehat{X}-X^*|>\eta\rvert \mathcal{E}_{J,y}\right\}
\geq 1-\frac{I(X^*; r_1,..., r_n\rvert \mathcal{E}_{J,y})+1}{\log \frac{|J|}{2\eta}}.
\]
Combining the above with~\eqref{eq:info.rate} yields
\begin{equation}
\label{eq:fano.step}
\sum_{i\leq n}\mathbb{P}\left\{q_i\in J\rvert \mathcal{E}_{J,y}\right\} \geq \frac{1}{c_2(p)}\left(\left(1-\mathbb{P}\left\{|\widehat{X}-X^*|>\eta\rvert \mathcal{E}_{J,y}\right\}\right)\log \frac{|J|}{2\eta} - 1\right).
\end{equation}

{\bf Step 4: integrate w.r.t. $Y$.}

Combine~\eqref{eq:disc} with~\eqref{eq:fano.step}, and integrate w.r.t. $Y$ to obtain
\begin{align*}
& \mathbb{E}\left(\text{the number of queries in } J \mid X^*\in J\right) \\
\geq & \int_0^1 \frac{1}{c_2(p)}\left(\left(1-\mathbb{P}\left\{|\widehat{X}-X^*|>\eta\rvert \mathcal{E}_{J,y}\right\}\right)\log \frac{|J|}{2\eta} - 1\right)dy\\
= & \frac{1}{c_2(p)}\left(\left(1-\mathbb{P}\{|\widehat{X}-X^*|>\eta\mid X^*\in J\}\right)\log\frac{|J|}{2\eta}-1\right).
\end{align*}
\end{proof}

\subsubsection{Proof of the lower bound in~\eqref{eq:res.mean}}\label{sec:proof.na.lower}
Suppose $\phi$ is an $\epsilon$-accurate and $(\delta,L)$-private querying strategy that submits at most $n$ queries. As in the noiseless case, we can assume WLOG that the learner always submits exactly $n$ queries by concatenating trivial queries at 0 to the end of the query sequence. 

%We also argue that it suffices to prove the lower bound for the $p=1$(noiseless response) case. To see that, suppose the strategy $\phi$ consists of functionals $f_0$(initializer) and $\phi_1,..., \phi_{n}$. If the noiseless responses $r_1,..., r_n$ are observed, then the learner can always obtain a noisy version of the responses $\widetilde{r}_1, ..., \widetilde{r}_n$ by flipping the value of each response with probability $1-p$. Define the querying strategy $\widetilde{\phi} = (\widetilde{f}_0, \widetilde{\phi}_1,..., \widetilde{\phi}_{n})$ as follows: let $\widetilde{f}_0(Y) = f_0(Y)$, and $\widetilde{\phi}_i(r_1,..., r_i, Y) = \phi_(\widetilde{r}_1,..., \widetilde{r}_i, Y)$. Then $\widetilde{\phi}$ also fulfills the requirements of $\epsilon$-accuracy and $(\delta, L)$-privacy and submits $n$ queries. Thus the optimal query complexity with noiseless responses cannot be higher than $n$.
We will split the lower bound in~\eqref{eq:res.mean} into the following two inequalities, which we will prove in order.
\begin{enumerate}[(i)]
\item 
\label{eq:na.lower.term1}
$n\geq \frac{1}{2c_2(p)}L\log\frac{\delta}{16\epsilon}$.

\item
\label{eq:na.lower.term2}
$n\geq \frac{1}{2c_2(p)}\log\frac{1}{8\epsilon}$.
\end{enumerate}

\begin{proof}[Proof of~(\ref{eq:na.lower.term1})]
Consider the adversary who adopts the proportional-sampling strategy~\cite{xu2018query}, {\it i.e.}, $\widetilde{X}$ is the sampled from the empirical distribution of all the queries. We have
\[
\mathbb{P}\{|\widetilde{X}-X^*|\leq \delta/2\}\geq \frac{1}{n} \mathbb{E}\left(\text{the number of queries in the interval }[X^*-\delta/2,X^*+\delta/2]\right).
\]
For a querying strategy to be $(\delta,L)$-private, we must have $\mathbb{P}\{|\widetilde{X}-X^*|\leq \delta/2\}\leq 1/L$. Hence
\begin{equation}
\label{eq:prop.sample}
n\geq L\mathbb{E}\left(\text{the number of queries in the interval }[X^*-\delta/2,X^*+\delta/2]\right).
\end{equation}

Divide $[0,1]$ into length $\delta/2$ subintervals labeled $J_1,..., J_{2/\delta}$ (again ignoring non-divisibility issues). Suppose $J^*$ is the subinterval that contains $X^*$, then $J^*$ is distributed uniformly on $\{J_1,..., J_{2/\delta}\}$, and it must be a subset of $[X^*-\delta/2,X^*+\delta/2]$. Therefore
\begin{align}
\nonumber& \mathbb{E}\left(\text{the number of queries in }[X^*-\delta/2,X^*+\delta/2]\right)\\
\nonumber \geq & \mathbb{E}\left(\text{the number of queries in }J^*\right)\\
\label{eq:disc}\geq & \frac{\delta}{2}\sum_{j\leq 2/\delta} \mathbb{E}\left(\text{the number of queries in } J_j \rvert X^*\in J_j\right).
\end{align}
%\nbr{I do not see immediately how you derive the first inequality. I guess to be precise, you may use the following inequality
%$\indc{q_i \in X^*-\delta/2,X^*+\delta/2] } \ge \sum_{j\leq 2/\delta} \indc{q_i \in J_j, X^* \in J_j } $ and $\mathbb{P}\left\{ X^* \in J_j \right\} = \delta/2$}
%\nb{Some intermediate steps are added to explain this inequality.}

By applying Lemma~\ref{lmm:learner.error} with $J=J_j$ and $\eta=\epsilon$, we have for all $j\in [2/\delta]$,
\[
\mathbb{E}\left(\text{the number of queries in }J_j\mid X^*\in J_j\right) \geq \frac{1}{c_2(p)} \left(\left(1-\mathbb{P}\{|\widehat{X}-X^*|>\epsilon\mid X^*\in J_j\}\right)\log\frac{\delta}{4\epsilon}-1\right).
\]
Plug into~\eqref{eq:disc} to obtain
\begin{align*}
& \mathbb{E}\left(\text{the number of queries in }[X^*-\delta/2,X^*+\delta/2]\right)\\
\geq &\frac{\delta}{2}\sum_{j\leq 2/\delta}\frac{1}{c_2(p)} \left(\left(1-\mathbb{P}\{|\widehat{X}-X^*|>\epsilon\mid X^*\in J_j\}\right)\log\frac{\delta}{4\epsilon}-1\right)\\
= & \frac{1}{c_2(p)} \left(\left(1-\mathbb{P}\{|\widehat{X}-X^*|>\epsilon\}\right)\log\frac{\delta}{4\epsilon}-1\right)
\geq \frac{1}{2c_2(p)}\log\frac{\delta}{16\epsilon},
\end{align*}
where the last inequality is from $\mathbb{P}\{|\widehat{X}-X^*|>\epsilon\}\leq \mathbb{E}|\widehat{X}-X^*|/\epsilon\leq 1/2$. We have arrived at the desired lower bound
\[
n\geq L \mathbb{E}\left(\text{the number of queries in }[X^*-\delta/2,X^*+\delta/2]\right)
\geq \frac{1}{2c_2(p)}L\log\frac{\delta}{16\epsilon}.
\]
\end{proof}

\begin{proof}[Proof of~(\ref{eq:na.lower.term2})]
Apply lemma~\ref{lmm:learner.error} with $J=[0,1]$ and $\eta=\epsilon$, we have
\[
n\geq \frac{1}{c_2(p)}\left(\left(1-\mathbb{P}\{|\widehat{X}-X^*|>\epsilon\}\right)\log\frac{1}{2\epsilon}-1\right)\geq \frac{1}{c_2(p)}\log\frac{1}{8\epsilon},
\]
where the second inequality is from $\mathbb{P}\{|\widehat{X}-X^*|>\epsilon\}\leq 1/2$.
\end{proof}

\subsubsection{Proof of the lower bound in~\eqref{eq:res.prob}}
% the Llog(\delta/\epsilon) part
Suppose $\phi$ is an $(\epsilon, M)$-accurate and $(\delta,L)$-private querying strategy that submits at most $n$ queries. Argue as in the proof of~\eqref{eq:res.mean} that we can assume WLOG that the learner always submits exactly $n$ queries. We will split the lower bound in~\eqref{eq:res.prob} into the following three inequalities and prove them in order.
\begin{enumerate}[(i)]
\item 
\label{eq:nb.lower.term1}
$n\geq \frac{L}{c_2(p)}\log \frac{\delta}{8\epsilon}.$

\item
\label{eq:nb.lower.term2}
$n\geq \frac{1}{c_2(p)}\log \frac{1}{4\epsilon}.$

\item
\label{eq:nb.lower.term3}
$n\geq \frac{1}{c_1(p)}L\log M.$
\end{enumerate}

\begin{proof}[Proof of~(\ref{eq:nb.lower.term1})]
The proof is essentially identical to the proof of~\eqref{eq:na.lower.term1} in Section~\ref{sec:proof.na.lower}. Since the query complexities $N_\mathsf{avg}()$ and $N_\mathsf{whp}()$ are defined with the same notion of $(\delta,L)$-privacy, the proportional sampling argument used in the proof of~\eqref{eq:res.mean} remains valid here. As before, by considering a proportionally-sampling adversary and partitioning $[0,1]$ into length $\delta/2$ subintervals $J_1,...,J_{2/\delta}$, we have
\begin{align*}
n\geq &L\mathbb{E}\left(\text{the number of queries in }[X^*-\delta/2,X^*+\delta/2]\right)\\
\geq & L\cdot\frac{\delta}{2}\sum_{j\leq 2/\delta}\mathbb{E}\left(\text{the number of queries in }J_j\mid X^*\in J_j\right).
\end{align*}
Applying Lemma~\ref{lmm:learner.error} with $J=J_j$ and $\eta=\epsilon/2$ yields
\[
\mathbb{E}\left(\text{the number of queries in }J_j\mid X^*\in J_j\right) \geq 
\frac{\left(1-\mathbb{P}\left\{|\widehat{X}-X^*|>\epsilon/2 \mid X^*\in J_j\right\}\right)\log\left(\frac{\delta}{2\epsilon}\right)-1}{c_2(p)}.
\]
It follows that
\begin{align*}
n\geq & L\cdot\frac{\delta}{2}\sum_{j\leq 2/\delta}\frac{\left(1-\mathbb{P}\left\{|\widehat{X}-X^*|>\epsilon/2 \mid X^*\in J_j\right\}\right)\log\left(\frac{\delta}{2\epsilon}\right)-1}{c_2(p)}\\
= & \frac{L}{c_2(p)}\left(\left(1-\mathbb{P}\left\{|\widehat{X}-X^*|>\epsilon/2 \mid X^*\in J_j\right\}\right)\log\left(\frac{\delta}{2\epsilon}\right)-1\right).
\end{align*}
From the definition of $(\epsilon,M)$-accuracy, $\mathbb{P}\{|\widehat{X}-X^*|>\epsilon/2 \}\leq 1/M\leq 1/2$. Plug in to yield~(\ref{eq:nb.lower.term1}).
\end{proof}

\begin{proof}[Proof of~(\ref{eq:nb.lower.term2})]
Apply Lemma~\ref{lmm:learner.error} with $J=[0,1]$ and $\eta = \epsilon/2$. We have
\[
n\geq \frac{1}{c_2(p)}\left(\left(1-\mathbb{P}\left\{|\widehat{X}-X^*|>\epsilon/2 \right\}\right)\log(1/\epsilon)-1\right).
\]
Combine with $\mathbb{P}\{|\widehat{X}-X^*|>\epsilon/2 \}\leq 1/2$ to yield~(\ref{eq:nb.lower.term2}).
\end{proof}
%\nbr{Why in the above, we have $ \log\left( \frac{\delta}{2\epsilon}\right)$ instead of 
%$\log\left(\frac{\delta}{4\epsilon } \right) $ and $\log(1/\epsilon)$ instead of $\log(1/2\epsilon)$? I think we sacrifice a factor of $2$ when using
%continuous Fano's inequality. Also, if we does not want to keep the $1-1/M$ factor and assume $M\ge 2$, then the proofs
%of $(i)$ and $(ii)$ are exactly the same as before, right?}
%\nb{The lower bound proofs of the $\Omega(\log(\delta/\epsilon))$ and $\Omega(\log(1/\epsilon))$ terms are rewritten. The proofs of all four inequalities now call a unified lemma (Lemma~\ref{lmm:learner.error}) that connects the number of queries in a fixed interval with the learner's probability of error, conditional on $X^*$ being inside of this interval.}

\begin{proof}[Proof of~(\ref{eq:nb.lower.term3})]
As we have argued in the first two steps of the lower bound proof of~(\ref{eq:na.lower.term1}) in Section~\ref{sec:proof.na.lower}, 
\begin{equation}
\label{eq:prop.sample.3}
n\geq L\cdot \frac{\delta}{2}\sum_{j\leq 2/\delta} \int_0^1 \mathbb{E}(\text{the number of queries in }J_j\rvert \mathcal{E}_{j,y})dy.
\end{equation}
Fano's inequality is no longer sufficient to yield a $n=\Omega(\log M)$ lower bound on the expected number of queries in $J_j$. Instead our proof strategy is to reduce the estimation problem to a binary hypothesis test. 

Denote $J_j=[a_j,b_j]$ with midpoint $m_j=(a_j+b_j)/2$. Let $I_0=[a_j,m_j-\epsilon/2)$ and $I_1=[m_j+\epsilon/2, b_j]$ be two subintervals of $J_j$ that are $\epsilon$ apart. Then any learner that achieves $|\widehat{X}-X^*|\leq \epsilon/2$ must also be able to test between the two hypotheses $X^*\in I_0$ and $X^*\in I_1$. Indeed,
\begin{align}
\nonumber& \mathbb{P}\left\{|\widehat{X}-X^*|>\epsilon/2 \rvert \mathcal{E}_{j,y}\right\}\\
\nonumber\geq & \frac{|I_0|}{|J_j|}\mathbb{P}\left\{\widehat{X}\geq m_j\rvert X^*\in I_0, Y=y\right\} + \frac{|I_1|}{|J_j|}\mathbb{P}\left\{\widehat{X}<m_j\rvert X^*\in I_1, Y=y\right\}\\
\label{eq:multiple.test}= & \frac{2|I_0|}{|J_j|}\left(\tfrac{1}{2}\mathbb{P}\left\{\widehat{X}\geq m_j\rvert X^*\in I_0, Y=y\right\} + \tfrac{1}{2}\mathbb{P}\left\{\widehat{X}< m_j \rvert X^*\in I_1, Y=y\right\}\right),
\end{align}
where the last equality comes from $|I_0|=|I_1|$. The term in the parentheses is the average error probability of the test $\widehat{T}=\mathds{1}\{\widehat{X}\geq m_j\}$ under the uniform prior on the hypotheses. Furthermore, it can be viewed as an average error probability of a family of simple tests by symmetry of $I_0$ and $I_1$:
\begin{align}
\nonumber& \tfrac{1}{2}\mathbb{P}\left\{\widehat{X}\geq m_j\rvert X^*\in I_0, Y=y\right\} + \tfrac{1}{2}\mathbb{P}\left\{\widehat{X}< m_j \rvert X^*\in I_1, Y=y\right\}\\
\nonumber= & \tfrac{1}{2}\int \frac{\mathds{1}\{x\in I_0\}}{|I_0|} \mathbb{P}\left\{\widehat{X}\geq m_j\rvert X^*=x, Y=y\right\} dx + \tfrac{1}{2}\int \frac{\mathds{1}\{x\in I_1\}}{|I_1|} \mathbb{P}\left\{\widehat{X}< m_j\rvert X^*=x, Y=y\right\} dx\\
\nonumber= & \int \frac{\mathds{1}\{x\in I_0\}}{|I_0|}\left(\tfrac{1}{2}\mathbb{P}\left\{\widehat{X}\geq m_j\rvert X^*=x, Y=y\right\} + \tfrac{1}{2}\mathbb{P}\left\{\widehat{X}< m_j\rvert X^*=2m_j - x, Y=y\right\}\right)dx\\
\label{eq:simple.test}\geq & \int \frac{\mathds{1}\{x\in I_0\}}{|I_0|}\inf_{\widehat{T}}\left(\tfrac{1}{2}\mathbb{P}\left\{\widehat{T}(r^{(n)})=1\rvert X^*=x, Y=y\right\} + \tfrac{1}{2}\mathbb{P}\left\{\widehat{T}(r^{(n)})=0\rvert X^*=2m_j - x, Y=y\right\}\right)dx.
\end{align}
We have reduced the problem to lower bounding the minimum error probability of the simple test
\[
H_0: X^*=x\;\;\;\text{against}\;\;\; H_1: X^*=2m_j-x.
\]
From~\cite[Eq~(49)]{kailath1967divergence} we have the lower bound
\begin{equation}
\label{eq:two.point.lower}
\inf_{\widehat{T}}\left(\tfrac{1}{2}\mathbb{P}\left\{\widehat{T}(r^{(n)})=1\rvert H_0, Y=y\right\} + \tfrac{1}{2}\mathbb{P}\left\{\widehat{T}(r^{(n)})=0\rvert H_1, Y=y\right\}\right)\geq \frac{\rho^2}{4},
\end{equation}
where $\rho$ is the Bhattacharyya coefficient:
\[
\rho = \sum_{r^{(n)}\in \{0,1\}^n} \sqrt{\mathbb{P}\{r^{(n)}\rvert H_0,Y=y\}\mathbb{P}\{r^{(n)}\rvert H_1,Y=y\}} = \mathbb{E}\left(2^{\Lambda/2}\rvert H_0,Y=y\right)
\]
with
\begin{align*}
\Lambda = & \log \frac{\mathbb{P}\{r^{(n)}\rvert H_1,Y=y\}}{\mathbb{P}\{r^{(n)}\rvert H_0,Y=y\}}\\
= & \log \frac{\prod_{i\leq n}\left(\mathds{1}\{2m_j-x\geq q_i\}p^{r_i}(1-p)^{1-r_i} + \mathds{1}\{2m_j-x< q_i\}p^{1-r_i}(1-p)^{r_i}\right)}{\prod_{i\leq n}\left(\mathds{1}\{x\geq q_i\}p^{r_i}(1-p)^{1-r_i} + \mathds{1}\{x< q_i\}p^{1-r_i} (1-p)^{r_i}\right)}\\
= & \sum_{i=1}^n \mathds{1}\{q_i\in (x, 2m_j-x]\}\left(r_i \log \frac{p}{1-p} + (1-r_i)\log\frac{1-p}{p}\right).
\end{align*}
By Jensen's inequality,
\begin{align*}
2\log \rho\geq \mathbb{E}\left(\Lambda\rvert H_0,Y=y\right)
&= \sum_{i\leq N}\mathbb{P}\left\{q_i\in (x, 2m_j-x]\rvert H_0,Y=y\right\}\left((1-p)\log\frac{p}{1-p} + p\log\frac{1-p}{p}\right) \\
&\ge  -c_1(p)\mathbb{E}(\text{the number of queries in }J_j\rvert H_0,Y=y)
\end{align*}
where $c_1(p) = p\log\frac{p}{1-p}+(1-p)\log\frac{1-p}{p}$ is always nonnegative. We have arrived at
\[
\rho^2\geq 2^{\mathbb{E}\left(\Lambda\rvert H_0,Y=y\right)} \geq 2^{-c_1(p)\mathbb{E}(\text{the number of queries in }J_j\rvert H_0,Y=y)}.
\]
Together with~\eqref{eq:multiple.test},~\eqref{eq:simple.test} and~\eqref{eq:two.point.lower} we have
\begin{align*}
& \mathbb{P}\left\{|\widehat{X}-X^*|>\epsilon/2 \rvert \mathcal{E}_{j,y}\right\}\\
\geq & \frac{|I_0|}{2|J_j|}\int \frac{\mathds{1}\{x\in I_0\}}{|I_0|}2^{-c_1(p)\mathbb{E}(\text{the number of queries in }J_j\rvert X^*=x,Y=y)}dx\\
\geq & \frac{|I_0|}{2|J_j|}2^{-c_1(p)\mathbb{E}(\text{the number of queries in }J_j\rvert X^*\in I_0,Y=y)},
\end{align*}
where the last inequality follows from Jensen's inequality. By symmetry the same inequality holds for $I_1$. Therefore
\begin{align*}
& \mathbb{E}(\text{the number of queries in }J_j\rvert \mathcal{E}_{j,y})\\
\geq & \mathbb{P}\{X^*\in I_0\rvert \mathcal{E}_{j,y}\}\mathbb{E}(\text{the number of queries in }J_j\rvert X^*\in I_0,Y=y)\\
& + \mathbb{P}\{X^*\in I_1\rvert \mathcal{E}_{j,y}\}\mathbb{E}(\text{the number of queries in }J_j\rvert X^*\in I_1,Y=y)\\
\geq & \frac{2|I_0|}{|J_j|}\cdot\frac{1}{c_1(p)}\left(-\log \mathbb{P}\left\{|\widehat{X}-X^*|>\frac{\epsilon}{2}\rvert \mathcal{E}_{j,y}\right\} + \log\frac{|I_0|}{2|J_j|}\right).
\end{align*}
Recall that $|J_j|=\delta/2$ and $|I_0|=|J_j|/2-\epsilon/2$. We have $|I_0|/|J_j|\geq 1/4$ from the assumption $\delta\geq 4\epsilon$. Combine the inequality above with~\eqref{eq:prop.sample.3} to obtain
\begin{align*}
n\geq & -\frac{L}{2c_1(p)}\cdot\frac{\delta}{2}\sum_{j\leq 2/\delta}\int_0^1\log\left(8\mathbb{P}\{|\widehat{X}-X^*|>\epsilon/2\rvert \mathcal{E}_{j,y}\}\right)\\
\geq & -\frac{L}{2c_1(p)}\log\left( \frac{\delta}{2}\sum_{j\leq 2/\delta}\int_0^1 8\mathbb{P}\{|\widehat{X}-X^*|>\epsilon/2\rvert \mathcal{E}_{j,y}\}dy\right)\\
= & -\frac{L}{2c_1(p)}\log\left(8\mathbb{P}\{|\widehat{X}-X^*|>\epsilon/2\}\right) \geq \frac{L}{2c_1(p)}\log\frac{M}{8},
\end{align*}
where the second inequality is due to Jensen's inequality and the last inequality follows from the definition of $(\epsilon,M)$-accuracy.
\end{proof}

\end{appendices}

\bibliographystyle{plain}
\bibliography{bibliography}

\end{document}